%% file: main.tex
\documentclass{article}

\usepackage{iclr2027_conference,times}
\iclrfinalcopy

\input{math_commands.tex}

\usepackage{amsmath}
\usepackage{amssymb}
\usepackage{amsthm}
\usepackage{booktabs}
\usepackage{graphicx}
\usepackage{float}
\usepackage{afterpage}
\usepackage{microtype}
\usepackage{multirow}
\usepackage{pifont}
\usepackage{tabularx}
\usepackage{tikz}
\usepackage{xcolor}
\usepackage{colortbl}
\usepackage{xspace}
\usepackage{url}
\usepackage{placeins}
\usepackage{etoolbox}
\usepackage{hyperref}
\hypersetup{hidelinks,
  pdftitle={Waggle: Learning One Anonymous Local Law for Self-Organizing LLM Swarms},
  pdfauthor={Mingxi Zou, Wei Zhu, Zhuo Wang, Langzhang Liang, Zhiwen Tang, Yinghui Xu, Zenglin Xu}}

\AtBeginEnvironment{table}{\setlength{\belowcaptionskip}{4pt}}
\AtBeginEnvironment{table*}{\setlength{\belowcaptionskip}{4pt}}
\makeatletter
\renewcommand\paragraph{\@startsection{paragraph}{4}{\z@}%
  {1.05ex plus .3ex minus .2ex}{-1em}{\normalsize\bf}}
\makeatother

\newcommand{\method}{\textup{Waggle}\xspace}
\newcommand{\scd}{\textsc{SCD}\xspace}
\newcommand{\silo}{\textsc{Silo-Bench}\xspace}
\newcommand{\swarmbench}{\textsc{SwarmBench}\xspace}
\newcommand{\nativemsg}{\textsc{NativeMsg}\xspace}
\newcommand{\fixedswarm}{\textsc{FixedSwarm}\xspace}
\newcommand{\nocomm}{\textsc{NoComm}\xspace}
\newcommand{\gptswarm}{\textsc{GPTSwarm}\xspace}
\newcommand{\agentnet}{\textsc{AgentNet}\xspace}
\newcommand{\agentdistill}{\textsc{AgentDistill}\xspace}
\newcommand{\gdesigner}{\textsc{G-Designer}\xspace}
\newcommand{\argdesigner}{\textup{ARG-Designer}\xspace}
\newcommand{\dmoa}{\textup{DMoA}\xspace}
\newcommand{\agentsnetrp}{\textsc{AgentsNet-RP}\xspace}
\newcommand{\swarmsys}{\textsc{SwarmSys}\xspace}

\newcommand{\tracevec}{\boldsymbol{\phi}}
\newcommand{\view}{\ell}

\definecolor{rankfirstbg}{RGB}{226,224,238}
\definecolor{ranksecondbg}{RGB}{250,234,218}
\definecolor{tableheaderbg}{RGB}{243,246,249}
\definecolor{tablerefbg}{RGB}{244,244,244}
\newcommand{\bestcell}[1]{\cellcolor{rankfirstbg}\bfseries\boldmath #1}
\newcommand{\secondcell}[1]{\cellcolor{ranksecondbg}\underline{#1}}
\newcommand{\bestnum}[1]{\begingroup\setlength{\fboxsep}{1pt}\colorbox{rankfirstbg}{\strut\bfseries #1}\endgroup}
\newcommand{\secondnum}[1]{\begingroup\setlength{\fboxsep}{1pt}\colorbox{ranksecondbg}{\strut\underline{#1}}\endgroup}
\newcommand{\bestsrmsg}[2]{\cellcolor{rankfirstbg}\textbf{#1}/#2}
\newcommand{\secondsrmsg}[2]{\cellcolor{ranksecondbg}\underline{#1}/#2}
\newtheorem{proposition}{Proposition}
\newtheorem{corollary}{Corollary}

\title{\method: Learning One Anonymous Local Law for Self-Organizing LLM Swarms}

\author{%
\textbf{Mingxi Zou}$^{1,2}$ \quad
\textbf{Wei Zhu}$^{1,3}$ \quad
\textbf{Zhuo Wang}$^{2}$ \quad
\textbf{Langzhang Liang}$^{1,2}$ \\
\textbf{Zhiwen Tang}$^{3}$ \quad
\textbf{Yinghui Xu}$^{2,\ast}$ \quad
\textbf{Zenglin Xu}$^{1,2,\ast}$ \\
$^{1}$Shanghai Academy of AI for Science (SAIS) \\
$^{2}$AI$^{3}$ Institute, Fudan University \quad
$^{3}$Yunnan University \\
\texttt{mxzou24@m.fudan.edu.cn} \\
$^{\ast}$Corresponding authors
}

\begin{document}

\maketitle
\lhead{Preprint}

\begin{abstract}
As LLM agents increasingly collaborate on complex tasks, how to organize their
interactions becomes a central design question. Existing multi-agent systems
typically learn or adapt explicit roles, hierarchies, routing policies, or
communication topologies. We shift the learning target to a reusable local law
that can be shared across interchangeable agents and adapt coordination as
populations or interaction conditions change, without redefining a global
organization.
We introduce \textbf{\method}, a shared anonymous policy over bounded local
views that jointly selects task actions, semantic communication, and local
commitment updates. Repeated execution of the same law allows coordination to
form, persist, and reorganize online without explicit roles or global topology.
To learn this law across interchangeable agents and evolving coordination,
we develop \textbf{Swarm-Consistent Distillation (\scd)}, combining
anonymous-orbit consistency with rollout-grounded prediction of the next local
coordination field, with no added inference-time components.
Across diverse coordination settings, the same learned law remains effective
as populations and interaction budgets change, retains over 96\% of
substrate-specific oracle quality, and transfers without retraining; \scd
further improves reorganization after counterevidence. Together, these results
show that LLM-agent organization can emerge and adapt through repeated
execution of a learned local law.
\end{abstract}

\section{Introduction}
\label{sec:intro}
\setlength{\parskip}{4.5pt plus .5pt minus .5pt}

LLM agents can reason, use tools, and revise plans from feedback, enabling
flexible collaborative systems
\citep{yao2023react}.  Multi-agent systems organize these capabilities through
roles, managers, routes, and conversation graphs
\citep{li2023camel,hong2024metagpt,qian2024chatdev,wu2023autogen,
zhuge2024language}.
Recent methods further design task-dependent communication graphs, jointly
generate agent compositions and topologies, or adapt connectivity and agent
activation online
\citep{zhang2025gdesigner,li2026assemblecrew,yang2025agentnet,wu2026dmoa}.
Despite this flexibility, the learned object remains an explicit
organization---the graph, routing policy, or agent composition.

Swarm intelligence suggests a different organizing principle: persistent yet
revisable population-level coordination can emerge from repeated local
reactions to local signals, without requiring a population-wide plan
\citep{bonabeau1999swarm,theraulaz1999stigmergy,seeley2012stop}. Related
decentralized and swarm-inspired learning systems provide computational
precedents for this local-rule view
\citep{huettenrauch2019deep,shaw2022formic,yang2025agentnet,li2025swarmsys}.
Inspired by this local-rule view, we shift the learning target from explicit
organization to the reusable local law that generates and revises it. This
law can be shared across interchangeable agents and reused as populations or
interaction conditions change.

Yet effective local laws require more than individually reasonable actions.
Proposals based on private observations can become
redundant, conflicting, or stale when composed across a population. This
failure is pronounced in \silo: at $N=64$, transient-exchange and fixed-rule
baselines reach only about 15\% success, compared with 58.2\% for our method
(Table~\ref{tab:g3-cross-substrate-core}), and additional interaction alone
does not close the gap (Figure~\ref{fig:horizon-budget-sweep}). The bottleneck
is therefore not merely local competence or message availability, but how a
shared local rule composes proposals into evolving collective state.

We therefore treat proposals as inputs to coordination and learn the shared
policy that determines each local decision. This leads to our central
question: \textbf{can a shared anonymous local law generate, maintain, and
revise organization online through repeated execution?}
Figure~\ref{fig:organization-vs-organizing-law} contrasts learning an explicit
organization with learning the local law that generates it.

\begin{figure*}[t!]
\centering
\includegraphics[width=\linewidth]{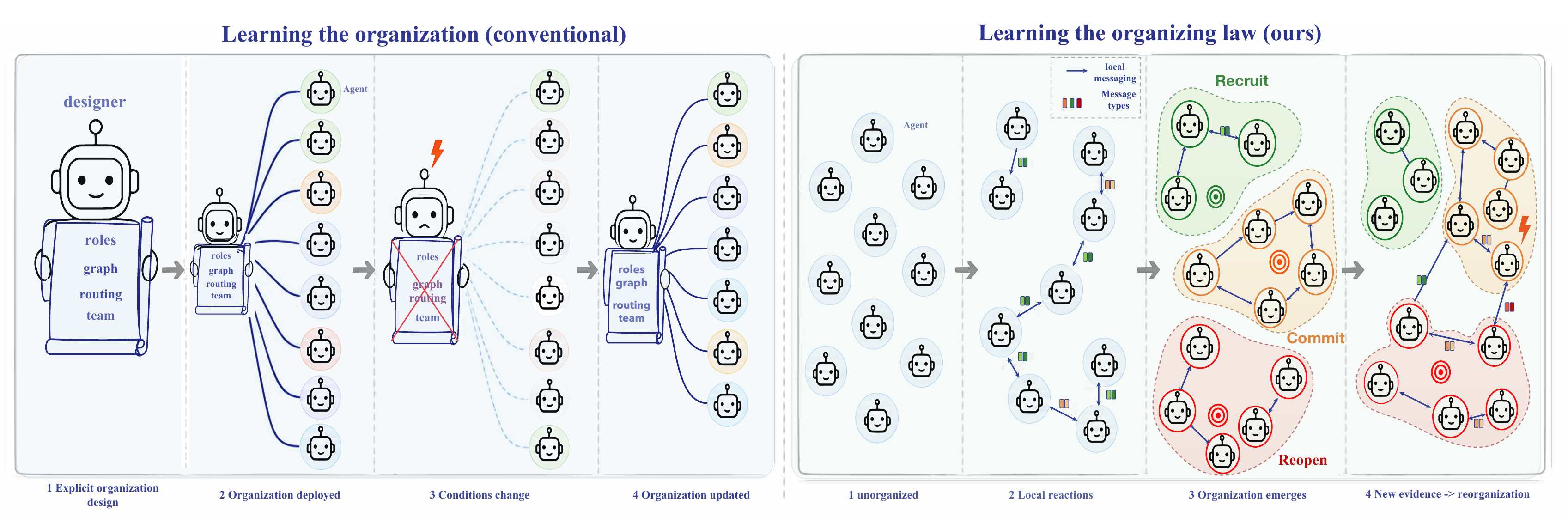}
\caption{\textbf{Learning the organization versus the organizing law.}}
\label{fig:organization-vs-organizing-law}
\end{figure*}

We introduce \method, which instantiates this learning object as one anonymous,
parameter-shared policy over local views.  Each local decision jointly chooses
a task action and a bounded semantic update that can create or revise
persistent local coordination state.
We train the law with \emph{Swarm-Consistent Distillation} (\scd). Beyond
imitating teacher decisions, \scd encourages the policy to remain semantically
consistent when opaque identifiers or neighbor orderings change, while learning
representations that predict how the local coordination state evolves from one
update to the next.  Figure~\ref{fig:overview}
summarizes the resulting training-to-deployment pipeline.  This consistency
carries over to the swarm level: if local views and runtime updates treat
relabelings consistently, renaming agents or local references changes only
the labels, not the resulting coordination behavior.  Across spatial and
distributed-language coordination settings, the
same learned law remains effective as population size and interaction budget
change and adapts more reliably when later evidence contradicts an established
commitment.

\setlength{\parskip}{3.5pt plus .5pt minus .5pt}
Our contributions are:\par
\begin{list}{\labelitemi}{%
    \setlength{\leftmargin}{2em}%
    \setlength{\labelwidth}{0.55em}%
    \setlength{\labelsep}{0.35em}%
    \setlength{\itemindent}{0pt}%
    \setlength{\listparindent}{0pt}%
    \setlength{\topsep}{0pt}%
    \setlength{\itemsep}{3pt}%
    \setlength{\parsep}{0pt}%
    \setlength{\partopsep}{0pt}%
}
    \item \textbf{Conceptually,} we reformulate the learning target of
    LLM-agent organization from explicit organizational structures to the
    local law that generates them: \method learns one shared anonymous local
    reaction law whose repeated execution generates and revises organization
    online.
    \item \textbf{Methodologically,} we develop \emph{Swarm-Consistent
    Distillation} (\scd), which trains this law toward consistent decisions
    across anonymous local views and representations predictive of evolving
    coordination state, with no additional inference-time machinery.
    \item \textbf{Empirically,} we show that one learned law supports spatial
    and language-based coordination, remains effective as swarm conditions
    change, and reorganizes after counterevidence.
\end{list}

\afterpage{%
\begin{figure*}[t!]
\centering
\includegraphics[width=\linewidth]{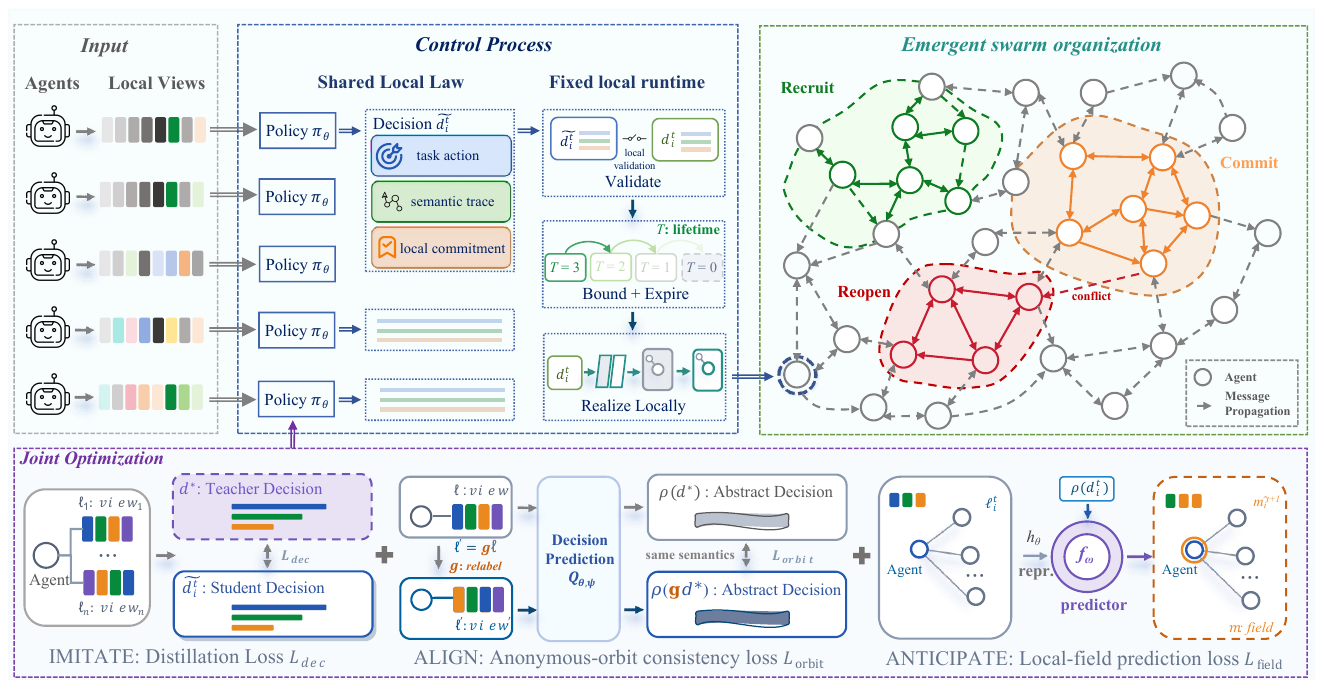}
\caption{\textbf{Overview of \method.}  \scd trains one shared anonymous
local law; each agent deploys it independently through a fixed runtime, and
repeated local reactions form and revise swarm organization.}
\label{fig:overview}
\end{figure*}
}

\section{Related Work}
\label{sec:related}
\setlength{\parskip}{2.5pt plus .5pt minus .5pt}

\paragraph{LLM multi-agent organization.}
LLM multi-agent systems encode organization through roles, identities,
protocols, or programmable interaction patterns
\citep{li2023camel,hong2024metagpt,qian2024chatdev,wu2023autogen}.
DyLAN and AgentVerse adapt team composition and collaboration
\citep{liu2024dynamic,chen2024agentverse}.
Other methods optimize workflows or communication topology, or jointly
generate task-conditioned agent compositions and graphs, as in \argdesigner{}
\citep{zhang2025aflow,zhuge2024language,
zhang2025agentprune,zhang2025gdesigner,li2026assemblecrew}.
Others route or activate specialists through adaptive connectivity,
publish--subscribe structures, or specialized agent classes
\citep{wu2026dmoa,yang2025agentnet,li2026raps,li2025swarmsys}.
LLawCo derives natural-language cooperation principles from failures for
embodied planners, while multi-agent distillation compresses interaction
graphs or dynamics into a student
\citep{zhou2026llawco,chen2024magdi,luo2026agentark}.
Agent Distillation trains smaller agents on teacher-generated interaction
trajectories \citep{kang2025distilling}.
Across these approaches, organization remains represented, adapted, or
compressed through system-level structures or agents; \method instead learns
one shared anonymous local law whose repeated execution generates and revises
organization online.

\paragraph{Learned communication and swarm coordination.}
Learned-communication methods jointly optimize control with message content,
broadcast gates, or recipient selection
\citep{sukhbaatar2016commnet,foerster2016dial,singh2019ic3net,das2019tarmac},
while shared-policy swarm RL uses interchangeable controllers with local,
variable-size observations \citep{huettenrauch2019deep}.
LLM2Swarm explores LLM-based controller synthesis and per-robot reasoning,
whereas GenSwarm generates and deploys task-specific multi-robot code policies
from natural-language instructions
\citep{strobel2024llm2swarm,ji2026genswarm}.
Population protocols repeatedly apply a shared local transition rule among
anonymous finite-state agents, whereas stigmergic systems coordinate through
persistent local state, from ant-colony reinforcement and evaporation to
learned deposition and sensing in ForMIC
\citep{angluin2006computation,theraulaz1999stigmergy,
dorigo1997ant,shaw2022formic}.
Building on shared control, learned communication, local rules, and persistent
state, \method learns the organization-generating local law itself.

\paragraph{Symmetry-consistent learning.}
Anonymous-network theory characterizes when topology and local
information permit symmetry-sensitive tasks such as leader election
\citep{yamashita1996computing}.
Permutation structure can be built in through parameter sharing or set
architectures, or promoted through transformed examples and consistency
objectives; related work also regularizes task-equivalent prompts and models
transition regularities in world and local MARL models
\citep{ravanbakhsh2017equivariance,zaheer2017deep,hounie2023automatic,
zhou2022prompt,park2022symmetric,wu2023models}.
\scd brings these ideas into a deployed anonymous local policy by promoting
consistent decisions across equivalent local views and representations that
track evolving coordination state.

\par
\setlength{\parskip}{3.5pt plus .5pt minus .5pt}
\section{\method: The Deployed Anonymous Local Law}
\label{sec:problem}
\label{sec:method}

\method separates a learned semantic reaction law from a fixed local runtime.
We first define what each anonymous node observes and decides, then show how
repeated decisions generate reversible coordination, and finally state the
boundary between learned semantics and substrate realization.

\subsection{Anonymous local view}

In the primary synchronous runtime, $t$ indexes local-update rounds on a
contact graph $G_t=(V,E_t)$, with all active-node views formed from the same
pre-update state.  Node $i$ has private task
state and evidence $p_i^t$, a set $\mathcal{H}_i^t$ of currently incident
channel handles, and an optional local commitment $c_i^t$.  Handles are opaque,
sampled independently for each node from an episode-local pool, and fixed only
within that episode; they are routing references rather than designated agent
identities.  Every active node receives the same bounded interface,
\begin{equation}
\label{eq:view}
\view_i^t =
\operatorname{Canon}\!\left(
q,\;p_i^t,\;w_i^t,\;
\{\tracevec_{ih}^t\}_{h\in\mathcal{H}_i^t},\;
c_i^t,\;b_i^t
\right),
\end{equation}
where $q$ is the task contract, $w_i^t$ is an adapter-provided candidate native
action derived from private evidence $p_i^t$, and $b_i^t$ is the remaining
local budget.  The private proposal is an input to the joint local decision,
not a separate decision or language-model call.
$\operatorname{Canon}$ standardizes field structure rather than substrate
semantics: adapters place native task contracts, private observations, and
proposals in the common $q$, $p_i^t$, and $w_i^t$ slots, while incident
communication uses a shared bounded trace schema.  Native action names
accompany the same controller instruction (Appendix~\ref{app:interface}).
Serialization exposes neither persistent identities nor a global node ordering.

A policy is \emph{anonymous-local} when all nodes share $\theta$, every social
input arrives through an incident channel, and neither its view nor output
schema contains a designated identity, role, roster, population-size field,
non-neighbor state, or global transcript.  The interface enforces anonymity;
Section~\ref{sec:orbit-learning} trains consistency across equivalent views
obtained by permuting incident records and consistently relabeling opaque local
references.

\subsection{One joint local decision}
\label{sec:joint-decision}

Each node invokes the same controller once per local update,
\begin{equation}
\tilde d_i^t = \mathrm{Decode}_{\theta}(\view_i^t),
\qquad
d_i^t = \mathrm{Valid}_{s}(\tilde d_i^t,\view_i^t)
= (a_i^t,m_i^t,\Delta c_i^t),
\end{equation}
where $s$ denotes the substrate, $a_i^t$ is a task-level action,
$m_i^t$ is an optional bounded semantic
update addressed through incident handles, and $\Delta c_i^t$ is an optional
local commitment update.  The decoder proposes all three components jointly,
and the substrate validator admits only locally legal fields and trace
relations.  The decision type defines the available local
effects, while $\pi_\theta$ learns their state-dependent selection and semantic
content.  All three components are produced by one decode, so acting and
communicating are chosen jointly rather than by separate procedures.

The same decision type is used across substrates.  The validator checks local
legality, while the fixed runtime transports and expires admitted traces and
realizes task-level outputs in native mechanics.  The exact JSON serialization,
field bounds, and prompt appear in Appendix~\ref{app:interface}.

\subsection{From repeated decisions to reversible organization}

For claim $z$, a trace carried by incident handle $h\in\mathcal{H}_i^t$ has
bounded state
\begin{equation}
\tracevec_{ih}^t(z) =
[n_{ih}^t(z),\,s_{ih}^t(z),\,x_{ih}^t(z),\,T_{ih}^t(z)],
\end{equation}
where $n$, $s$, $x$, and $T$ encode novelty, support, conflict, and remaining
lifetime.  The controller conditions its next local decision on these bounded
traces.

Repeated local decisions can produce reversible coordination trajectories.
Depending on its current evidence, a node may write or relay a record, form a
commitment, or revise one after new conflict.  The runtime supplies locally
checkable readiness and finite expiry; the policy chooses the underlying
semantic write and commitment update.  These repeated reactions allow
coordination to form and change without persistent roles or a broadcast
transcript.  Recruitment, commitment formation, and reopening are post hoc
event labels rather than explicit controller modes.

\subsection{Runtime boundary and substrate realization}

The two primary substrates implement different native mechanics behind this
boundary.  \swarmbench realizes records as egocentric actions and contact-local
cue pulses, whereas \silo uses private-shard operations and local-port evidence
frames.  Readiness is similarly local: \swarmbench exposes a fixed
support-and-coherence predicate and \silo exposes native constructibility.
Fresh writes must match private evidence; relays must reference incident
evidence, cannot amplify its trace components, and lose at least one TTL step.
Detailed native realizations, duplicate suppression, expiry, and compiler
behavior appear in Appendices~\ref{app:substrates},
\ref{app:execution-evidence}, and~\ref{app:protocol}.

This contract defines what the local law can observe and change; the remaining
question is how to learn a law that is stable across anonymous views and
sensitive to evolving coordination state.

\section{Learning the Law with Swarm-Consistent Distillation}
\label{sec:learning}

Decision distillation teaches the policy to emit valid local records, but
leaves two swarm-specific ambiguities unresolved.  First, equivalent anonymous
views may induce different semantic decisions when incident order and opaque
names change.  Second, imitation alone does not require the controller
representation to capture how the local coordination field evolves during
collective execution.  \emph{Swarm-Consistent Distillation} (\scd) addresses
these ambiguities with consistency across anonymous views and rollout-grounded
next-field prediction.

\subsection{Decision distillation}

Let $\mathcal{D}_{\mathrm{dec}}=\{(\view_a,d_a^\star)\}_{a=1}^{A}$ denote
anonymous local views paired with schema-valid teacher decisions.  The teacher
observes exactly one local view and returns the joint decision defined in
Section~\ref{sec:joint-decision}.  Using assistant-record likelihood only, the decision
objective is
\begin{equation}
\label{eq:decision-loss}
\mathcal{L}_{\mathrm{dec}}
= \mathbb{E}_{(\view,d^\star)\sim\mathcal{D}_{\mathrm{dec}}}
\big[-\log p_\theta(d^\star\mid\view)\big].
\end{equation}
It teaches the controller to map private evidence and incident traces to a
legal task action, semantic write, and commitment decision.  The token-level
expansion and corpus construction appear in
Appendix~\ref{app:decision-details}.

\subsection{Consistency across anonymous views}
\label{sec:orbit-learning}

For a local view $\view$, let $\mathcal{G}(\view)$ be the family generated by
incident-record permutations and view-local bijections over opaque channel and
claim references.  For each training pair, we sample
$g\sim\mathcal{G}(\view)$ and construct $(g\view,gd^\star)$ by applying $g$
consistently to the view and record, while non-identifier task content remains
unchanged.  Thus the pair encodes one anonymous local situation under two
surface representations.  Let $\rho(d)$ be an identifier-free summary of the
task action, communication act, trace attributes, and commitment; by
construction, $\rho(gd)=\rho(d)$.

Orbit supervision has two complementary roles.  Transformed token supervision
preserves the correct opaque references in the emitted record, while
identifier-free alignment requires the two views to represent the same
semantic decision.  A factorized training head $Q_{\theta,\psi}$ predicts
$\rho(d)$ from the controller representation, giving the compact objective
\begin{equation}
\label{eq:orbit-loss}
\begin{aligned}
\mathcal{L}_{\mathrm{orbit}}
= \mathbb{E}_{(\view,d^\star),g}\Big[&-\log p_\theta(gd^\star\mid g\view)
+\mathcal{L}_{\mathrm{sup}}\!\left(
Q_{\theta,\psi}(\view),Q_{\theta,\psi}(g\view);\rho(d^\star)\right)\\
&+\beta D_{\mathrm{SKL}}\!\left(
Q_{\theta,\psi}(\view),Q_{\theta,\psi}(g\view)\right)\Big].
\end{aligned}
\end{equation}
Here $\mathcal{L}_{\mathrm{sup}}$ is factorized supervised cross-entropy on
both views and $D_{\mathrm{SKL}}$ is symmetric KL divergence.  Supervision
prevents content-free agreement, while transformation-consistent augmentation
preserves executable references \citep{hounie2023automatic,zhou2022prompt};
Appendix~\ref{app:orbit-details} gives the complete factorization.

\paragraph{Exact local equivariance lifts to the swarm level under an
equivariant runtime.}
Let a legal $\Gamma$ have node-permutation component $\sigma$ and consistently
transport every schema-designated opaque reference.  It induces
$g_{\Gamma,i}^t$ on local views and records, including incident-order
permutations, with
$(\Gamma D^t)_{\sigma(i)}=g_{\Gamma,i}^t d_i^t$; below we suppress the
$\Gamma,t$ indices on $g$.  Write $F$ for one synchronous update and let
$\pi_\theta^{\mathrm{dep}}$ denote the complete local decode-and-admit pipeline,
including permitted repair and conservative projection.

\begin{proposition}[Conditional local-to-global relabeling equivariance]
\label{prop:trajectory-equivariance}
Suppose every node uses the same $\pi_\theta^{\mathrm{dep}}$.  Suppose also that
view construction, $\pi_\theta^{\mathrm{dep}}$, and $F$ commute with every legal relabeling:
\[
\resizebox{0.98\linewidth}{!}{$\displaystyle
\operatorname{View}_{\sigma(i)}(\Gamma S)=g_i\operatorname{View}_i(S),\qquad
\pi_\theta^{\mathrm{dep}}(g_i\view)=g_i\pi_\theta^{\mathrm{dep}}(\view),\qquad
F(\Gamma S,\Gamma D;\Gamma\xi)=\Gamma F(S,D;\xi).$}
\]
If $S_\Gamma^0=\Gamma S^0$ and exogenous randomness is coupled by the same
relabeling, then $(S_\Gamma^t,D_\Gamma^t)=(\Gamma S^t,\Gamma D^t)$ for every
$t$.  Consequently every trajectory functional $J$ satisfying
$J(\Gamma\tau)=J(\tau)$ is unchanged.
\end{proposition}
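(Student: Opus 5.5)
The argument is an induction on $t$, where the induction hypothesis is the state part of the claim, $S_\Gamma^t=\Gamma S^t$. The decision part then follows inside each round. Before starting, I fix what ``coupled by the same relabeling'' means: the exogenous randomness in round $t$ of the relabeled run is $\xi_\Gamma^t=\Gamma\xi^t$. If $\pi_\theta^{\mathrm{dep}}$ is itself stochastic (sampling, repair randomness), I also couple its per-node randomness, so that node $\sigma(i)$ in the relabeled run uses the same draw as node $i$ in the original run. With that coupling, the hypothesis $\pi_\theta^{\mathrm{dep}}(g_i\view)=g_i\pi_\theta^{\mathrm{dep}}(\view)$ is read as a pathwise identity. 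This matters because otherwise the conclusion can only hold in distribution. I will remark that the distributional version follows by the same argument, since equality of laws holds at every round.

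\emph{Base case.} $S_\Gamma^0=\Gamma S^0$ holds by assumption.

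\emph{Decisions at round $t$.} Assume $S_\Gamma^t=\Gamma S^t$.
\begin{enumerate}
\item For every node $i$, view commutation gives $\operatorname{View}_{\sigma(i)}(S_\Gamma^t)=\operatorname{View}_{\sigma(i)}(\Gamma S^t)=g_i\,\view_i^t$.
\item All nodes share the same $\pi_\theta^{\mathrm{dep}}$, so node $\sigma(i)$ applies that same map. Policy commutation then gives $(D_\Gamma^t)_{\sigma(i)}=\pi_\theta^{\mathrm{dep}}(g_i\view_i^t)=g_i\,d_i^t$.
\item The map $i\mapsto\sigma(i)$ is a bijection on $V$, so this fixes every coordinate of $D_\Gamma^t$. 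By the defining identity $(\Gamma D^t)_{\sigma(i)}=g_i d_i^t$, we get $D_\Gamma^t=\Gamma D^t$.
\end{enumerate}
Parameter sharing is essential in step 2: with node-specific parameters, the node at position $\sigma(i)$ would apply a different map, and commutation would fail.

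\emph{State at round $t+1$.} Runtime commutation gives $S_\Gamma^{t+1}=F(\Gamma S^t,\Gamma D^t;\Gamma\xi^t)=\Gamma F(S^t,D^t;\xi^t)=\Gamma S^{t+1}$. This closes the induction.

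\emph{Trajectory functionals.} Define $\Gamma\tau$ by acting componentwise on the trajectory $\tau=(S^t,D^t)_t$. The previous steps give $\tau_\Gamma=\Gamma\tau$, so $J(\tau_\Gamma)=J(\Gamma\tau)=J(\tau)$. If the expected value of $J$ is what is meant, it is unchanged as well, because the coupling is measure-preserving: relabeling the randomness does not change its law.

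The main obstacle is bookkeeping rather than any estimate. The induced local maps $g_{\Gamma,i}^t$ depend on $t$ and $i$, because incident handle sets and orderings change as $G_t$ evolves. So the $g_i$ used in view commutation at round $t$ has to be the same map that defines $(\Gamma D^t)_{\sigma(i)}$. I would handle this by defining $g_{\Gamma,i}^t$ from $(\Gamma, S^t)$ and noting that it is well defined on the relabeled run, precisely because the induction hypothesis makes that run's state equal to $\Gamma S^t$. The other subtle point is the coupling of any policy randomness discussed at the start.
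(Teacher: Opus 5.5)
Your proof is correct and follows the paper's argument essentially step for step: you induct on $S_\Gamma^t=\Gamma S^t$, you use view and local-map equivariance together with parameter sharing to get $D_\Gamma^t=\Gamma D^t$ coordinatewise through the bijection $\sigma$, runtime equivariance closes the step, and the pathwise identity $\tau_\Gamma=\Gamma\tau$ gives invariance of $J$. One small caution on your expectation aside: pathwise equality already gives equal expectations under the coupling, while ``relabeling the randomness does not change its law'' is an extra assumption rather than automatic (the paper's distributional remark requires an equivariant conditional law for the draws and for stochastic decoding).
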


Appendices~\ref{app:synchronous-equivariance}--\ref{app:event-driven-equivariance}
give the proof and its event-driven
extension under equivariant scheduling.

\begin{corollary}[Symmetry preservation]
\label{cor:symmetry-preservation}
Under Proposition~\ref{prop:trajectory-equivariance}, consider a deterministic
execution and a nonidentity legal relabeling $\Gamma$ satisfying
$\Gamma S^0=S^0$ and $\Gamma\xi^t=\xi^t$.  Then
$\Gamma S^t=S^t$ and $\Gamma D^t=D^t$ for every $t$; hence a uniquely selected
node must be fixed by $\Gamma$.
\end{corollary}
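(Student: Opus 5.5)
The plan is to specialize Proposition~\ref{prop:trajectory-equivariance} to the case where the relabeled initial state equals the original one, and then use determinism to identify the two trajectories.

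\textbf{Step 1: apply the proposition to the relabeled run.} Consider the run started from $S_\Gamma^0=\Gamma S^0$. Its exogenous randomness is coupled by the same relabeling, namely $\Gamma\xi^t$. By Proposition~\ref{prop:trajectory-equivariance},
\[
(S_\Gamma^t,D_\Gamma^t)=(\Gamma S^t,\Gamma D^t)\quad\text{for every } t.
\]

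\textbf{Step 2: show the relabeled run is the original run.} By hypothesis $\Gamma S^0=S^0$ and $\Gamma\xi^t=\xi^t$. So the relabeled run starts from $S^0$ and is driven by the same exogenous inputs $\xi^t$.

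The execution is deterministic: view construction, $\pi_\theta^{\mathrm{dep}}$, and $F$ are functions of state and exogenous input. Under that reading, induction on $t$ gives $(S_\Gamma^t,D_\Gamma^t)=(S^t,D^t)$. The base case is $S_\Gamma^0=S^0$. For the inductive step:
\begin{itemize}
\item $D^t$ is obtained by applying the shared pipeline to views of $S^t$, so equal states give equal decisions;
\item $S^{t+1}=F(S^t,D^t;\xi^t)$, so equal states, decisions, and inputs give equal next states.
\end{itemize}
Combining with Step 1 yields $\Gamma S^t=S^t$ and $\Gamma D^t=D^t$ for all $t$.

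\textbf{Step 3: fixed selected node.} Interpret ``a uniquely selected node'' through a trajectory-derived predicate. Node $i$ is selected when a property $P$ holds of its decision or state component at some time $t$, and exactly one node satisfies $P$.

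The relabeling acts on decisions componentwise: $(\Gamma D^t)_{\sigma(i)}=g_i d_i^t$. Any identifier-free selection property is invariant under $g_i$. For example, a commitment field in $\rho(d)$ satisfies $\rho(g d)=\rho(d)$. Hence node $\sigma(i)$ also satisfies $P$ in $\Gamma D^t=D^t$. Uniqueness then forces $\sigma(i)=i$, so the selected node is fixed by $\Gamma$.

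\textbf{Main obstacle.} The delicate point is formalizing ``selection'' so that it is relabeling-invariant, i.e.\ depends only on identifier-free content and not on opaque handles. I would state it as a predicate on $\rho(d_i^t)$, or more generally on $(S^t,D^t)$ restricted to $i$, that commutes with $g_i$. A secondary subtlety is that ``deterministic'' must include the repair and projection steps of $\pi_\theta^{\mathrm{dep}}$. I would make this explicit as an assumption, since otherwise Step 2's identification of the two runs fails.
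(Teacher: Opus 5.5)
Your proposal is correct and follows the paper's own proof. The paper likewise notes that the relabeled and original deterministic executions receive the same initial state and exogenous inputs, so they coincide. Proposition~\ref{prop:trajectory-equivariance} then identifies that common run with its $\Gamma$-image, giving $\Gamma S^t=S^t$ and $\Gamma D^t=D^t$.

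Your Step~3 spells out what the paper leaves implicit. ``Selection'' must be a relabeling-equivariant notion, for example a predicate invariant under $g_i$, so that uniqueness yields $\sigma(i)=i$.
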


Thus anonymity preserves genuine symmetries until task state, topology,
interaction history, or episode-local randomness distinguishes the
alternatives.  \scd promotes the proposition's local premise, while the
\agentsnetrp diagnostic in Appendix~\ref{app:agentsnet-rp} tests the
corollary's symmetry-breaking prediction.

\subsection{Anticipating the local coordination field}
\label{sec:field-learning}

Anonymous consistency constrains how a decision is represented but does not
directly teach how its local coordination field evolves.  After fitting a
decision-supervised warm start, we execute complete training-seed episodes
through the fixed runtime and collect genuine adjacent transitions
$\mathcal{D}_{\mathrm{tr}}=\{(\view_i^t,d_i^t,\view_i^{t+1})\}$, where
$d_i^t$ is the admitted focal decision and $\view_i^{t+1}$ is the next view of
the same node after the synchronous update.  The bounded, identifier-free
target is
\begin{equation}
\label{eq:field-summary}
m(\view)=
[\operatorname{Hist}_{n}(\view),\operatorname{Hist}_{s}(\view),
 \operatorname{Hist}_{x}(\view),\operatorname{Hist}_{T}(\view),
 r(\view),\kappa(\view)],
\end{equation}
where the four histograms summarize incident novelty, support, conflict, and
remaining lifetime, while $r$ and $\kappa$ denote local readiness and commitment.
Writing $h_\theta(\view_i^t)$ for the controller's final current-view hidden
state, a training-only head $f_\omega$ receives this representation together
with the admitted-decision summary $\rho(d_i^t)$ and predicts
$\hat m_i^{t+1}=f_\omega(h_\theta(\view_i^t),\rho(d_i^t))$.  We group
the prediction loss by target family,
\begin{equation}
\label{eq:field-loss}
\mathcal{L}_{\mathrm{field}}
=\mathbb{E}_{\mathcal{D}_{\mathrm{tr}}}
\left[\mathcal{L}_{\mathrm{hist}}(\hat m_i^{t+1},m(\view_i^{t+1}))
+\lambda_{\mathrm{state}}\mathcal{L}_{\mathrm{state}}
(\hat m_i^{t+1},m(\view_i^{t+1}))\right].
\end{equation}
$\mathcal{L}_{\mathrm{hist}}$ averages cross-entropy over the four trace
histograms, whereas $\mathcal{L}_{\mathrm{state}}$ averages over readiness and
commitment.  We use equal group weight, $\lambda_{\mathrm{state}}=1$, so the
four histogram targets do not dominate the two state targets by count alone;
the componentwise expression appears in Appendix~\ref{app:field-details}.

Because neighboring nodes act simultaneously and the substrate and runtime
also determine transport, expiry, and commitment updates, this objective
implements rollout-conditioned next-field prediction rather than an isolated
causal model of the focal decision.  The objective therefore encourages the
shared representation to retain local regularities involving support
accumulation, readiness, commitment,
conflict-triggered reopening, and
expiry---local changes relevant to subsequent reorganization---rather than
reconstruct a global world state
\citep{park2022symmetric,wu2023models}.

\subsection{Joint training and deployment}

The complete objective is
\begin{equation}
\label{eq:scd}
\mathcal{L}_{\mathrm{SCD}}(\theta,\psi,\omega)
:=\mathcal{L}_{\mathrm{dec}}(\theta)
+\lambda_{\mathrm{orbit}}\mathcal{L}_{\mathrm{orbit}}(\theta,\psi)
+\lambda_{\mathrm{field}}\mathcal{L}_{\mathrm{field}}(\theta,\omega).
\end{equation}
Here, $\theta$ parameterizes the shared local law $\pi_\theta$ retained at
deployment, whereas $\psi$ and $\omega$ parameterize the training-only orbit
and field heads, respectively.  Training first obtains a decision-supervised
warm start, uses that policy to collect a fixed corpus of rollout transitions,
and then jointly updates all three parameter sets with interleaved decision
and transition minibatches.  Both auxiliary heads are discarded afterward;
deployment retains only $\pi_\theta$, so SCD adds no inference-time
components.  Appendix~\ref{app:training} specifies the model, data, complete
losses, and optimization schedule.

\section{Experiments}
\label{sec:experimental-design}
\label{sec:results}

We organize the evaluation from capability to mechanism.  Scaling, specialist
retention, and frozen transfer establish what one law can do; structural and
late-counterevidence diagnostics connect learning to behavior, while
orthogonal controls separate policy and runtime.

\subsection{Experimental Setup and Baselines}

Our primary matrix evaluates one jointly trained law at
$N\in\{16,32,64\}$ on two complementary local-interaction substrates, with all
evaluation cases and seeds held out from training.  \swarmbench evaluates
spatial coordination with contact-local communication
\citep{ruan2025swarmbench}; \silo integrates private language shards through
sparse local ports \citep{zhang2026silobench}.

We compare three interface-matched controls: \nocomm suppresses explicit
inter-agent messages \citep{huettenrauch2019deep}; \nativemsg uses transient
task-native messages over matched contacts, without a persistent coordination
field or quorum
\citep{sukhbaatar2016commnet,foerster2016dial,zhang2026silobench}; and
\fixedswarm replaces learning with a shared hand-coded local rule over
finite-lifetime signals \citep{angluin2006computation,shaw2022formic}.
Broader comparisons include \gptswarm, \gdesigner, \argdesigner, \agentnet,
\dmoa, \swarmsys, and Agent Distillation (\agentdistill)
\citep{zhuge2024language,zhang2025gdesigner,li2026assemblecrew,
yang2025agentnet,wu2026dmoa,li2025swarmsys,kang2025distilling}.  All methods share task instances,
populations, evaluation seeds, evaluators, and applicable interaction horizons.
Interface-matched controls also share \method's topology and private proposals;
broader baselines retain their native information-flow and private-computation
interfaces subject to the documented budget settings.
Appendix~\ref{app:baseline-implementations} details benchmark
integration and selection.

Tables~\ref{tab:g3-cross-substrate-core}--\ref{tab:agentsnet-transfer} report
point estimates; Appendix~\ref{app:paired-uncertainty} gives selected paired
95\% intervals, resampling matched evaluation units, plus training seeds for
independently trained controller comparisons.

\subsection{One Law Scales across Population and Interaction Budgets}
\label{sec:horizon-results}

Despite observing only bounded local views and no population size, the same
\method policy remains effective across all tested scales, achieving the
strongest overall profile in the primary cross-substrate matrix
(Table~\ref{tab:g3-cross-substrate-core}). Its advantage persists as populations
grow, including 58.2\% \silo success at $N=64$, while the distillation baseline
does not close the gap under comparable inference cost
(Appendix~\ref{app:inference-accounting}).
Appendices~\ref{app:anchors} (lower scales), \ref{app:population-summaries}
(additional tasks), and~\ref{app:silo-results} (\textsc{Silo} difficulty)
extend these results.

\begin{table}[t]
\centering
\caption{Cross-substrate scaling over five matched seeds. \silo reports
success (\%) / messages per agent; \swarmbench reports raw and task-normalized
scores. Lavender/peach mark first/second; paired intervals appear in
Appendix~\ref{app:paired-uncertainty}.}
\label{tab:g3-cross-substrate-core}
\scriptsize
\setlength{\tabcolsep}{2.2pt}
\renewcommand{\arraystretch}{1.02}
\resizebox{\textwidth}{!}{%
\begin{tabular}{@{}lccc@{\hspace{0.4em}}ccc@{\hspace{0.4em}}ccc@{\hspace{0.4em}}ccc@{\hspace{0.4em}}c@{}}
\toprule
& \multicolumn{6}{c}{\silo} & \multicolumn{6}{c}{\swarmbench}
& \multicolumn{1}{c}{Overall $\downarrow$} \\
\cmidrule(lr){2-7}\cmidrule(lr){8-13}\cmidrule(lr){14-14}
& \multicolumn{3}{c}{Population scaling: SR $\uparrow$ / msg. $\downarrow$}
& \multicolumn{3}{c}{$N=64$ SR by difficulty $\uparrow$}
& \multicolumn{3}{c}{Raw scores $\uparrow$: Pursuit / Foraging / Synchronization}
& \multicolumn{3}{c}{Task-normalized $\uparrow$}
& \shortstack{Avg.\\rank $\downarrow$} \\
\cmidrule(lr){2-4}\cmidrule(lr){5-7}\cmidrule(lr){8-10}\cmidrule(lr){11-13}
Method & $N=16$ & $N=32$ & $N=64$
& Level I & Level II & Level III
& $N=16$ & $N=32$ & $N=64$
& $N=16$ & $N=32$ & $N=64$ & \\
\midrule
\multicolumn{14}{l}{\scriptsize\color{black!62}\textit{Interface-matched controls}} \\[-1pt]
\nocomm & 10.7/0.00 & 9.8/0.00 & 11.7/0.00 & 20.0 & 10.1 & 5.0
& 1.71/\secondnum{1.31}/0.30 & 1.30/0.89/0.10 & 7.00/2.86/0.00
& 0.222 & 0.153 & 0.627 & 9.17 \\
\nativemsg & 74.7/9.00 & 67.0/9.00 & 14.3/9.00 & 25.2 & 13.0 & 4.7
& 1.50/1.03/3.40 & 0.80/0.53/2.00 & 0.99/0.82/0.80
& 0.348 & 0.194 & 0.167 & 8.46 \\
\fixedswarm & 77.6/9.00 & \secondsrmsg{69.7}{9.00} & 15.0/9.00
& 26.6 & 13.9 & 4.5
& 1.80/0.84/4.20 & 2.20/1.16/2.80 & 6.19/2.33/1.50
& 0.389 & 0.366 & 0.615 & 6.38 \\
\specialrule{0.35pt}{1.5pt}{0.6pt}
\multicolumn{14}{l}{\scriptsize\color{black!62}\textit{Broader baselines}} \\[-1pt]
\gptswarm & 12.0/4.10 & 11.3/4.98 & 13.7/4.40 & 23.2 & 12.9 & 5.0
& 1.61/0.96/2.00 & 2.40/1.37/1.70 & \secondnum{8.21}/3.55/1.30
& 0.275 & 0.338 & 0.821 & 7.96 \\
\gdesigner & 14.0/3.80 & 13.6/4.30 & 16.0/4.00 & 27.5 & 15.3 & 5.2
& 1.70/1.02/2.40 & 2.28/1.42/2.00 & 8.01/3.65/1.50
& 0.306 & 0.351 & 0.829 & 7.00 \\
\argdesigner & 19.9/3.23 & 19.1/3.51 & 20.8/3.41 & 34.6 & 22.3 & 6.0
& 1.80/1.08/3.20 & 2.40/\secondnum{1.55}/2.60
& 8.15/\secondnum{3.85}/2.00
& 0.358 & 0.400 & \secondcell{0.880} & 4.71 \\
\agentnet & 25.7/6.61 & 16.3/6.66 & 14.7/6.68 & 24.8 & 14.1 & 5.2
& \secondnum{1.90}/0.99/2.90 & \secondnum{2.60}/1.32/2.20 & 5.70/2.23/1.60
& 0.340 & 0.370 & 0.585 & 6.88 \\
\dmoa & 19.0/3.40 & 18.8/3.70 & 21.5/3.60 & 34.8 & 21.4 & 8.3
& 1.82/1.08/3.30 & 2.33/1.49/2.70 & 8.11/3.80/2.10
& 0.364 & 0.397 & 0.879 & 4.79 \\
\swarmsys & 61.7/9.00 & 44.3/9.00 & 26.7/9.00
& 47.9 & 23.1 & 9.1
& \bestnum{2.00}/1.00/4.00 & 1.11/0.75/3.20
& 7.81/3.59/2.40
& 0.404 & 0.292 & 0.860 & 5.25 \\
\agentdistill & \secondsrmsg{79.0}{8.20} & 69.0/8.35 & \secondsrmsg{43.7}{8.50}
& \secondcell{71.1} & \secondcell{48.7} & \secondcell{11.3}
& 1.75/1.20/\secondnum{4.60}
& 2.20/1.30/\secondnum{3.70}
& 7.35/3.50/\secondnum{3.10}
& \secondcell{0.439} & \secondcell{0.425} & 0.864 & \secondcell{4.29} \\
\midrule
\textbf{\method} & \bestsrmsg{86.3}{7.66} & \bestsrmsg{79.0}{7.72} & \bestsrmsg{58.2}{8.03}
& \bestcell{87.6} & \bestcell{69.9} & \bestcell{17.1}
& \secondnum{1.90}/\bestnum{1.55}/\bestnum{5.80}
& \bestnum{2.64}/\bestnum{1.64}/\bestnum{5.00}
& \bestnum{8.23}/\bestnum{4.33}/\bestnum{4.20}
& \bestcell{0.540} & \bestcell{0.546} & \bestcell{1.041} & \bestcell{1.12} \\
\bottomrule
\end{tabular}
}
\end{table}

We further test robustness to changing deployment conditions along two axes.
Without retraining or adaptation, the frozen law scales to $N=128$, retaining
82.9\% of its matched $N=64$ \silo success while keeping per-agent communication
bounded (Appendix~\ref{app:n128-population-extension}). The interaction-budget
sweep shows a similar pattern: \method reaches strong performance with
substantially fewer interaction rounds than competing methods
(Figure~\ref{fig:horizon-budget-sweep}). Together, these results indicate that
the learned local law is not tied to a fixed population size or interaction
horizon; the same local reactions continue to compose as both change.

\subsection{One Shared Law Retains Specialist Quality and Transfers}
\label{sec:joint-specialist-results}

Having established composition across population and interaction budgets, we
next test two forms of reuse: preserving specialist quality within the
training substrates and transferring the frozen law to a new substrate.  Under
matched in-domain exposure, the joint law slightly exceeds the two specialists
on both substrates.  Even when each oracle specialist receives the entire
joint data budget in its own domain, the shared law retains 96.8\% and 97.6\%
of oracle quality on \swarmbench and \silo, respectively, while using one
adapter and half the aggregate labeled data
(Figure~\ref{fig:horizon-budget-sweep}(d);
Appendix~\ref{app:joint-specialists}).

\begin{figure*}[t]
\centering
\includegraphics[width=\textwidth,trim=0 5pt 0 0,clip]{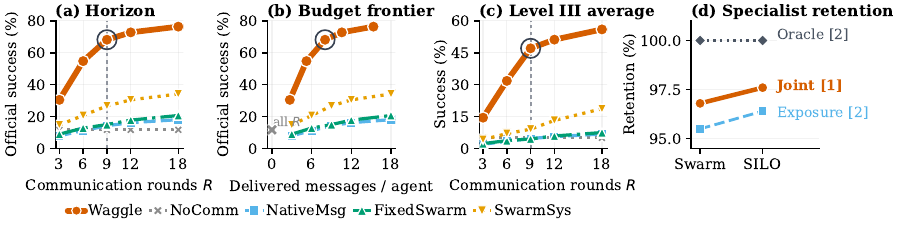}
\vspace{-5pt}
\setlength{\abovecaptionskip}{0pt}
\caption{\textbf{Communication efficiency and specialist retention at $N=64$.}
(a,c) Success across communication rounds; (b) success versus delivered
messages per agent; (d) oracle-relative retention. Outlined markers indicate
the primary $R=9$ setting.}
\label{fig:horizon-budget-sweep}
\vspace{-4pt}
\end{figure*}

On frozen AgentsNet transfer, the law leads four of five tasks, attains the
best strict solved fraction at both sizes and the highest scale retention, and
uses fewer messages than every communicating comparator
(Table~\ref{tab:agentsnet-transfer}).  Episode-local priority helps mainly on
symmetry-obstructed LeaderElection graphs, consistent with
Corollary~\ref{cor:symmetry-preservation}; Appendix~\ref{app:agentsnet-rp}
gives the protocol and symmetry diagnostics.  Complementary private-worker
transfer and paired reuse results appear in Appendices~\ref{app:g5-api-worker}
and~\ref{app:g6}.

\begin{table*}[t]
\centering
\caption{Frozen \agentsnetrp transfer without parameter updates. Lavender/peach
mark first/second; paired 95\% intervals for the main comparisons are reported
in Appendix~\ref{app:paired-uncertainty}.}
\label{tab:agentsnet-transfer}
\fontsize{8}{8.4}\selectfont
\setlength{\tabcolsep}{0.85pt}
\renewcommand{\arraystretch}{1.00}
\begin{tabular}{@{}lccccc@{\hspace{2pt}}ccc@{\hspace{2pt}}c@{\hspace{2pt}}c@{\hspace{2pt}}c@{}}
\toprule
Method
& \multicolumn{5}{c}{Task-wise soft score $\uparrow$}
& \multicolumn{3}{c}{Strict solved fraction $\uparrow$}
& Scale ret. $\uparrow$
& Efficiency & Overall $\downarrow$ \\
\cmidrule(lr){2-6}\cmidrule(lr){7-9}\cmidrule(lr){10-10}\cmidrule(lr){11-11}\cmidrule(lr){12-12}
& Coloring & Consensus & \shortstack{Leader-\\Election} & Matching
& \shortstack{Vertex-\\Cover}
& $N=8$ & $N=16$ & Overall & $16/8$
& \shortstack{Msg./agent\\$\downarrow$} & Avg. rank \\
\midrule
\multicolumn{12}{l}{\scriptsize\color{black!62}\textit{Interface-matched controls}} \\[-2pt]
\nocomm
& 0.10 & 0.88 & 0.05 & 0.04 & 0.08
& 0.28 & 0.18 & 0.23 & 0.64 & 0.0 & 6.00 \\
\nativemsg
& 0.34 & 0.90 & 0.55 & 0.25 & 0.18
& 0.54 & 0.35 & 0.44 & 0.65 & 28.4 & 4.93 \\
\fixedswarm
& 0.44 & \secondcell{0.91} & \secondcell{0.72} & 0.34
& \bestcell{0.35}
& \secondcell{0.64} & \secondcell{0.46} & \secondcell{0.55}
& \secondcell{0.72} & 15.8 & \secondcell{2.29} \\
\specialrule{0.4pt}{0.6pt}{0.6pt}
\multicolumn{12}{l}{\scriptsize\color{black!62}\textit{Shared-controller variants}} \\[-2pt]
\textsc{Zero-shot}
& 0.39 & 0.90 & 0.63 & 0.30 & 0.21
& 0.58 & 0.38 & 0.48 & 0.66 & 14.6 & 4.07 \\
\textsc{Decision only}
& \secondcell{0.47} & \secondcell{0.91} & 0.68 & \secondcell{0.36}
& 0.27
& \secondcell{0.64} & 0.44 & 0.54 & 0.69 & \secondcell{12.6} & 2.57 \\
\specialrule{0.4pt}{0.6pt}{0.6pt}
\textbf{\method}
& \bestcell{0.56} & \bestcell{0.94} & \bestcell{0.80} & \bestcell{0.44}
& \secondcell{0.32}
& \bestcell{0.69} & \bestcell{0.53} & \bestcell{0.61}
& \bestcell{0.77} & \bestcell{11.1} & \bestcell{1.14} \\
\bottomrule
\end{tabular}
\end{table*}

\subsection{SCD Learns Anonymous Stability and Local Dynamics}
\label{sec:scd-ablation-results}

The four-arm factorial reveals a clear division of labor between the two \scd
terms while held-out decision NLL remains effectively unchanged.  Orbit supervision increases deployed
anonymous-view agreement by 22.3 percentage points, whereas field supervision
reduces frozen-probe loss by 34.3\% on eventful transitions.  Complete \scd
combines both effects (Table~\ref{tab:scd-objective-attribution}).  The
training-head columns verify that each auxiliary objective is optimized, while
the deployment columns measure what remains in the decoder and frozen controller
representation after those heads are removed
(Appendix~\ref{app:scd-primary-factorial}).

These diagnostics connect the two \scd objectives to distinct deployed
properties. Orbit supervision improves decision consistency across equivalent
anonymous views, while field supervision makes evolving local coordination
state more accessible in the frozen representation. The former is the local
property underlying Proposition~\ref{prop:trajectory-equivariance}; the latter
captures dynamics relevant to later reorganization. The next experiment asks
whether these structural changes translate into better reorganization after
counterevidence.

\begin{table}[t]
\centering
\caption{Four-arm \scd factorial over five paired training seeds. Values are
mean $\pm$ s.d.; exact equivariance entries are held-out agreement rates.
Paired \scd effects are reported in Appendix~\ref{app:scd-primary-factorial}.}
\label{tab:scd-objective-attribution}
\newcommand{\pmcell}[2]{\shortstack{$#1$\\[0pt]
  {\fontsize{6}{6.2}\selectfont$\pm #2$}}}
\newcommand{\bestpm}[2]{\shortstack{$\mathbf{#1}$\\[0pt]
  {\normalfont\fontsize{6}{6.2}\selectfont$\pm #2$}}}
\newcommand{\ratecell}[1]{\shortstack{$#1$\\[0pt]
  {\fontsize{5.8}{6}\selectfont\strut}}}
\newcommand{\bestrate}[1]{\shortstack{$\mathbf{#1}$\\[0pt]
  {\fontsize{5.8}{6}\selectfont\strut}}}
\fontsize{6.8}{7.35}\selectfont
\setlength{\tabcolsep}{0.83pt}
\renewcommand{\arraystretch}{1.08}
\begin{tabular}{@{}l*{12}{c}@{}}
\toprule
Objective
& \multicolumn{1}{c}{Imitation}
& \multicolumn{3}{c}{Training objective}
& \multicolumn{5}{c}{Deployed structure}
& \multicolumn{3}{c}{Closed-loop reorganization} \\
\cmidrule(lr){2-2}\cmidrule(lr){3-5}\cmidrule(lr){6-10}\cmidrule(lr){11-13}
& NLL $\downarrow$
& \shortstack{Orbit-head acc.\\(\%) $\uparrow$}
& \multicolumn{2}{c}{\shortstack{Train-head\\loss $\downarrow$}}
& \shortstack{Summary\\agree. (\%) $\uparrow$}
& \multicolumn{2}{c}{\shortstack{Exact record equiv.\\(\%) $\uparrow$}}
& \multicolumn{2}{c}{\shortstack{Frozen-probe\\loss (norm.) $\downarrow$}}
& \shortstack{Recovery\\(\%) $\uparrow$}
& \shortstack{Stale\\AUC $\downarrow$}
& \shortstack{\textsc{Silo}\\msg./agent $\downarrow$} \\
& & & All & Event & & Decoded & Admitted & All & Event & & & \\
\midrule
\textsc{Decision only}
& \pmcell{0.0191}{0.0000}
& \pmcell{24.6}{8.7}
& \pmcell{3.643}{0.258}
& \pmcell{3.611}{0.385}
& \pmcell{72.4}{3.8}
& \ratecell{61.2}
& \ratecell{62.0}
& \pmcell{1.000}{0.000}
& \pmcell{1.000}{0.000}
& \pmcell{80.0}{10.0}
& \pmcell{22.768}{1.55}
& \pmcell{7.157}{0.18} \\
\addlinespace[1pt]
$+$ Orbit
& \pmcell{0.0192}{0.0001}
& \pmcell{96.6}{0.4}
& \pmcell{3.512}{0.301}
& \pmcell{3.398}{0.288}
& \pmcell{94.7}{0.8}
& \ratecell{88.6}
& \ratecell{89.4}
& \pmcell{0.968}{0.030}
& \pmcell{0.944}{0.040}
& \pmcell{82.0}{9.4}
& \pmcell{22.2}{1.45}
& \pmcell{7.05}{0.17} \\
\addlinespace[1pt]
$+$ Field
& \pmcell{0.0191}{0.0000}
& \pmcell{23.5}{4.8}
& \pmcell{0.152}{0.006}
& \pmcell{1.289}{0.037}
& \pmcell{71.6}{3.1}
& \ratecell{60.1}
& \ratecell{60.9}
& \pmcell{0.724}{0.035}
& \pmcell{0.657}{0.040}
& \pmcell{86.0}{8.0}
& \pmcell{20.0}{1.25}
& \pmcell{6.86}{0.16} \\
\midrule
\rowcolor{rankfirstbg}
\textbf{\method}
& \pmcell{0.0192}{0.0001}
& \bestpm{97.0}{0.2}
& \bestpm{0.147}{0.006}
& \bestpm{1.248}{0.019}
& \bestpm{95.2}{0.5}
& \bestrate{90.3}
& \bestrate{91.1}
& \bestpm{0.681}{0.030}
& \bestpm{0.608}{0.035}
& \bestpm{88.5}{7.2}
& \bestpm{19.237}{1.10}
& \bestpm{6.737}{0.14} \\
\bottomrule
\end{tabular}
\end{table}

\begin{figure}[t]
\centering
\includegraphics[width=\linewidth,trim=0 5pt 0 0,clip]{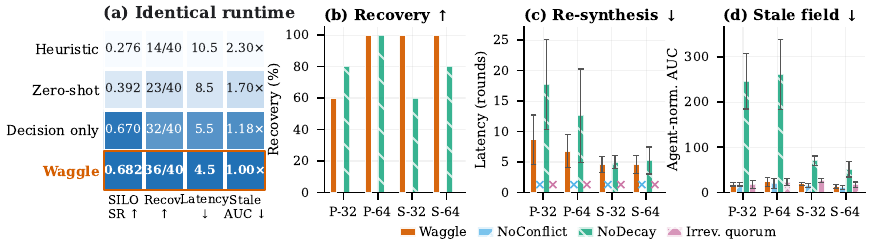}
\vspace{-5pt}
\setlength{\abovecaptionskip}{0pt}
\caption{\textbf{Decision-source and runtime-component attribution.} (a)
Decision laws under a shared runtime. (b--d) Runtime ablations on \swarmbench
Pursuit (P) and a \silo late-conflict task (S). Error bars show $\pm 1$ s.d.;
crosses denote undefined metrics when no settings recover.}
\label{fig:g4-mechanism-attribution}
\vspace{-4pt}
\end{figure}

\subsection{SCD Improves Reorganization after Counterevidence}
\label{sec:scd-behavior-results}

The late-counterevidence intervention carries the same four-arm factorial
into closed loop.  With the runtime fixed, the two single-term arms yield
intermediate behavioral gains, while complete \scd has the strongest joint
profile.  Recovery, requiring conflict-triggered reopening followed by
evidence-supported replacement synthesis, rises from 80.0\% to 88.5\%, while stale-field burden
and post-conflict traffic both fall (Table~\ref{tab:scd-objective-attribution};
paired effects in Appendix~\ref{app:scd-primary-factorial}).
Stale-field AUC integrates obsolete semantic-trace mass after conflict, so lower
values indicate faster clearance.  A representative trajectory shows verified
conflict dissolving the old commitment, synthesis building a replacement, and
both activities subsiding after the new commitment stabilizes
(Appendix~\ref{app:g3-late-conflict}).

The same structural and behavioral pattern replicates with
Llama-3.1-8B-Instruct under the same runtime
(Appendix~\ref{app:llama-backbone-replication}).  This motivates the
attribution question addressed next: whether the gain comes from learned
semantic decisions or from the fixed runtime affordances that realize them.

\subsection{Learned Decisions and Runtime Mechanics Play Complementary Roles}
\label{sec:same-runtime-results}

Figure~\ref{fig:g4-mechanism-attribution} applies two orthogonal
interventions: panel (a) fixes the interface and runtime while varying the
decision source, whereas panels (b--d) fix the \method controller and remove
conflict handling, reversibility, or decay; Appendix~\ref{app:g4-mechanism-extension}
extends the component test.  Under this fixed runtime, \method achieves the
strongest joint profile across task quality and late-conflict reorganization.
\textsc{Decision only} remains close on conventional quality but adapts less
effectively after counterevidence, while the zero-shot and heuristic laws trail
further.  These differences are therefore attributable to the decision source;
Appendix~\ref{app:same-runtime-comparison} reports the full comparison.

Across the instrumented evaluation matrix, nearly every first decode is
executable as emitted, and the no-communication fallback is never invoked.
Runtime repair can only remove invalid content or reuse the node's private
proposal; it cannot originate social content
(Appendix~\ref{app:held-out-decisions}; per-mode results in
Appendix~\ref{app:controller-results}).

Removing conflict handling or commitment reversibility collapses recovery;
removing decay preserves most recovery but delays re-synthesis and stale-state
clearance (Appendix~\ref{app:runtime-component-intervention}).
Nearby runtime settings preserve performance while
shifting the communication--responsiveness trade-off
(Appendix~\ref{app:runtime-sensitivity}; execution perturbations:
Appendix~\ref{app:execution-perturbations}).  Together, these interventions
separate learned semantics from the bounded runtime that realizes them.

\FloatBarrier
\section{Conclusion and Limitations}
\label{sec:discussion}

\method shows that adaptive LLM-agent organization can be learned as a
reusable anonymous local law, without representing organization as an explicit
system-level structure.  Across the tested substrates and scales, repeated
local reactions form, sustain, and revise coordination, while \scd improves
anonymous stability and reorganization after counterevidence.  More broadly,
these results suggest a complementary design principle for multi-agent
systems: organization can arise repeatedly from a learned local rule instead
of being specified directly.

The present study deliberately isolates this organizing law behind a bounded
canonical interface and a fixed runtime.  Deterministic substrate adapters
expose private state, locally checkable evidence, and legal actions, while
validation, bounded transport, expiry, and native realization remain fixed
runtime mechanisms and are not learned end to end.  The theoretical guarantee
requires equivariant local decisions, view construction, and runtime updates;
the empirical evidence establishes reuse across the tested local-interaction
substrates and transfer settings.
Appendices~\ref{app:responsible} and~\ref{app:future-directions} detail these
scope boundaries and extensions.

\clearpage

\section*{Reproducibility Statement}

Appendices~\ref{app:substrates}--\ref{app:interface} document the substrate
adapters, runtime validation, and canonical local interface, while
Appendix~\ref{app:trajectory-equivariance} gives the assumptions and proof of
conditional relabeling equivariance.  Training-corpus construction, data
splits, optimization settings, and checkpoint selection are detailed in
Appendices~\ref{app:training} and~\ref{app:scd-ablation}.
Appendix~\ref{app:protocol} specifies benchmark protocols, baseline
implementations, and budget settings.  Transfer configurations appear in
Appendices~\ref{app:agentsnet-rp} and~\ref{app:g5-api-worker}.
Training and model selection exclude formal evaluation cases and seeds.
The \scd comparison uses five paired training seeds with shared teacher
records and optimization budgets, with matched-evaluation and paired-bootstrap
procedures described in Appendix~\ref{app:scd-primary-factorial}.

\section*{Ethics Statement}

We study coordination in benchmark environments using teacher-generated
training records.  The experiments contain no human-subject data.
Anonymous handles remove identity cues from the controller interface but are
not a privacy guarantee.  In open-world deployments, unreliable observations
or malicious messages could propagate through local interactions and
influence collective actions.  Bounded transport, finite trace lifetimes,
and reversible commitments constrain coordination dynamics but do not
establish a complete security boundary.  Applications involving consequential
external actions should authenticate evidence provenance, sandbox tools,
protect sensitive inputs, and retain appropriate human approval.
Appendix~\ref{app:responsible} discusses these deployment assumptions and
limitations.

\section*{AI Use Statement}

GPT-4o was used in the experimental teacher and private-worker roles described
in Section~\ref{sec:learning} and Appendix~\ref{app:g5-api-worker}.
Generative AI was used for language polishing, including grammar, clarity, and
stylistic editing; to assist with code development and debugging; and to support
brainstorming of figure concepts and visual design ideas. All AI-assisted text,
code, and analyses were critically reviewed and verified against the underlying
sources, data, and implementations. The authors take full responsibility for
the final manuscript.

\bibliographystyle{iclr2027_conference}
\bibliography{references}

\appendix

\clearpage
\section*{Appendix Contents}
\begingroup
\small
\setlength{\parskip}{0pt}
\setcounter{tocdepth}{2}
\makeatletter
\renewcommand{\l@section}[2]{%
  \addpenalty{\@secpenalty}\addvspace{3pt}%
  {\bfseries\@dottedtocline{1}{0em}{2.4em}{#1}{#2}}}
\renewcommand{\l@subsection}{\@dottedtocline{2}{2.4em}{3.2em}}
\@starttoc{apc}
\makeatother
\endgroup
\clearpage
\let\waggleOriginalAddcontentsline\addcontentsline
\renewcommand{\addcontentsline}[3]{%
  \waggleOriginalAddcontentsline{#1}{#2}{#3}%
  \ifstrequal{#1}{toc}{%
    \ifstrequal{#2}{section}
      {\waggleOriginalAddcontentsline{apc}{#2}{#3}}
      {\ifstrequal{#2}{subsection}
        {\waggleOriginalAddcontentsline{apc}{#2}{#3}}{}}%
  }{}%
}

\section{Lower-Scale Anchor Results}
\label{app:anchors}

These lower-scale evaluations locate the frozen controller before the main
population expansion.  They use ten held-out seeds and are retained as
context; the \swarmbench values provide the Pursuit/Foraging $N=8$ anchors,
with Synchronization normalized by its matched 6.40 reference.  None of these
episodes is reused in a complete-matrix comparison.  Compact summaries appear in
Tables~\ref{tab:swarm-main} and~\ref{tab:silo-main}; the population and task
breakdowns below retain the detailed anchor evidence.

\begin{table}[h]
\centering
\caption{Lower-scale \swarmbench anchor at $N=8$.}
\label{tab:swarm-main}
\scriptsize
\setlength{\tabcolsep}{3.2pt}
\begin{tabular}{lrrrr}
\toprule
& \multicolumn{2}{c}{Pursuit} & \multicolumn{2}{c}{Foraging} \\
\cmidrule(lr){2-3}\cmidrule(lr){4-5}
Method & Score $\uparrow$ & Msg. $\downarrow$
       & Score $\uparrow$ & Msg. $\downarrow$ \\
\midrule
\nativemsg & 3.70 $\pm$ 3.33 & 1742.5 $\pm$ 531.9
          & 2.90 $\pm$ 2.18 & 341.2 $\pm$ 325.4 \\
\fixedswarm  & 6.10 $\pm$ 4.09 & 509.8 $\pm$ 148.5
          & 3.70 $\pm$ 1.42 & 80.5 $\pm$ 73.5 \\
\method   & \textbf{6.20 $\pm$ 7.76} & \textbf{430.3 $\pm$ 69.8}
          & \textbf{3.80 $\pm$ 2.39} & \textbf{57.0 $\pm$ 16.1} \\
\bottomrule
\end{tabular}
\end{table}

At the \swarmbench anchor, \method improves mean score by 0.10 over \fixedswarm
on both tasks while reducing messages by 15.6\% on Pursuit and 29.2\% on
Foraging.

\begin{table}[h]
\centering
\caption{Lower-scale six-task \silo anchor at $N\in\{4,8\}$.}
\label{tab:silo-main}
\small
\setlength{\tabcolsep}{7pt}
\begin{tabular}{lrrr}
\toprule
Method & Success (\%) $\uparrow$ & Msg. $\downarrow$ & kB $\downarrow$ \\
\midrule
\nativemsg & \textbf{100.00} & 54.00 & 49.21 \\
\fixedswarm  & \textbf{100.00} & 54.00 & 50.04 \\
\method   & 97.92 & \textbf{41.63} & \textbf{34.47} \\
\bottomrule
\end{tabular}
\end{table}

\begin{table}[h]
\centering
\caption{Population breakdown of the lower-scale \silo anchor.}
\label{tab:silo-scale}
\small
\setlength{\tabcolsep}{5pt}
\begin{tabular}{crrrr}
\toprule
$N$ & Success (\%) & Msg. & kB & \nativemsg/\fixedswarm kB \\
\midrule
4 & 100.00 & 27.27 & 16.68 & 21.85 / 22.42 \\
8 & 95.83  & 55.98 & 52.25 & 76.58 / 77.66 \\
\bottomrule
\end{tabular}
\end{table}

Across this lower-scale evaluation anchor, \method reaches 97.92\% success
with 22.9\% fewer messages than either matched baseline and 30.0\% fewer bytes
than \nativemsg.  The primary $N\in\{16,32,64\}$ cross-method \silo
evaluation is reported in Table~\ref{tab:g3-cross-substrate-core};
Appendix~\ref{app:silo-results} breaks \method down by population and
difficulty.

\section{Detailed Native Realizations}
\label{app:substrates}

The same learned record is presented to both substrate adapters.
Table~\ref{tab:substrates-detail} separates the policy's semantic choices from
their fixed native realization.  Adapters construct private proposals, expose
locally checkable facts, and compile admitted records; they do not originate a
claim, communication intent, commitment, or controller mode.

\begin{table}[H]
\centering
\caption{Learned semantic decisions and fixed native realization in the two
substrates.}
\label{tab:substrates-detail}
\scriptsize
\setlength{\tabcolsep}{3.5pt}
\begin{tabularx}{\linewidth}{p{0.13\linewidth}p{0.27\linewidth}XX}
\toprule
Element & Learned local law & \swarmbench fixed realization
& \silo fixed realization \\
\midrule
Task
& Chooses a legal task action or waits
& Egocentric local operation
& Private-shard operation or response \\
Trace
& Chooses content, incident target, and whether to relay
& Contact-local cue with bounded per-hop relay
& Local-port evidence frame with finite-lifetime relay \\
Commit and reopen
& Chooses whether to synthesize, commit, challenge, or revise
& Support/coherence readiness; contradictory cue
& Native constructibility; incompatible partial evidence \\
Safeguards
& Makes no schema, provenance, duplication, or expiry decision
& Validates provenance, suppresses repeats, bounds relay, and expires cues
& Validates provenance, suppresses repeats, bounds relay, and expires frames \\
\bottomrule
\end{tabularx}
\end{table}

\FloatBarrier
\section{Population, Controller, and Runtime Evidence}
\label{app:execution-evidence}

\subsection{Population summaries and quality--communication trade-offs}
\label{app:population-summaries}

The task-normalized \swarmbench aggregate averages Pursuit, Foraging, and
Synchronization after division by the frozen $N=8$ \method anchors, 6.20,
3.80, and 6.40, respectively.  Table~\ref{tab:g3-cross-substrate-core}
reports this aggregate alongside the raw scores.  Flocking and Transport
complete the five-task \swarmbench evaluation; their full population-scale
results appear in Table~\ref{tab:swarmbench-flocking-transport}.
Figure~\ref{fig:g3-success-cost} retains the Pursuit/Foraging telemetry
diagnostic, whereas Figure~\ref{fig:g3-scale-transfer} uses all three tasks.

\begin{table}[h]
\centering
\caption{Raw Flocking and Transport scores in the five-task \swarmbench
evaluation.  Bold/underline mark first/second within each task--population
column, with lavender/peach shading.}
\label{tab:swarmbench-flocking-transport}
\scriptsize
\setlength{\tabcolsep}{4.0pt}
\renewcommand{\arraystretch}{1.06}
\begin{tabular}{@{}lcccccc@{}}
\toprule
\rowcolor{tableheaderbg}
& \multicolumn{3}{c}{Flocking $\uparrow$}
& \multicolumn{3}{c}{Transport $\uparrow$} \\
\cmidrule(lr){2-4}\cmidrule(lr){5-7}
\rowcolor{tableheaderbg}
Method & $N=16$ & $N=32$ & $N=64$ & $N=16$ & $N=32$ & $N=64$ \\
\midrule
\multicolumn{7}{l}{\scriptsize\color{black!62}\textit{Interface-matched controls}} \\[-1pt]
\nocomm      & 5.40 & 5.00 & 4.30 & 0.00 & 0.00 & 0.00 \\
\nativemsg   & 6.00 & 5.70 & 5.00 & 0.25 & 0.20 & 0.10 \\
\fixedswarm & \secondcell{7.60} & 7.20 & 6.50 & 0.45 & 0.55 & 0.45 \\
\specialrule{0.35pt}{1.5pt}{0.6pt}
\multicolumn{7}{l}{\scriptsize\color{black!62}\textit{Organizational baselines}} \\[-1pt]
\gptswarm    & 5.80 & 6.10 & 5.70 & 0.20 & 0.25 & 0.20 \\
\gdesigner   & 6.20 & 6.50 & 6.10 & 0.30 & 0.35 & 0.30 \\
\argdesigner & 6.70 & 7.00 & 6.70 & 0.45 & 0.55 & 0.50 \\
\agentnet    & 6.60 & 6.80 & 6.30 & 0.40 & 0.50 & 0.45 \\
\dmoa        & 6.80 & 7.10 & 6.80 & 0.50 & 0.60 & 0.55 \\
\swarmsys    & 7.20 & \secondcell{7.50} & \secondcell{7.00}
              & \secondcell{0.65} & \secondcell{0.80} & \secondcell{0.75} \\
\agentdistill & 7.00 & 7.10 & 6.40 & 0.55 & 0.65 & 0.55 \\
\midrule
\textbf{\method}
& \bestcell{7.80} & \bestcell{8.10} & \bestcell{7.70}
& \bestcell{1.20} & \bestcell{1.55} & \bestcell{1.70} \\
\bottomrule
\end{tabular}
\end{table}

\begin{figure}[h]
\centering
\begin{minipage}[t]{0.487\linewidth}
\vspace{0pt}
\centering
\includegraphics[width=\linewidth]{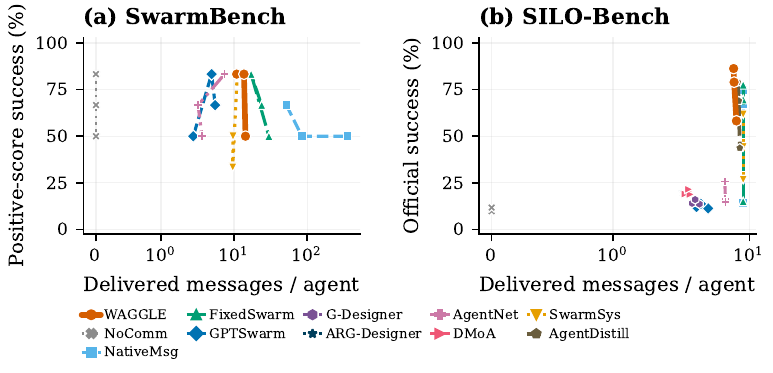}
\caption{\textbf{Quality--communication trade-off over five matched evaluation
seeds.} The shared legend lists all eleven methods: the \swarmbench
Pursuit/Foraging subset shows the seven with both positive-score success and
delivered-message telemetry, whereas \silo shows all eleven.}
\label{fig:g3-success-cost}
\end{minipage}
\hfill
\begin{minipage}[t]{0.487\linewidth}
\vspace{0pt}
\centering
\includegraphics[width=\linewidth]{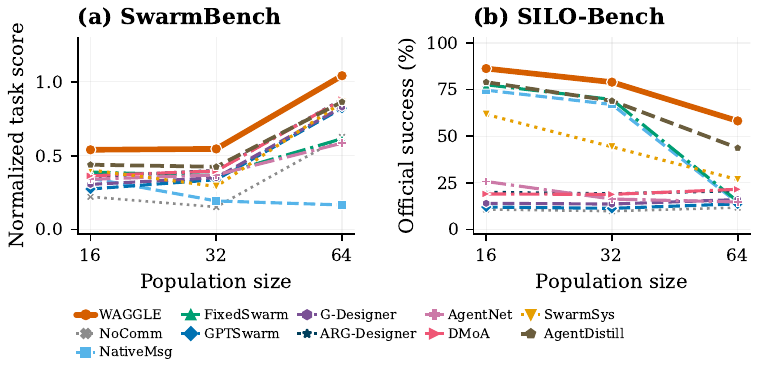}
\caption{\textbf{Cross-method population scaling.}
\swarmbench scores use task-specific frozen $N=8$ \method references;
\silo reports official success.}
\label{fig:g3-scale-transfer}
\end{minipage}
\end{figure}

\FloatBarrier
\subsection{Population extension beyond the primary matrix}
\label{app:n128-population-extension}

We test deployment-time population extrapolation without retraining or
checkpoint selection.  The joint \method controller remains frozen, and we
compare it with \fixedswarm, the interface-matched fixed-rule control, and
\swarmsys, a broader \silo organizational baseline.  The
evaluation uses \silo and \swarmbench Pursuit and Foraging at
$N\in\{64,128\}$ with the same five held-out seeds, yielding 960
episodes.  The $N=64$ values are newly evaluated five-seed anchors for this
extension.

\begin{table}[h]
\centering
\caption{Frozen-controller population extrapolation from $N=64$ to $N=128$;
scale retention is $100\,\mathrm{SR}_{128}/\mathrm{SR}_{64}$.}
\label{tab:n128-population-extension}
\footnotesize
\setlength{\tabcolsep}{4.2pt}
\renewcommand{\arraystretch}{1.08}
\begin{tabular}{lcccccc}
\toprule
\rowcolor{tableheaderbg}
& \multicolumn{4}{c}{\silo} & \multicolumn{2}{c}{\swarmbench} \\
\cmidrule(lr){2-5}\cmidrule(lr){6-7}
\rowcolor{tableheaderbg}
Method
& \shortstack{$N=64$\\SR (\%) $\uparrow$}
& \shortstack{$N=128$\\SR (\%) $\uparrow$}
& \shortstack{Scale\\retention $\uparrow$}
& \shortstack{$N=128$\\msg./agent $\downarrow$}
& \shortstack{$N=64$\\P/F task-norm. $\uparrow$}
& \shortstack{$N=128$\\P/F task-norm. $\uparrow$} \\
\midrule
\textbf{\method}
& \textbf{68.5} & \textbf{56.8} & \textbf{82.9\%}
& \textbf{8.25} & \textbf{1.230} & \textbf{1.380} \\
\fixedswarm
& 15.4 & 8.6 & 55.8\% & 9.00 & 0.810 & 0.860 \\
\swarmsys
& 27.1 & 17.8 & 65.7\% & 9.00 & 1.100 & 1.150 \\
\bottomrule
\end{tabular}
\end{table}

The frozen law retains 82.9\% of its matched $N=64$ \silo success at
$N=128$, compared with 55.8\% for \fixedswarm and 65.7\% for \swarmsys,
while using 8.25 delivered messages per agent.  Its Pursuit/Foraging
task-normalized \swarmbench score also increases from 1.230 to 1.380.  The scaling advantage
in the primary matrix therefore persists when the deployment population
doubles beyond $N=64$.

\subsection{Held-out local decisions}
\label{app:held-out-decisions}

We evaluate the frozen controller on 800 held-out anonymous local views.  Every
generation is valid JSON, satisfies the schema, and is executable.  Derived
six-mode macro-F1 is 100\%, including Challenge and Synthesize, while exact
full-record match is 74.75\%.  Many non-exact records differ only in a bounded
deposit value or an equivalent local choice without changing the operational
mode.  This is a record-format and derived-category fidelity diagnostic;
end-to-end coordination is evaluated on full benchmark episodes in Section~\ref{sec:experimental-design}.

\begin{table}[h]
\centering
\caption{Frozen-controller validity and derived-mode classification on 800
held-out local views.}
\label{tab:controller}
\small
\begin{tabular}{lrrrr}
\toprule
& JSON valid & Schema valid & Executable & Mode macro-F1 \\
\midrule
Qwen3-4B + LoRA & 100.00 & 100.00 & 100.00 & 100.00 \\
\bottomrule
\end{tabular}

\vspace{2pt}
\footnotesize Decision exact match: 74.75\%.
\end{table}
Across the 72 \method cells in the core matrix, 85,129 of 85,344 first
decodes were executable (99.75\%).  Of the 215 remaining first decodes, one
was envelope-normalized, 207 were repaired by one validator-code-conditioned
regeneration, and seven were conservatively projected; the no-communication
fallback was never invoked.  Wrapper operations are limited to deleting invalid
content or reusing the local private proposal; social content remains
controller-authored.

\subsection{Mechanism activity on \swarmbench}

Entries are per-episode mean $\pm$ sample s.d. over the ten $N=8$ anchor
seeds.

\begin{table}[h]
\centering
\caption{Controller-selected and runtime-filtered writes on \swarmbench at
$N=8$.}
\label{tab:mechanism-activity}
\footnotesize
\setlength{\tabcolsep}{4.0pt}
\begin{tabularx}{\linewidth}{lXXX}
\toprule
Task & Overall writes & Local suppression & Context safeguards \\
\midrule
Pursuit
& Candidate $420.6\pm133.5$; executed $139.9\pm17.1$
& Repeats blocked $147.8\pm110.8$; refreshes $30.2\pm20.7$
& Unverified challenges blocked $11.8\pm7.6$; route relays silenced $0.0$ \\
Foraging
& Candidate $93.8\pm47.7$; executed $24.7\pm8.3$
& Repeats blocked $27.6\pm35.4$; refreshes $0.0$
& Unverified challenges blocked $0.0$; route relays silenced $26.6\pm16.2$ \\
\bottomrule
\end{tabularx}
\end{table}

Table~\ref{tab:mechanism-activity} shows that one local law selects different
swarm primitives from event context.  Pursuit keeps 1,399 of 4,206 candidate
writes (33.3\%), blocks 1,478 spatial repeats among 3,259 deposit candidates
(45.4\%), and executes 302 controlled refreshes as target evidence moves.
Every emitted inhibitory challenge is backed by a verified local conflict
(52/52), while 118 of 170 unverified challenge candidates are blocked.
Foraging keeps 247 of 938 candidate writes (26.3\%), blocks 276 spatial
repeats among 446 deposit candidates (61.9\%), and silences 266 of 492
route-active relay candidates (54.1\%).  Thus Pursuit activates refreshed
evidence and verified conflict, whereas Foraging activates event-once evidence
and private-route protection.

\section{Canonical Local Interface}
\label{app:interface}

\subsection{Controller view}

The serialized controller view contains exactly six components: a task
contract, private state, benchmark-native private proposal, incident fields,
local commitment, and remaining-budget bin.  The historical JSON key
\texttt{worker\_proposal} denotes that native private proposal; the final
system does not invoke a second worker LLM.  Recursive schema validation
rejects identity, roster, population-size, non-neighbor, global-state, and
global-transcript keys.

For the reported interfaces, \swarmbench uses a $5\!\times\!5$ egocentric window
and \silo uses degree-three ports; the resulting trace list is bounded by those
benchmark contracts.

Each incident trace contains:
\begin{quote}
\small\verb|{channel, claim, novelty, support, conflict, ttl, direction?}|
\end{quote}
Channel handles are sampled from an episode-local pool, randomized per node, and
fixed only within that episode.  Incident records are seed-controlled shuffled
at each local update.  During \scd, incident permutations and opaque
handle/claim relabelings are applied as paired, consistency-checked anonymous
orbits to both the view and its teacher record.  Equation~\ref{eq:orbit-loss}
promotes semantic consistency across these paired anonymous views.

\subsection{Decision schema}

The controller emits, and the validator accepts, the following canonical schema:
{\small
\begin{verbatim}
{
  "task_action": string,
  "response": string | null,
  "deposits": [
    {channel, claim, novelty, support, conflict, ttl, direction?}
  ],
  "commit": {claim, confidence_bin} | null,
  "execution_intent": "PRIVATE" | "FIELD" | "WAIT"  // optional; omission = AUTO
}
\end{verbatim}
}
Numeric trace components use five bins (0--4); TTL uses 1--8; confidence uses
0--4.  Canonical serialization includes \texttt{execution\_intent} only for a
non-\texttt{AUTO} request; omission denotes the fixed default compiler rule.
The validator derives the mode after checking consistency with the local view.  In particular, a relay must match incident evidence, a challenge
must contain positive conflict, and an abstaining wait cannot write or commit.

For analysis only, the validator assigns the first matching label in the
precedence order in Table~\ref{tab:modes}; labels are not controller outputs.

\begin{table}[h]
\centering
\caption{Precedence for assigning derived analysis modes.}
\label{tab:modes}
\small
\setlength{\tabcolsep}{4pt}
\begin{tabular}{lp{6.0cm}}
\toprule
Mode & First matching condition \\
\midrule
Challenge & Valid positive-conflict write \\
Synthesize & Commitment or response \\
Deposit & Fresh private-evidence write \\
Relay & Write matching an incident trace \\
Explore & Non-wait execution field with no higher label \\
Abstain & Legal wait with no write or commitment \\
\bottomrule
\end{tabular}
\end{table}

\subsection{Anonymous controller prompt}

All nodes use the same instruction.  Benchmark-specific action names and the
serialized local view are appended mechanically.

\begin{quote}
\small
You are one anonymous member of a changing local network.  Use only the
private proposal, private evidence, current commitment, and semantic traces on
incident channels.  Return one canonical JSON decision.  Continue useful
private work unless local evidence justifies a deposit, relay, challenge, or
commitment.  Treat repeated support as local evidence rather than global
consensus.  Fresh verified conflict may reopen a commitment.  Do not infer an
identity, role, roster, population size, or global transcript.
\end{quote}

\section{Proof of Conditional Local-to-Global Relabeling Equivariance}
\label{app:trajectory-equivariance}

\subsection{Synchronous execution}
\label{app:synchronous-equivariance}

We make the actions in Proposition~\ref{prop:trajectory-equivariance} explicit.
The relabeling $\Gamma$ has node-bijection component $\sigma$ and transports
the graph and all node-indexed state, consistently renaming every
schema-designated opaque reference, including reference-bearing action
arguments; it does not change topology, population size, reference-free task
semantics, directions, or trace values.  Its induced $g_{\Gamma,i}^t$ may also
permute incident-record presentation, and we define
$(\Gamma D^t)_{\sigma(i)}=g_{\Gamma,i}^t d_i^t$.  The map $\pi_\theta^{\mathrm{dep}}$
includes decoding, validation, repair, and conservative projection, so its
equivariance concerns the complete admitted record.  Node- or edge-indexed
random draws are carried to their $\Gamma$-images, whereas label-free draws are
shared; denote this coupling by $\xi_\Gamma^t=\Gamma\xi^t$.  All components of
$D^t$ are computed from the same pre-update state $S^t$ before $F$ is applied.

\begin{proof}[Proof of Proposition~\ref{prop:trajectory-equivariance}]
The initial states obey $S_\Gamma^0=\Gamma S^0$ by construction.  Suppose
$S_\Gamma^t=\Gamma S^t$.  For every node $i$, view and local-map equivariance
give
\begin{align*}
(d_\Gamma^t)_{\sigma(i)}
&=\pi_\theta^{\mathrm{dep}}\!\left(
\operatorname{View}_{\sigma(i)}(S_\Gamma^t)\right) \\
&=\pi_\theta^{\mathrm{dep}}\!\left(
g_{\Gamma,i}^t\operatorname{View}_{i}(S^t)\right)
=g_{\Gamma,i}^t d_i^t
=(\Gamma D^t)_{\sigma(i)}.
\end{align*}
Thus $D_\Gamma^t=\Gamma D^t$.  Applying the coupled synchronous update yields
\begin{align*}
S_\Gamma^{t+1}
&=F(S_\Gamma^t,D_\Gamma^t;\xi_\Gamma^t) \\
&=F(\Gamma S^t,\Gamma D^t;\xi_\Gamma^t)
=\Gamma F(S^t,D^t;\xi^t)
=\Gamma S^{t+1}.
\end{align*}
Induction proves both identities for every round.  Therefore the two
trajectories satisfy $\tau_\Gamma=\Gamma\tau$, and every $J$ with
$J(\Gamma\tau)=J(\tau)$ obeys
$J(\tau_\Gamma)=J(\Gamma\tau)=J(\tau)$.
\end{proof}

\begin{proof}[Proof of Corollary~\ref{cor:symmetry-preservation}]
The original and relabeled deterministic executions receive the same initial
state and exogenous inputs, so they coincide.  Proposition~\ref{prop:trajectory-equivariance}
also identifies the relabeled trajectory with the $\Gamma$-image of the
original.  Hence every state and joint decision is $\Gamma$-invariant.  A
unique selected node must therefore be fixed by $\Gamma$.
\end{proof}

With independently resampled draws, the conclusion holds in distribution
when their conditional law is equivariant under the corresponding state and
history relabeling.  Stochastic decoding analogously requires an equivariant
kernel for the complete admitted record.

\subsection{Event-driven extension}
\label{app:event-driven-equivariance}

The local-to-global argument also applies to event-driven execution.
We formulate an asynchronous runtime contract here; the primary benchmark
executions and training transitions retain their synchronous construction.
Appendix~\ref{app:execution-perturbations} separately tests execution perturbations.
Let the complete execution state be
\[
X^k=(S^k,M^k,P^k,T^k,R^k),
\qquad
X^{k+1}=\mathcal U(X^k,e_k;\xi_k).
\]
Here $S^k$ contains task and local coordination state, $M^k$ contains buffered
and in-flight messages, $P^k$ contains pending calls and their saved input
snapshots, $T^k$ contains local clocks and expiry timers, and $R^k$ contains
active membership, the contact graph, node attributes, and routing state.
An event $e_k$ starts or completes a local call, delivers or drops a message,
advances a timer or expires a trace, or changes membership or routing;
$\xi_k$ contains the associated runtime randomness.
The index $k$ orders events for analysis, not through a clock or coordinator
available to the agents.  A pending call retains its invocation snapshot even
when other events change the current local view before it completes.

\paragraph{Relabeling and completion-time admission.}
A legal $\Gamma$ consistently renames a universe of possible node
incarnations, including future arrivals, and all schema-designated opaque
references.  It transports message endpoints, pending-call descriptors and
snapshots, timer owners, node attributes, and membership/routing records.
Reference-free task content and numerical attributes are unchanged in value
and move with their associated nodes or records.
Thus the active sets obey $V_\Gamma^k=\sigma(V^k)$ in paired executions,
although $V^k$ may vary with $k$.
In the asynchronous contract, a completed output is rechecked against the
current local state before taking effect: targets must remain valid, required
evidence must remain admissible, and the native action must remain legal.
This completion-time admission, including cancellation or rejection after
departure or expiry, must commute with $\Gamma$.  Cancelled or rejected calls
are represented by a common symbol $\bot$ fixed by relabeling.

\begin{proposition}[Conditional event-driven relabeling equivariance]
\label{prop:event-driven-equivariance}
Suppose all calls use the same equivariant local decode-and-admit map
$\pi_\theta^{\mathrm{dep}}$ on their saved views.  Suppose view construction
and snapshot storage, event legality, completion-time admission, and every
non-policy runtime operation commute with each legal relabeling.
For any legal event history, couple the two executions by
\[
X_\Gamma^0=\Gamma X^0,\qquad
e_{\Gamma,k}=\Gamma e_k,\qquad
\xi_{\Gamma,k}=\Gamma\xi_k.
\]
Then $X_\Gamma^k=\Gamma X^k$ for every event index $k$, and corresponding
completed calls produce consistently relabeled admitted records, including
$\bot$.  Consequently, any relabeling-invariant functional of the event
trajectory has the same value in the two executions.
\end{proposition}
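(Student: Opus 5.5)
The argument is an induction on the event index $k$, mirroring the synchronous proof of Proposition~\ref{prop:trajectory-equivariance} but with the single-step update $\mathcal U$ in place of $F$. The base case $X_\Gamma^0=\Gamma X^0$ holds by the coupling. For the inductive step, I assume $X_\Gamma^k=\Gamma X^k$ and show $\mathcal U(\Gamma X^k,\Gamma e_k;\Gamma\xi_k)=\Gamma\,\mathcal U(X^k,e_k;\xi_k)$. The coupled event $\Gamma e_k$ is legal in the relabeled execution exactly when $e_k$ is legal in the original, because event legality commutes with $\Gamma$. Hence the relabeled history is again a legal history and the recursion is well defined.

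The step splits by event type.

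\emph{Call start.} View construction and snapshot storage commute with $\Gamma$. So the snapshot saved for node $\sigma(i)$ in the relabeled execution is $g_i$ applied to the snapshot saved for $i$, and it is stored in the relabeled pending-call record, so $P_\Gamma^{k+1}=\Gamma P^{k+1}$.

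\emph{Call completion.} The decode-and-admit map acts on the \emph{saved} snapshot, not the current view. Its equivariance therefore gives a proposed record $g_i d$, where $d$ is the original proposed record. Completion-time admission is then checked against the current local state. That state is $\Gamma$-related by the inductive hypothesis, and admission commutes with $\Gamma$. So the admitted output is the relabeled admitted record, or $\bot$ in both executions, since $\bot$ is fixed by relabeling. Its effect on $S,M,T$ is a non-policy runtime operation and thus commutes with $\Gamma$.

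\emph{Remaining events.} Message delivery or drop, timer advance and expiry, and membership or routing changes are all non-policy operations. They commute with $\Gamma$ directly once their randomness is coupled as $\Gamma\xi_k$. In particular, $V_\Gamma^{k+1}=\sigma(V^{k+1})$ is preserved, because $\Gamma$ is defined on the universe of all possible node incarnations. A future arrival in one execution is thus matched by its $\sigma$-image in the other.

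The main obstacle is the staleness of pending calls. Between invocation and completion, the current view may differ from the saved snapshot, so no single $g_i^t$ links input to output. I handle this by tying $g$ to the snapshot. The snapshot is stored as part of $P^k$, and $\Gamma$ transports pending-call snapshots. So the relabeling that links the two snapshots is inherited from the invocation time. Equivariance of $\pi_\theta^{\mathrm{dep}}$ then gives the relabeled record. The current-state dependence enters only through admission, which is separately assumed equivariant.

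A secondary subtlety is that nodes may depart before completion. This is covered by the requirement that cancellation and rejection commute with $\Gamma$ and map to $\bot$.

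Having established $X_\Gamma^k=\Gamma X^k$ for all $k$, the event trajectories satisfy $\tau_\Gamma=\Gamma\tau$. Any relabeling-invariant functional $J$ then obeys $J(\tau_\Gamma)=J(\Gamma\tau)=J(\tau)$, exactly as in the synchronous case.
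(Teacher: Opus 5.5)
Your proof is correct and follows the paper's argument. You first obtain the one-step identity $\mathcal U(\Gamma X,\Gamma e;\Gamma\xi)=\Gamma\,\mathcal U(X,e;\xi)$ by composing the assumed-equivariant operations (snapshot construction at call start, $\pi_\theta^{\mathrm{dep}}$ on the saved snapshot followed by current-state admission at completion, and the non-policy operations), and then induct on $k$ using equivariant event legality; your explicit handling of stale snapshots and of departures mapping to $\bot$ just spells out what the paper's proof states in one line each.
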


\begin{proof}
At call initiation, equivariant view construction and storage produce
corresponding saved snapshots.  At completion, local-map equivariance gives
corresponding outputs from those snapshots, and current-state admission
preserves this relation for both accepted and rejected records.
Delivery, dropping, timer updates, and membership/routing changes commute
with relabeling by assumption.  Composing the operations for each event
therefore gives
\[
\mathcal U(\Gamma X,\Gamma e;\Gamma\xi)
=\Gamma\mathcal U(X,e;\xi).
\]
The initial states are paired by hypothesis.  If $X_\Gamma^k=\Gamma X^k$,
equivariant event legality makes the paired event admissible, and
\begin{align*}
X_\Gamma^{k+1}
&=\mathcal U(X_\Gamma^k,e_{\Gamma,k};\xi_{\Gamma,k})\\
&=\mathcal U(\Gamma X^k,\Gamma e_k;\Gamma\xi_k)\\
&=\Gamma\mathcal U(X^k,e_k;\xi_k)
=\Gamma X^{k+1}.
\end{align*}
Induction proves the state and completed-record identities on every finite
event prefix, hence on the event trajectory $\tau_\Gamma=\Gamma\tau$.
Relabeling-invariant trajectory functionals are unchanged.
\end{proof}

\paragraph{Random event generation.}
The proposition pairs particular event histories.  To obtain a
distributional statement for the same stochastic execution mechanism, let
$H_k$ denote the complete history through $X^k$ and let
$\mathcal K(\cdot\mid H_k)$ be the conditional kernel of the next
marked event $\zeta_k=(e_k,\xi_k)$.
The joint kernel includes state-dependent timing and runtime randomness,
so an equivariant event order alone is insufficient if the remaining random
choices depend on arbitrary labels.
Require, for every measurable set $B$ of legal marked events,
\[
\mathcal K(\Gamma B\mid\Gamma H_k)=\mathcal K(B\mid H_k).
\]
Terminated histories may be padded by an absorbing, label-free event.
Couple successive marked events by relabeling.  The kernel condition and
Proposition~\ref{prop:event-driven-equivariance} inductively give equal
relabeling-mapped prefix laws, and therefore
\[
\mathcal L(\tau\mid\Gamma X^0)
=\Gamma_*\mathcal L(\tau\mid X^0),
\]
where $\Gamma_*$ denotes pushforward of the trajectory law.
For stochastic decoding, require the complete-record kernel to be equivariant
as well and include its sampled outcomes in the marks; this supplies the
corresponding local-output coupling in the same argument.
Node speeds and capabilities need not be identical: their attributes travel
with their nodes, and the event kernel respects that transport.

\paragraph{Delays, loss, churn, and symmetry.}
Variable latency changes completion or delivery events while pending
snapshots remain in $P^k$.  Message loss is an explicit drop or the absence of
delivery in the paired history.  Churn relabels arrivals and their
initialization, departures, and the affected pending messages, calls, and
routes together.  These are two namings of the same event process, not a
comparison between different schedules or different membership histories.
No delay bound or scheduling fairness is needed for this structural
equality at matched event indices.  It does not imply schedule independence,
task convergence, or successful completion when information fails to arrive.
The symmetry-preservation argument of
Corollary~\ref{cor:symmetry-preservation} applies pathwise only when the
realized schedule and other exogenous inputs also preserve the symmetry.
An equivariant random scheduler can preserve symmetry in distribution while
an individual realization breaks it by activating one of two symmetric nodes
first.

\section{Training Details}
\label{app:training}

\subsection{Decision loss, corpus, and split construction}
\label{app:decision-details}

For a serialized teacher record $d=(d_1,\ldots,d_{|d|})$, the sequence
negative log-likelihood used in Equation~\ref{eq:decision-loss} expands over
assistant tokens as
\begin{equation}
\label{eq:token-loss}
\ell_{\mathrm{tok}}(\theta;\view,d)
=-\sum_{r=1}^{|d|}\log p_\theta(d_r\mid\view,d_{<r}).
\end{equation}

The decision corpus contains 6,400 synthetic and 512 train-only native local
views.  The resulting 6,912-view training split allocates 3,456 views to each
substrate.  Validation and test each contain 800 synthetic views.  The 3,456
\silo training views are task-stratified across all three benchmark levels; the
six-task lower-scale anchor in Table~\ref{tab:silo-main} is evaluation-only and
does not define the training task set.  Each synthetic curriculum bundle contains one
independently instantiated example from each of eight local-situation classes:
\emph{private-only progression} (a legal private proposal without usable incident
evidence or a commitment event), \emph{unresolved/insufficient coordination
evidence} (traces lack sufficient support or coherence for commitment, with no
verified conflict or useful new relay), \emph{fresh private evidence} (new,
locally verifiable evidence absent from the incident field), \emph{admissible
incident relay} (useful, unexpired evidence relayable without increasing trace
components), \emph{pre-commit verified conflict} (verified counterevidence to an
uncommitted candidate), \emph{commitment contradiction/reopening} (verified
counterevidence incompatible with an existing commitment),
\emph{readiness-triggered synthesis} (local readiness or constructibility
satisfied without active conflict), and \emph{safe abstention/wait} (no
warranted social update or commitment, or a required legal wait).
These are input-situation classes, distinct from the six validator-derived
mode labels.
Bundles, rather than individual examples, are assigned to splits before teacher
labeling.  This grouping controls scenario overlap across splits but does not
define a temporal trajectory; adjacent scenarios from these bundles never
contribute to $\mathcal{L}_{\mathrm{field}}$.

Teacher calls use GPT-4o at temperature 0.2.  Only canonical JSON records that
pass batch-level schema validation enter $\mathcal{D}_{\mathrm{dec}}$.  Native
decision views and all sequential rollouts use collection seeds disjoint from
formal evaluation.

\subsection{Anonymous-orbit construction}
\label{app:orbit-details}

For each decision example and epoch, we sample a view-local bijection over
opaque channel and claim identifiers together with an independent
incident-record permutation.  The mapping is applied consistently to every
schema-designated identifier occurrence in the private proposal and evidence,
incident traces, local-event claim fields, commitment, and structured teacher
record; free-form task content and native action tokens are unchanged.  We
retain the original example and its transformed partner in the same minibatch.  A pair is admitted
only when schema validation and all local freshness, relay, challenge, and
target-matching relations agree with those of the source pair.

The identifier-free decision summary contains 12 factorized labels: task-action
class, execution intent, derived semantic act, communication act, claim source,
channel scope, novelty, support, conflict, TTL, commitment action, and
commitment confidence.  Every inapplicable factor has an explicit null class;
let $\mathcal{A}$ index these factors.  Orbit heads are linear projections of
the final local-view hidden state.  They provide training gradients only and
are absent from the deployed checkpoint.

Writing $Q_{\theta,\psi}^{(k)}$ for factor $k$, the compact orbit objective in
Equation~\ref{eq:orbit-loss} expands as
\begin{align}
\label{eq:orbit-loss-components}
\mathcal{L}_{\mathrm{orbit}}
=\mathbb{E}_{(\view,d^\star),g}
\Bigg[\ell_{\mathrm{tok}}(\theta;g\view,gd^\star)
+\frac{1}{|\mathcal{A}|}\sum_{k\in\mathcal{A}}
\Big[&\operatorname{CE}\!\left(
Q_{\theta,\psi}^{(k)}(\view),\rho_k(d^\star)\right)
\nonumber\\[-2pt]
+&\operatorname{CE}\!\left(
Q_{\theta,\psi}^{(k)}(g\view),\rho_k(d^\star)\right)
\nonumber\\[-2pt]
+&\beta D_{\mathrm{SKL}}\!\left(
Q_{\theta,\psi}^{(k)}(\view),
Q_{\theta,\psi}^{(k)}(g\view)\right)\Big]\Bigg].
\end{align}
The first term reuses the transformed teacher record; it requires no additional
teacher call.  The two cross-entropies constitute
$\mathcal{L}_{\mathrm{sup}}$ in the compact notation, while the last term
aligns the corresponding factor distributions, with
$D_{\mathrm{SKL}}(p,q)=\tfrac12D_{\mathrm{KL}}(p\Vert q)
+\tfrac12D_{\mathrm{KL}}(q\Vert p)$.

\subsection{Sequential transition collection}
\label{app:field-details}

After fitting the common behavior warm start with
$\mathcal{L}_{\mathrm{dec}}$ only, we execute complete episodes with the fixed
benchmark adapters and runtime on training-only seeds, covering Pursuit,
Foraging, and \silo.  Immediately
before an active node decodes, we store $\view_i^t$; after validation we store
the admitted $d_i^t$; and after the synchronous batch is realized we store the
next view $\view_i^{t+1}$ for the same node.  Terminal records without a
well-defined next local update are masked.  The field target is computed
deterministically from $\view_i^{t+1}$ using Equation~\ref{eq:field-summary}.
No synthetic curriculum adjacency contributes to this corpus.

$\operatorname{Hist}_{n}$, $\operatorname{Hist}_{s}$, and
$\operatorname{Hist}_{x}$ are six-way normalized histograms with one null
class and bins 0--4; $\operatorname{Hist}_{T}$ has one null class and TTL bins
0--8.  A view with no incident traces places all histogram mass on the null
class.  Readiness has two classes (not-ready/ready), and commitment has six
classes (empty or confidence 0--4), so the target width is independent of
population size and incident ordering and invariant to opaque aliases.

The predictor concatenates $h_\theta(\view_i^t)$, the final current-view hidden
state, with learned 32-dimensional embeddings of the factors in
$\rho(d_i^t)$, applies a 512-dimensional GELU layer with dropout 0.05, and uses
one linear output head per target.  With $\mathcal{F}=\{n,s,x,T\}$ and
$\mathcal{B}=\{r,\kappa\}$, the grouped loss in
Equation~\ref{eq:field-loss} expands as
\begin{align}
\label{eq:field-loss-components}
\mathcal{L}_{\mathrm{field}}
=\mathbb{E}_{(\view_i^t,d_i^t,\view_i^{t+1})\sim\mathcal{D}_{\mathrm{tr}}}
\Bigg[&\frac14\sum_{k\in\mathcal{F}}
\operatorname{CE}\!\left(
 f_{\omega}^{(k)}(h_\theta(\view_i^t),\rho(d_i^t)),
 \operatorname{Hist}_{k}(\view_i^{t+1})\right)\nonumber\\
+&\frac12\sum_{k\in\mathcal{B}}
\operatorname{CE}\!\left(
 f_{\omega}^{(k)}(h_\theta(\view_i^t),\rho(d_i^t)),
 m_k(\view_i^{t+1})\right)\Bigg].
\end{align}
Histogram terms use soft-label cross-entropy and state terms use categorical
cross-entropy.  The factors $1/4$ and $1/2$ average within the histogram and
state families, respectively, giving the two groups equal total weight
($\lambda_{\mathrm{state}}=1$) rather than allowing the four histogram targets
to dominate by count.  Targets are extracted deterministically from the next
view and require no teacher call.

We stratify the transition sampler into ordinary and eventful updates.  The
latter includes verified conflict, challenge, commitment, reopening, and expiry
transitions.  This makes local-field prediction sensitive to reorganization
rather than being dominated by unchanged background fields.

\subsection{Optimization}

The student is Qwen3-4B-Instruct-2507 \citep{yang2025qwen3} with a rank-32
LoRA adapter \citep{hu2022lora}.  Stage one optimizes
$\mathcal{L}_{\mathrm{dec}}$ and supplies the shared
behavior warm start used by every factorial arm.  Stage two interleaves
decision and transition minibatches and optimizes Equation~\ref{eq:scd}.  The two auxiliary heads share the adapted backbone but are removed after
training.  All factorial arms use minimum validation decision loss for
checkpoint selection.  Deployed equivariance diagnostics and frozen-probe field
loss are computed only after selection, on data excluded from controller
training and checkpoint choice; formal benchmark episodes remain untouched.

\begin{table}[t]
\centering
\caption{\scd configuration shared across matched training arms.}
\label{tab:training}
\small
\begin{tabular}{ll}
\toprule
Component & Value \\
\midrule
Teacher / student & GPT-4o (temperature 0.2) / Qwen3-4B-Instruct-2507 \\
Initial policy & Common semantic-record warm start \\
Decision-task coverage & \shortstack[l]{\swarmbench Pursuit/Foraging;\\\silo} \\
Adaptation & LoRA rank 32, alpha 64, dropout 0.05 \\
Target modules & q/k/v/o and gate/up/down projections \\
Decision train / validation & 6,912 / 800 local views \\
Transition train / validation & 14,256 / 4,752 adjacent triplets \\
Eventful transition fraction & 0.50 \\
Stage-two budget & 648 optimizer updates ($\approx1.5$ source-data passes) \\
$\lambda_{\mathrm{orbit}}$ / $\lambda_{\mathrm{field}}$ & 0.05 / 0.05 \\
$\lambda_{\mathrm{state}}$ & 1.0 (equal histogram/state group weight) \\
Orbit consistency coefficient $\beta$ & 0.10 \\
Auxiliary warm-up / ramp & 8 / 32 updates \\
Optimizer & AdamW; weight decay 0.01 \\
Adapter / auxiliary learning rate & $2\!\times\!10^{-6}$ / $5\!\times\!10^{-4}$ \\
Effective global batch & 16 records (32 source/transformed sequences) \\
Precision / maximum context & bfloat16 / 16,384 tokens \\
Checkpoint selection & Minimum validation decision loss \\
Deployment & Greedy decoding; auxiliary heads removed \\
Hardware & Two NVIDIA H800 GPUs \\
\bottomrule
\end{tabular}
\end{table}

\section{SCD Training and Replication Protocols}
\label{app:scd-ablation}

\subsection{Primary Factorial and Downstream Protocol}
\label{app:scd-primary-factorial}

The factorial comparison uses five paired training seeds.  Within a seed, all four cells share the
same semantic-record warm start, source records, transformed pairs, adjacent
transition corpus, deterministic minibatch schedule, token budget, and 648
optimizer updates.  Each update uses the same source and transition record
indices; orbit transforms follow the same deterministic seed schedule.  Disabled losses are computed but
multiplied by zero before backpropagation; no arm replaces an auxiliary loss
with additional decision supervision.  Checkpoints are selected solely by
validation decision loss, after which structural diagnostics are evaluated on
untouched held-out data.

\paragraph{Deployed equivariance diagnostics.}
For each held-out local view $\view$, we sample a non-identity anonymous
transformation $g$ whenever the orbit is nontrivial.  The final greedy decoder
(not the training-time orbit head) independently emits $d_\theta(\view)$ and
$d_\theta(g\view)$.  After inverse-mapping the transformed output, summary
agreement compares their identifier-free decision summaries.  Exact
decoded-record equivariance instead requires equality of the complete canonical
records; exact admitted-record equivariance repeats the test after the same
validator-admission map is applied.  The three held-out rates test increasingly
strict agreement under incident permutations and fresh opaque-handle
bijections.

\paragraph{Frozen-probe field loss.}
We delete the training-time field predictor and freeze the selected controller
backbone for every factorial cell.  For each frozen checkpoint, we then fit a
fresh linear probe with the same architecture, probe-training transitions,
preprocessing, initialization schedule, optimizer, and update budget.  No
probe is reused across controllers.  Evaluation uses an independent held-out
transition split, with the eventful subset defined by verified conflict,
challenge, commitment, reopening, or expiry.  We normalize all-transition and
eventful losses separately by the corresponding \textsc{Decision only} value.
This capacity-matched protocol tests field information accessible in the frozen
deployed representation without rewarding a controller for having trained its
original auxiliary head.

Table~\ref{tab:scd-objective-attribution} consolidates the objective,
deployment, and closed-loop diagnostics. Except for the two exact-equivariance
rates, entries report mean $\pm$ sample s.d. over five paired training seeds.
Train-head losses are raw optimization diagnostics; because those heads are
removed at deployment, only deployed agreement rates and fresh frozen probes
provide deployment-facing structural evidence.

The conventional-quality matrix compares the two endpoint arms,
\textsc{Decision only} and \method, and contains
1,020 episodes: five training seeds and two arms evaluated on \silo at
$N=64$ and on Pursuit and Foraging at $N\in\{32,64\}$, with matched held-out
cases in every task--population cell.  Table~\ref{tab:scd-downstream-cells}
gives the resulting aggregates; its confidence bounds use 10,000 hierarchical
paired-bootstrap resamples, first over training seeds and then over matched
held-out cases within each task--population stratum.  The separate
late-counterevidence matrix evaluates all four factorial arms on Pursuit and
\silo III-26 at $N\in\{32,64\}$ over ten matched held-out cases.  Across five
training seeds, this yields 800 episodes, or 200 matched settings per arm.
Recovery is the fraction of these settings in which verified counterevidence
reopens the old commitment and is followed by synthesis of a replacement
supported by the new evidence.  These settings provide the final three columns
of Table~\ref{tab:scd-objective-attribution}.  All
downstream runs remove the auxiliary heads and use identical validators,
transports, and native compilers.

\begin{table}[!htbp]
\centering
\caption{Matched conventional task quality under the same runtime; $\Delta$
denotes \method minus \textsc{Decision only}.}
\label{tab:scd-downstream-cells}
\small
\setlength{\tabcolsep}{6pt}
\resizebox{0.90\textwidth}{!}{%
\begin{tabular}{lrrrrr}
\toprule
Benchmark & \textsc{Decision only} quality & \method quality & $\Delta$ quality
& 95\% lower bound & \method/\textsc{Decision only} \\
\midrule
\silo & $0.675$ & $\mathbf{0.711}$ & $\mathbf{+0.036}$ & $0.007$ & $1.053$ \\
\swarmbench & $4.09$ & $\mathbf{4.43}$ & $\mathbf{+0.34}$ & $0.06$ & $1.083$ \\
\bottomrule
\end{tabular}
}
\end{table}
\FloatBarrier

\paragraph{Paired deployment and reorganization effects.}
Table~\ref{tab:scd-paired-effects} reports complete \scd relative to
\textsc{Decision only}, separating uncertainty in the paired differences
from the per-arm sample s.d. in Table~\ref{tab:scd-objective-attribution}.
The structural endpoints use their respective held-out diagnostic sets;
the behavioral endpoints use the late-counterevidence settings described above.
The paired 95\% intervals use 10,000 hierarchical bootstrap resamples:
first resample the five paired training seeds, then matched held-out
views or transitions for structural endpoints, or matched cases within
task--population strata for behavioral endpoints.  Each resample preserves
method pairing and recomputes the contrast using the original metric
definitions and task weights.

\begin{table}[H]
\centering
\caption{Paired deployment and reorganization effects of complete \scd.
$\Delta$ is \method minus \textsc{Decision only}; agreement and recovery
differences are in percentage points (pp).  Endpoint definitions follow
Table~\ref{tab:scd-objective-attribution}.}
\label{tab:scd-paired-effects}
\fontsize{8}{9.2}\selectfont
\setlength{\tabcolsep}{4pt}
\renewcommand{\arraystretch}{1.10}
\begin{tabular}{@{}lrrrr@{}}
\toprule
\rowcolor{tableheaderbg}
Endpoint & \textsc{Decision only} & \method & Paired $\Delta$ & Paired 95\% CI \\
\midrule
Summary agreement $\uparrow$ & 72.4\% & \textbf{95.2\%} & $+22.8$ pp & $[+18.7,+26.6]$ \\
Eventful frozen-probe loss $\downarrow$ & 1.000 & \textbf{0.608} & $-0.392$ & $[-0.435,-0.350]$ \\
Recovery $\uparrow$ & 80.0\% & \textbf{88.5\%} & $+8.5$ pp & $[+2.0,+15.0]$ \\
Stale-field AUC $\downarrow$ & 22.768 & \textbf{19.237} & $-3.531$ & $[-4.85,-2.15]$ \\
Post-conflict msg./agent $\downarrow$ & 7.157 & \textbf{6.737} & $-0.420$ & $[-0.61,-0.23]$ \\
\bottomrule
\end{tabular}
\end{table}

\subsection{Replication with a Second Student Backbone}
\label{app:llama-backbone-replication}

We repeat the endpoint comparison with Llama-3.1-8B-Instruct
\citep{grattafiori2024llama3}, reusing the same GPT-4o teacher records,
canonical interface, LoRA configuration, \scd coefficients, 648-update
stage-two budget, checkpoint rule, and greedy deployment protocol.  Only the
tokenizer and chat template follow the Llama-native format; memory-preserving
microbatching keeps the effective batch unchanged.  From a common
decision-distillation warm start, we train \textsc{Decision only} and complete
\scd with five paired training seeds.

Structural evaluation uses the same 800 held-out local views and frozen-probe
protocol.  Behavioral evaluation reuses the late-counterevidence intervention
on Pursuit and \silo III-26 at $N\in\{32,64\}$ across the five paired training
seeds.  The conventional-quality
evaluation reports \silo success at $N=64$ over matched held-out
cases.  Table~\ref{tab:llama-backbone-replication}
shows that both the learned structure and its behavioral consequence replicate
with the second student family: orbit agreement rises by 17.0 points, eventful
probe loss falls by 34.0\%, and recovery rises by 7.5 points while stale-field
burden and post-conflict traffic both decrease.

\begin{table}[h!]
\centering
\caption{Replication of \scd with Llama-3.1-8B-Instruct; eventful field loss is
normalized to \textsc{Decision only}.}
\label{tab:llama-backbone-replication}
\small
\setlength{\tabcolsep}{4.5pt}
\resizebox{\textwidth}{!}{%
\begin{tabular}{lrrrrrr}
\toprule
Objective
& \shortstack{Deployed orbit\\agreement (\%) $\uparrow$}
& \shortstack{Eventful frozen-probe\\field loss $\downarrow$}
& \shortstack{\silo $N=64$\\success $\uparrow$}
& \shortstack{Late-conflict\\recovery $\uparrow$}
& \shortstack{Stale-field\\AUC $\downarrow$}
& \shortstack{Post-conflict\\messages/agent $\downarrow$} \\
\midrule
\textsc{Decision only}
& $77.0$ & $1.000$ & $0.660$ & $80.0\%$ & $23.0$ & $7.25$ \\
\textbf{\method}
& \bestcell{$94.0$} & \bestcell{$0.660$} & \bestcell{$0.673$}
& \bestcell{$87.5\%$} & \bestcell{$20.1$} & \bestcell{$6.95$} \\
\midrule
Change
& $+17.0$ pp & $-34.0\%$ & $+0.013$ & $+7.5$ pp & $-12.6\%$ & $-4.1\%$ \\
\bottomrule
\end{tabular}
}
\end{table}
\FloatBarrier

\section{Benchmark Protocol}
\label{app:protocol}

\subsection{\swarmbench}

We evaluate all five released \swarmbench task families.  The main matrix
reports Pursuit, Foraging, and Synchronization at $N\in\{16,32,64\}$ over five
matched seeds; Flocking and Transport are reported under the same protocol in
Appendix~\ref{app:population-summaries}.  The lower-scale anchor uses $N=8$ and
ten held-out seeds.
The adapter gives all active agents a local snapshot from the same pre-update
environment state in one batch each round and invokes the resident controller.
Egocentric observations generate a private
movement proposal.  Incident edge signals are projected into binned traces with
freshly randomized local channel handles.  The runtime deterministically resolves the concrete native move from the
validated record and local state, and emits at most one compact outbound trace
per local decision.  Admitted writes are staged with native actions and become
incident evidence only after the subsequent environment update.

The fixed candidate-ready gate requires two or more incident channels carrying
the same claim, total support at least eight, zero conflict, and
support-weighted directional coherence at least 0.6.  These are fixed adapter
settings held constant across the formal matrix, not learned controller
parameters.  The gate counts incident channels rather than distinct evidence
origins: it is deliberately a two-channel readiness test, not a distinct-origin
or independent-source test.

The formal \swarmbench system combines:
\begin{center}
\small
Qwen3-4B LoRA + \method shared local controller\\
+ local-fingerprint duplicate suppression + per-hop-capped relay\\
+ fixed native action compilation + verified-conflict inhibition.
\end{center}
Pursuit refreshes moving spatial evidence after twelve rounds; Foraging treats
each stationary spatial event as event-once evidence.

\subsection{\silo}

The primary \silo matrix evaluates 30 official tasks, with ten tasks in each
of Levels I--III.  Every method is run at $N\in\{16,32,64\}$ with five matched
evaluation seeds, yielding 450
task--population--seed episodes per method for
Table~\ref{tab:g3-cross-substrate-core}.  At each population size, official
success and receiver-delivered messages are macro-averaged over the complete
task--seed set.  Every episode has nine local communication rounds.  A directed
frame jointly carries one compact trace, proposal intent, and one task
snapshot.  The official case generator, task solver, partial-correctness
criterion, and success metric are retained.  Neither controller nor adapter
receives a population roster or global transcript.  \silo does not invoke the
spatial transport layer: the shared learned controller law writes through
finite-lifetime native incident transport.

The communication-horizon sweep fixes $N=64$ and evaluates the same task suite on
eight held-out seeds at each
$R\in\{3,6,9,12,18\}$.  Each horizon therefore contains 240 matched
task--seed units per method; task instances and seeds are held fixed across
horizons so that the curves vary only the available interaction budget.

The six-task subset I-04/I-09, II-11/II-16, and III-23/III-26 is used only for
the independent lower-scale evaluation anchor at $N\in\{4,8\}$ with ten
held-out seeds (Table~\ref{tab:silo-main}); it defines neither the training corpus
nor the primary matrix.
Focused intervention experiments state their narrower task scope separately.

\subsection{Baseline implementations}
\label{app:baseline-implementations}

All methods use the same formal task instances, population conditions,
evaluation seeds, benchmark-facing legal task-action spaces, evaluators, and, where
applicable, interaction horizons.  The interface-matched controls additionally share
\method's local topology and private proposals.  \nocomm suppresses explicit
outbound inter-agent messages; \nativemsg exchanges transient task-native messages
over the same local contacts, without a separate persistent semantic field,
quorum, or commitment mechanism; and \fixedswarm replaces the learned law with a
shared hand-coded local rule using finite-lifetime local signals and fixed support,
inhibition, quorum, reopening, decay, and expiry.

Broader baselines instead retain their native information-flow and
private-computation interfaces subject to the configurations below.
A benchmark-native private proposal is reused
only when doing so leaves the baseline unchanged; otherwise the method receives
the same underlying private task observation and legal action space.  Adapters
map these inputs into the native organizational loop and map task-facing
decisions back to benchmark-native actions or final answers; they do not replace
the baseline's graph construction, agent composition, routing, activation, or
role mechanism with \method's runtime.  Consequently, native differences in model calls remain part of each
method, while receiver-delivered messages measure communication rather than
total inference cost; model-call and token telemetry is recorded separately.
No method receives auxiliary calls or evaluation-answer access outside this
recorded execution.

We use a pinned release from the authors when it supports the target substrate;
otherwise Table~\ref{tab:baseline-implementations} marks a clean-room
reimplementation based on the paper and released artifacts; exact commits and
integration changes are recorded in the artifact.  Published defaults are
used when transferable.  Other substrate-specific choices use a disjoint
development split and one selection metric per benchmark family, never formal
evaluation seeds; they are not tuned per seed or population unless the original
method explicitly conditions on population size.  Model parameters,
hyperparameters, checkpoints, and selection rules are fixed before evaluation.
Algorithm-specified organizational state may still change within an episode and
is reset between episodes.

\paragraph{Coverage of organizational adaptation.}
The broader comparisons span optimized information-flow graphs (\gptswarm),
task-conditioned topology (\gdesigner), joint crew and topology generation
(\argdesigner), within-execution graph adaptation (\agentnet), step-wise
activation (\dmoa), and adaptive profile-based allocation (\swarmsys).
These complement the interface-matched controls and the distillation baseline.
Table~\ref{tab:baseline-implementations} distinguishes development-time
optimization, per-task generation, and within-execution adaptation, and
records the mechanisms retained under benchmark integration.  In particular,
our \gptswarm topology is selected on development cases and frozen for formal
evaluation, whereas \agentnet retains within-execution graph updates.

Related frameworks outside the quantitative matrix include DyLAN, with
contribution-based team selection and dynamic communication
\citep{liu2024dynamic}, and AgentVerse, with feedback-driven adjustment of
expert composition \citep{chen2024agentverse}.  CAMEL emphasizes role-prompted
dialogue \citep{li2023camel}.  OpenAI Swarm provides an agent-and-handoff
orchestration interface \citep{openai2024swarm}; a quantitative baseline would
require specified task-level prompts, tools, and handoff rules.

\paragraph{Agent Distillation (\agentdistill).}
We use the authors' official Agent Distillation implementation
\citep{kang2025distilling} in a budget-matched configuration.
Qwen3-4B-Instruct-2507 is retrained from the same base checkpoint on GPT-4o
reason--action--observation trajectories from the training splits of
\swarmbench Pursuit/Foraging and \silo, with no Flocking or Transport
training.  We retain the published trajectory SFT loss, masking observation
tokens, and train rank-64 LoRA adapters on all linear layers for two epochs
with batch size eight and learning rate $2\times10^{-4}$.  One resulting
checkpoint is shared by all $N$ agents.  No formal-evaluation seeds are used
for tuning or selection.

The primary comparison disables first-thought prefix (FTP) and self-consistent
action generation (SAG), using one greedy candidate and one student call per
active-agent update.  External retrieval and code-tool calls are both zero.
Agents exchange transient benchmark-native messages without \method's
persistent field, quorum, commitment, or reopening operations.
Communication is counted as receiver-delivered messages per agent;
Appendix~\ref{app:inference-accounting} reports deployment calls and relative
token counts separately, excluding offline teacher generation and training.

\begin{table}[!t]
\centering
\caption{Implementations of broader baselines in the primary matrix.}
\label{tab:baseline-implementations}
\footnotesize
\setlength{\tabcolsep}{3.2pt}
\renewcommand{\arraystretch}{1.10}
\begin{tabularx}{\textwidth}{@{}
>{\raggedright\arraybackslash}p{0.15\textwidth}
>{\raggedright\arraybackslash}p{0.16\textwidth}
>{\raggedright\arraybackslash}p{0.22\textwidth}
>{\raggedright\arraybackslash}p{0.20\textwidth}
>{\raggedright\arraybackslash}X@{}}
\toprule
\rowcolor{tableheaderbg}
Method & Native object & Benchmark integration
& Retained mechanism & Implementation / tuning \\
\midrule
\gptswarm
& Computational DAG of operation nodes and information-flow edges.
& Map proposal-producing agents or worker operations to DAG nodes; route over
legal contacts; translate benchmark actions or final answers at the boundary.
& Edge-probability optimization and topological execution.
& Clean-room; select a population-matched topology on disjoint development
cases, then freeze it for formal evaluation. \\
\gdesigner
& Task-conditioned communication graph over agents and a virtual task node.
& Map task information to node features; decode a topology over legal contacts;
translate exchanged content and benchmark actions or final answers.
& Graph encoder/decoder, virtual task node, and learned topology generation.
& Pinned released code; published defaults or disjoint-development selection;
frozen parameters with per-task topology decoding. \\
\argdesigner
& Autoregressively generated agent composition and communication topology.
& Map task information and the available agent pool to the graph generator;
execute the generated collaboration graph within the benchmark-legal interaction space.
& Task-conditioned autoregressive generation of crew composition, roles, and
communication links.
& Pinned released code; published defaults; frozen generator with per-task
graph generation. \\
\agentnet
& Decentralized, dynamically evolving agent DAG.
& Feed benchmark observations and outcomes into the native execution loop;
routing evolves within the legal interaction space.
& Forward/Split/Execute routing, local memory, and outcome-based edge updates.
& Clean-room; fixed defaults across populations; within-execution graph
updates retained and reset per episode. \\
\dmoa
& Sparse step-wise activation over an expert-agent pool.
& Give the router benchmark-valid context and history; selected agents return
benchmark actions or final answers through the benchmark boundary.
& Context/history-conditioned recurrent routing and sparse activation.
& Clean-room; published defaults or disjoint-development selection; frozen
router with per-step activation retained. \\
\swarmsys
& Explorer--Worker--Validator population with adaptive profiles.
& Map benchmark events into the native role loop; profile matching allocates
work; translate benchmark actions or final answers.
& Specialized roles, profile--event matching, and validation-driven
reinforcement.
& Clean-room; fixed defaults across populations; online matching and
validation feedback retained and reset per episode. \\
\agentdistill
& Distilled reason--action--observation policy.
& Shared checkpoint across $N$ agents; local observations, native task actions,
and transient inter-agent messages.
& Trajectory SFT with observation masking; one greedy call per active-agent update.
& Authors' official code; train-only retraining; budget-matched configuration
with FTP/SAG and external tools off; no evaluation tuning. \\
\bottomrule
\end{tabularx}
\end{table}

\FloatBarrier
\subsection{Paired uncertainty}
\label{app:paired-uncertainty}

Unless a caption states otherwise, reported paired confidence intervals use
10,000 bootstrap resamples while keeping the compared methods coupled within a
matched unit.  Fixed-controller benchmark contrasts resample matched
task--seed units within benchmark strata.  Comparisons spanning independently
trained controllers instead use the hierarchical procedure specified in
Appendix~\ref{app:scd-primary-factorial}: training seeds are resampled first and matched
environment seeds second within task--population strata.
Key paired effects are $+0.07$ (95\% CI $[+0.02,+0.12]$) for \method versus
\textsc{Decision only} on AgentsNet strict solved fraction, $+0.06$
($[+0.01,+0.11]$) versus \fixedswarm on the same endpoint, and $-0.017$
($[-0.029,-0.005]$) for Joint minus the oracle specialist on \silo success at
$N=64$.

\begin{table}[H]
\centering
\caption{Paired uncertainty for the primary scaling contrasts.
The comparator has the strongest non-\method point estimate for each
endpoint and population in Table~\ref{tab:g3-cross-substrate-core}.
$\Delta$ is \method minus comparator; \silo differences and intervals are
in percentage points (pp).}
\label{tab:primary-scaling-paired}
\fontsize{8}{9.2}\selectfont
\setlength{\tabcolsep}{3.2pt}
\renewcommand{\arraystretch}{1.08}
\begin{tabular}{@{}lclrrrr@{}}
\toprule
\rowcolor{tableheaderbg}
Endpoint & $N$ & Comparator & Score & \method & $\Delta$ & 95\% CI \\
\midrule
\silo SR (\%) & 16 & \agentdistill & 79.0 & 86.3 & $+7.3$ & $[+3.1,+11.5]$ \\
& 32 & \fixedswarm & 69.7 & 79.0 & $+9.3$ & $[+4.7,+13.8]$ \\
& 64 & \agentdistill & 43.7 & 58.2 & $+14.5$ & $[+8.2,+20.5]$ \\
\midrule
\swarmbench norm. & 16 & \agentdistill & 0.439 & 0.540 & $+0.101$ & $[-0.01,+0.21]$ \\
& 32 & \agentdistill & 0.425 & 0.546 & $+0.121$ & $[+0.01,+0.23]$ \\
& 64 & \argdesigner & 0.880 & 1.041 & $+0.161$ & $[+0.05,+0.28]$ \\
\bottomrule
\end{tabular}
\end{table}

Table~\ref{tab:primary-scaling-paired} retains the primary task weights and
frozen normalization anchors.  At each population, \silo contributes
$30\times5=150$ matched task--seed episodes and \swarmbench contributes
$3\times5=15$ task--seed results, with five seeds within each task.
The 10,000 paired-bootstrap resamples preserve the method pairing and task
weights.  These fixed-checkpoint contrasts concern
evaluation uncertainty rather than variation across training runs.
All three \silo intervals lie above zero; the \swarmbench interval at $N=16$
includes zero.

In Table~\ref{tab:g3-cross-substrate-core}, Avg. rank averages ranks over
12 equally weighted endpoints: three population-specific \silo success rates
and nine raw \swarmbench task scores, with ties averaged.
SR/msg. pairs are ranked by SR; bold/underline duplicate the first/second-place
colors. Rankings indicate point-estimate order, not statistical significance.

\subsection{Inference accounting}
\label{app:inference-accounting}

Table~\ref{tab:deployment-inference} separates deployment inference from
receiver-delivered communication on \silo at $N=64$.  Calls are reported as
absolute episode averages; input and output token totals are each normalized
to Agent Distillation.  These deployment figures exclude offline teacher
generation and training and are not measurements of FLOPs or wall-clock time.

\begin{table}[H]
\centering
\caption{Deployment inference and communication on \silo, $N=64$.
Token ratios use Agent Distillation as the reference ($1.00\times$);
LLM calls are absolute counts per episode.}
\label{tab:deployment-inference}
\fontsize{8}{9.2}\selectfont
\setlength{\tabcolsep}{4pt}
\renewcommand{\arraystretch}{1.10}
\begin{tabular}{@{}lrrrrr@{}}
\toprule
\rowcolor{tableheaderbg}
Method & SR (\%) $\uparrow$ & \shortstack{LLM calls /\\episode $\downarrow$}
& \shortstack{Input tokens\\(relative) $\downarrow$}
& \shortstack{Output tokens\\(relative) $\downarrow$} & Msg./agent $\downarrow$ \\
\midrule
Agent Distillation & 43.7 & 576 & $1.00\times$ & $1.00\times$ & 8.50 \\
\dmoa & 21.5 & 806 & $1.47\times$ & $1.34\times$ & 3.60 \\
\swarmsys & 26.7 & 864 & $1.62\times$ & $1.48\times$ & 9.00 \\
\method & 58.2 & 577 & $1.13\times$ & $1.09\times$ & 8.03 \\
\bottomrule
\end{tabular}
\end{table}

\method uses nearly the same number of calls as Agent Distillation, with
13\% more input tokens and 9\% more output tokens, while improving success
by 14.5 percentage points.  \dmoa delivers fewer messages but requires more
calls and tokens.  Communication savings therefore do not imply proportional
inference savings.  For \method, $A$ active-agent updates require
$C=A+C_{\mathrm{regen}}$ controller calls, where
$0\leq C_{\mathrm{regen}}\leq A$ under the one-regeneration limit
(Appendix~\ref{app:held-out-decisions}).  A no-message decision still requires
inference, and bounded local input does not make total population compute
independent of $N$.

\FloatBarrier
\section{Frozen Transfer to Randomized-Anonymous \textsc{AgentsNet}}
\label{app:agentsnet-rp}

This experiment asks whether a controller learned only from spatial and
distributed-language substrates transfers, without parameter updates, to an
unseen graph-structured coordination substrate.  We construct
\agentsnetrp from the released AgentsNet benchmark
\citep{grotschla2025agentsnet}.  The controlled adaptation retains the graph
instances, five task definitions, paper-specified round budgets, and official
task evaluators, while replacing persistent node names with opaque incident
handles and supplying private episode-local randomness for symmetry breaking.
This setting tests transfer of the learned law through a new deterministic
adapter.  AgentsNet is distinct from the singular \agentnet system baseline
used in the primary comparison matrix.

\paragraph{Protocol.}
Formal evaluation covers \textsc{Coloring}, \textsc{Consensus},
\textsc{LeaderElection}, \textsc{Matching}, and \textsc{VertexCover} on the
official \textsc{SmallWorld}, \textsc{ScaleFree}, and \textsc{Delaunay} graph
families.  We use the released $N\in\{8,16\}$ instances, with three graphs for
each size--topology pair, yielding
\begin{equation}
5\ \text{tasks}\times3\ \text{families}\times2\ \text{sizes}
\times3\ \text{graphs}=90
\end{equation}
matched settings per method.  Released $N=4$ graphs are used only to validate
schema legality, task-action mapping, and propagation of the priority encoding;
they are excluded from LoRA training, \scd transition collection, loss-weight
selection, checkpoint selection, and performance-driven prompt or threshold
tuning.

We follow the communication horizons specified in AgentsNet.  Coloring,
Matching, and VertexCover use five synchronous rounds at $N=8$ and six at
$N=16$; LeaderElection and Consensus use $2D+1$ rounds for graph diameter $D$.
Each node accumulates non-private information only through incident messages,
as in the benchmark's randomized-LOCAL-inspired protocol.

\paragraph{Episode-local randomized anonymity.}
At the start of each episode, node $i$ receives
\begin{equation}
u_i\sim\operatorname{Uniform}\!\left(\{0,\ldots,2^{64}-1\}\right),
\end{equation}
with collisions resampled so that priorities are unique within that episode.
Each node initially sees only its own $u_i$.  Priorities are fixed within an
episode, resampled between episodes, never exposed as a roster, and available
to another node only after ordinary incident-local propagation.  All methods
share the same graph and priority realization in each matched setting.  The
task contract states only that smaller priorities may serve as episode-local
tie breakers.  The runtime validates and executes the controller's legal task
action; it neither propagates priorities automatically nor selects a color,
partner, coordinator, leader, or consensus value.  Priority therefore provides
private randomness rather than persistent identity or an assigned role.

The same contract supports stable retreat from adjacent coloring conflicts,
partner contention in matching, and redundant coordinators in vertex cover;
LeaderElection and Consensus may propagate the minimum observed priority as a
common anchor.  These uses are controller decisions rather than runtime rules.

\paragraph{Frozen transfer and comparisons.}
All model parameters, validator behavior, and decoding parameters are frozen
before any AgentsNet instance is accessed.  No AgentsNet teacher
record, rollout transition, or evaluation answer contributes to training or
checkpoint selection, and the training-only \scd heads are absent at
deployment.  A deterministic adapter exposes the task contract, legal
task-action vocabulary, opaque incident handles, received traces, a
remaining-round bin, and the focal node's private priority through the
canonical local interface.  It maps validated decisions to the official answer
format without choosing the solution.

We compare frozen \method with two shared-controller variants: Qwen3-4B
zero-shot applies the unadapted base checkpoint to the same local prompt and
decision schema without distillation, while \textsc{Decision only} removes
orbit and field supervision.  The interface-matched controls are \nativemsg,
which sends transient task-native messages without a persistent field;
\fixedswarm, which applies fixed support, inhibition, quorum, and decay; and
\nocomm.  Every method shares graphs, priorities, contracts, round budgets,
legal actions, and final evaluators.  The primary endpoint is official strict
solved fraction, macro-averaged across tasks.  We also report task-wise scores
using the official soft evaluators and receiver-delivered messages per agent.  Paired uncertainty resamples
matched task--graph settings while stratifying by graph size and topology.

In Table~\ref{tab:agentsnet-transfer}, scale retention is the $N{=}16$ to
$N{=}8$ strict-solved ratio. Avg. rank averages ranks over five task scores and
two size-specific strict-solved endpoints, with equal weights and averaged
ties. Bold/underline repeat the color ranking; traffic ranks exclude \nocomm.

\begin{table}[t]
\centering
\caption{LeaderElection with and without episode-local priority; all else fixed.}
\label{tab:agentsnet-priority}
\small
\setlength{\tabcolsep}{4pt}
\begin{tabular}{lccc}
\toprule
\rowcolor{tableheaderbg}
Variant & $N=8$ & $N=16$ & Overall \\
\midrule
\method without priority & 0.61 & 0.39 & 0.50 \\
\method with priority & \bestcell{0.89} & \bestcell{0.71} & \bestcell{0.80} \\
\bottomrule
\end{tabular}
\end{table}

\paragraph{Results.}
Table~\ref{tab:agentsnet-transfer} shows that frozen \method leads on four of
five task-specific soft scores and on the strict Overall endpoint; \fixedswarm
is strongest only on VertexCover.  Within the shared-controller family,
overall solved fraction rises from 0.48 zero-shot to 0.54 after decision
distillation and 0.61 with complete \scd; \fixedswarm reaches 0.55.  The
corresponding matched effects are approximately
$+0.07$ (95\% CI $[+0.02,+0.12]$) against Decision only and $+0.06$
(95\% CI $[+0.01,+0.11]$) against \fixedswarm.  The \method--Decision gap is positive at both scales and
widens from 0.05 at $N=8$ to 0.09 at $N=16$.  Relative to \nativemsg, \method reduces
delivered communication by $1-11.1/28.4=60.9\%$; \nocomm confirms that private
priority alone does not solve these graph-level tasks.

Table~\ref{tab:agentsnet-priority} isolates the randomized-anonymity contract
on the task that explicitly requires symmetry breaking.  Episode-local
priority raises LeaderElection from 0.50 to 0.80 overall, without changing the
learned controller or runtime.  Consistent with
Corollary~\ref{cor:symmetry-preservation}, the gain concentrates on
symmetry-obstructed settings ($+0.49$) and is much smaller when the initial
attributed graph already distinguishes a candidate ($+0.06$;
Figure~\ref{fig:agentsnet-symmetry-priority}).  Taken together, the experiment
shows that one frozen local law
transfers through a deterministic adapter to a new graph distribution and
action vocabulary, with the advantage of complete \scd persisting at the
larger graph size.

\begin{figure}[H]
\centering
\resizebox{0.27\linewidth}{!}{\input{figures/agentsnet_symmetry_priority.tex}}
\caption{\textbf{LeaderElection with and without episode-local priority,
stratified by initial symmetry.}  Symmetry-obstructed settings have no
singleton candidate orbit after priority removal and opaque-name quotienting
(10/18); the remaining eight are already asymmetric.  Bars show official soft
score.}
\label{fig:agentsnet-symmetry-priority}
\end{figure}

\FloatBarrier

\section{GPT-4o Private-Worker Transfer}
\label{app:g5-api-worker}

Table~\ref{tab:g5-api-worker-transfer} compares the quality--compute operating
points of the comparison methods under a common full-refresh
schedule.  The comparison keeps the private
worker and refresh schedule fixed across methods, isolating the operating
point of coordination rather than a schedule change.  A separate within-\method
comparison in Table~\ref{tab:g5-api-worker-reuse} removes 41.1--43.4\% of
logical GPT-4o input through private-state reuse while preserving or improving
mean quality.

\begin{table}[h]
\centering
\caption{GPT-4o full-refresh transfer at $N=32$: mean native quality
$\uparrow$ / logical input kTokens $\downarrow$ / delivered messages per agent
$\downarrow$.}
\label{tab:g5-api-worker-transfer}
\scriptsize
\setlength{\tabcolsep}{3.2pt}
\begin{tabular}{lccc}
\toprule
\rowcolor{tableheaderbg}
Method & Pursuit & \silo II-16 & \silo III-26 \\
\midrule
\textbf{\method (full refresh)}
& 4.33 / 1867.0 / 22.77
& 1.000 / 406.5 / 7.62
& 0.794 / 742.6 / 7.12 \\
\nativemsg
& 1.40 / 1856.3 / 167.88
& 1.000 / 210.3 / 9.00
& 1.000 / 411.8 / 9.00 \\
\fixedswarm
& 4.60 / 1860.0 / 29.36
& 1.000 / 210.3 / 9.00
& 1.000 / 411.8 / 9.00 \\
\gdesigner
& 3.40 / 1720.0 / 10.50
& 1.000 / 285.0 / 6.10
& 0.900 / 505.0 / 5.90 \\
\dmoa
& 3.65 / 1480.0 / 8.90
& 1.000 / 235.0 / 5.40
& 0.950 / 425.0 / 5.20 \\
\agentnet
& 3.00 / 1859.9 / 8.38
& --- & --- \\
\swarmsys
& --- & 0.388 / 136.1 / 9.00
& 0.475 / 268.5 / 9.00 \\
\bottomrule
\end{tabular}
\end{table}

\begin{table}[h]
\centering
\caption{Private-state reuse within \method at $N=32$: mean native quality /
logical input kTokens / delivered messages per agent.}
\label{tab:g5-api-worker-reuse}
\scriptsize
\setlength{\tabcolsep}{4pt}
\begin{tabular}{lccc}
\toprule
\rowcolor{tableheaderbg}
Task & Full refresh & Matched reuse & Input reduction \\
\midrule
\silo II-16 & 1.000 / 406.5 / 7.62 & 1.000 / \textbf{239.3} / 7.58 & 41.1\% \\
\silo III-26 & 0.794 / 742.6 / 7.12 & 0.957 / \textbf{420.6} / 7.27 & 43.4\% \\
\bottomrule
\end{tabular}
\end{table}

Figure~\ref{fig:g5-api-worker-tradeoff} shows the full-refresh reference
trade-off.  Marker area encodes receiver-delivered messages per agent, so a
point farther up and left with a smaller marker is preferable.  Logical input
tokens count every executed private-worker request independently of cache use.

\begin{figure}[h]
\centering
\includegraphics[width=\linewidth]{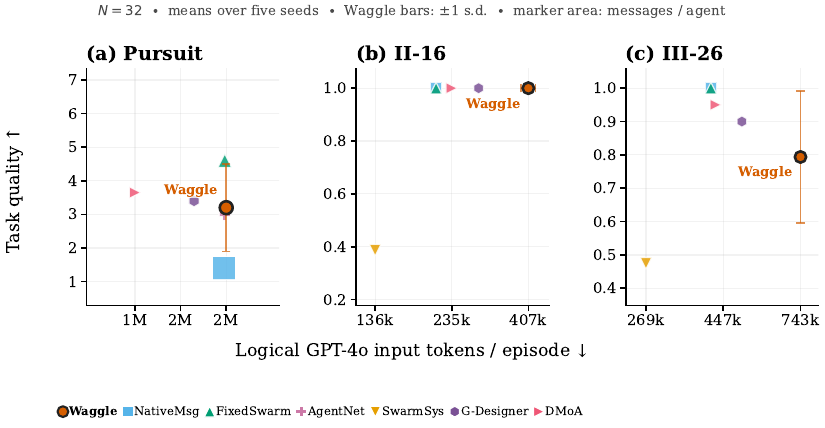}
\caption{\textbf{Full-refresh GPT-4o quality--logical-input trade-off at
$N=32$.}  Means use five matched seeds; marker area encodes delivered messages
per agent, and \method error bars show $\pm1$ sample s.d.}
\label{fig:g5-api-worker-tradeoff}
\end{figure}

\section{Focused Paired Evidence}
\label{app:g6}

\subsection{Matched private-state reuse}

The matched schedule retains the initial private solve and every
communication-conditioned packet update, but reuses each agent's latest
private state for redundant controller refreshes.  The same-seed comparison is
reported in Table~\ref{tab:g5-api-worker-reuse}; it separates this
within-\method schedule change from the common full-refresh baseline
comparison in Table~\ref{tab:g5-api-worker-transfer}.

\subsection{Eight-seed paired evaluation}

The focused paired analysis uses eight held-out seeds for the principal
contrasts.  The complete eleven-method five-seed breadth matrix remains in
Table~\ref{tab:g3-cross-substrate-core};
Figure~\ref{fig:g6-focused} concentrates the added evidence on population
behavior and private-compute efficiency.

\begin{figure}[h]
\centering
\includegraphics[width=\linewidth]{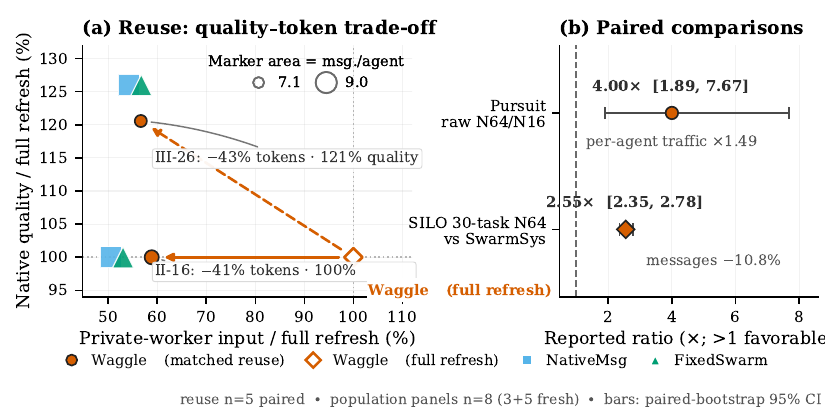}
\caption{\textbf{Focused paired evidence.}  (a) Matched private-state reuse
versus full refresh; marker area encodes delivered messages per agent.  (b)
Eight-seed paired ratios with paired-bootstrap 95\% intervals.}
\label{fig:g6-focused}
\end{figure}

\section{Joint Law versus Substrate-Specific Specialists}
\label{app:joint-specialists}

All controllers in this comparison start from Qwen3-4B-Instruct-2507 and use
the same rank-32 LoRA architecture, target modules, dropout, GPT-4o teacher,
anonymous local prompt, canonical decision schema, complete \scd objective,
optimizer, update count, checkpoint-selection rule, greedy decoding,
validator, and fixed benchmark runtime.  Thus the comparison changes the
allocation of decision-labeled local views across substrates, rather than the
controller interface or the runtime that realizes records.

Formal evaluation fixes $N=64$ and uses eight held-out seeds, yielding 16
matched \swarmbench Pursuit/Foraging task--seed units (two tasks $\times$
eight seeds) and 240
matched \silo task--seed units per deployment.
For the deliberately favorable oracle comparison, we pre-specify a practical
tolerance of $-0.030$ for the Joint-minus-oracle \silo success difference and
a minimum retention ratio of $0.950$ for the Joint-to-oracle \swarmbench score.
The paired \silo interval uses 10,000 bootstrap resamples over the matched
task--seed units; the \swarmbench contrast is reported descriptively because it
contains only 16 units.

Table~\ref{tab:joint-specialist-main} reports the scores, retention ratios, and
deployment counts summarized in Figure~\ref{fig:horizon-budget-sweep}(d).

\begin{table}[!htbp]
\centering
\caption{Joint and substrate-specific \scd deployments at $N=64$; retention is
relative to the oracle pair.}
\label{tab:joint-specialist-main}
\footnotesize
\setlength{\tabcolsep}{7.0pt}
\renewcommand{\arraystretch}{1.10}
\resizebox{0.90\textwidth}{!}{%
\begin{tabular}{lccc}
\toprule
\rowcolor{tableheaderbg}
Deployment
& \shortstack{\swarmbench P/F score /\\oracle retention $\uparrow$}
& \shortstack{\silo SR /\\oracle retention $\uparrow$}
& \shortstack{Deployed LoRA\\adapters $\downarrow$} \\
\midrule
\textbf{Joint \method}
& $\mathbf{1.133 / 96.8\%}$ & $\mathbf{0.682 / 97.6\%}$ & $\mathbf{1}$ \\
Exposure-matched specialist pair
& $1.118 / 95.5\%$ & $0.674 / 96.4\%$ & $2$ \\
\rowcolor{tablerefbg}
Oracle specialist pair (6,912 per specialist)
& $1.171 / 100.0\%$ & $0.699 / 100.0\%$ & $2$ \\
\bottomrule
\end{tabular}}

\end{table}

Exposure-matched specialists receive the joint controller's in-domain views;
oracle specialists each receive the full 6,912-view joint budget.

\begin{table}[!htbp]
\centering
\caption{Training-data allocation for joint and specialist controllers.}
\label{tab:joint-specialist-training}
\scriptsize
\setlength{\tabcolsep}{5.2pt}
\renewcommand{\arraystretch}{1.10}
\resizebox{\textwidth}{!}{%
\begin{tabular}{lccccc}
\toprule
Training regime
& \shortstack{Swarm unique\\views}
& \shortstack{Silo unique\\views}
& \shortstack{Aggregate unique\\views}
& \shortstack{Optimizer\\updates}
& \shortstack{Deployed LoRA\\adapters} \\
\midrule
Joint \method
& $3{,}456$ & $3{,}456$ & $6{,}912$ & Matched & $1$ \\
Exposure-matched specialist pair
& $3{,}456$ & $3{,}456$ & $6{,}912$ & Matched$^{\dagger}$ & $2$ \\
Oracle specialist pair (6,912 per specialist)
& $6{,}912$ & $6{,}912$ & $13{,}824$ & Matched & $2$ \\
\bottomrule
\end{tabular}}

\vspace{2pt}
\parbox{0.98\textwidth}{\footnotesize\raggedright\textit{Note.}
$^{\dagger}$ Each exposure-matched corpus is sampled with replacement to
match the joint controller's optimizer updates; no additional unique examples
are introduced.  Every \silo in-domain corpus uses task-stratified sampling.}
\end{table}

\begin{table}[!htbp]
\centering
\caption{Joint-minus-specialist effects at $N=64$; the paired 95\% interval
applies only to the \silo contrast.}
\label{tab:joint-specialist-effects}
\scriptsize
\setlength{\tabcolsep}{4.0pt}
\resizebox{0.94\textwidth}{!}{%
\begin{tabular}{lcccl}
\toprule
Comparator
& \shortstack{$\Delta$ \swarmbench P/F\\normalized score}
& \shortstack{$\Delta$ \silo\\SR}
& \shortstack{\silo paired\\95\% CI}
& Reading \\
\midrule
Exposure-matched specialist pair
& $+0.015$
& $+0.008$
& ---
& Joint is higher on both \\
Oracle specialist pair
& $-0.038$
& $-0.017$
& $[-0.029,-0.005]$
& $96.8\%$ / $97.6\%$ retained \\
\bottomrule
\end{tabular}}
\end{table}

\begin{table}[!htbp]
\centering
\caption{Late-conflict reorganization for joint and specialist controllers
(40 matched settings).}
\label{tab:joint-specialist-recovery}
\scriptsize
\setlength{\tabcolsep}{2.8pt}
\resizebox{0.86\linewidth}{!}{%
\begin{tabular}{lcccc}
\toprule
Deployment
& \shortstack{Recovery\\$\uparrow$}
& \shortstack{Re-synthesis\\latency $\downarrow$}
& \shortstack{Relative stale\\AUC $\downarrow$}
& \shortstack{Post-conflict\\messages/agent} \\
\midrule
\textbf{Joint \method} & $36/40$ & $4.5$ & $1.00\times$ & $8.2$ \\
Exposure-matched specialist pair & $35/40$ & $4.8$ & $1.05\times$ & $8.3$ \\
Oracle specialist pair & $37/40$ & $4.3$ & $0.96\times$ & $8.1$ \\
\bottomrule
\end{tabular}}

\vspace{2pt}
\parbox{0.86\linewidth}{\footnotesize\raggedright\textit{Note.} Relative
stale-field AUC is normalized by Joint \method.}
\end{table}
\FloatBarrier

\section{Same-Runtime Decision-Law Ablation}
\label{app:same-runtime}

\subsection{Decision-source comparison under a shared runtime}
\label{app:same-runtime-comparison}

For the controller ablation, the complete fixed runtime is
\[
\mathcal{R}=\{\textsc{View},\textsc{Validator},\textsc{Readiness},
\textsc{Conflict},\textsc{Transport},\textsc{Expiry},\textsc{Compiler}\}.
\]
Every decision source receives the same anonymous local view and native private
proposal, emits at most one candidate semantic record per update, and is
subject to the same admissibility checks, duplicate suppression, expiry, and
native compilation.  Learned variants use deterministic decoding with at most
one validator-code-conditioned regeneration.  The runtime may reject or
realize a record, but never originates a claim, commitment, challenge, or
communication intent.

The fixed heuristic has access to every runtime-supported operation, but uses
no learned context-dependent rule: it challenges and revokes an incompatible
commitment after verified conflict; otherwise it commits to a ready
highest-supported candidate, relays the admissible trace with highest support,
deposits fresh private evidence, or takes the native private proposal in that
order.  Zero-shot Qwen3-4B receives the identical local prompt and decision
schema without distillation.  Private-only suppresses every social record;
the random control samples only from locally admissible records.  GPT-4o sees
the same one-view interface and serves solely as a local teacher reference.

\begin{table}[!htbp]
\centering
\caption{Decision-source ablation under an identical interface and runtime;
only the semantic-record source changes.}
\label{tab:same-runtime-full}
\scriptsize
\setlength{\tabcolsep}{3.6pt}
\renewcommand{\arraystretch}{1.10}
\resizebox{\textwidth}{!}{%
\begin{tabular}{lccccc}
\toprule
Decision source
& \shortstack{\silo\\success $\uparrow$}
& \shortstack{\silo\\messages/agent $\downarrow$}
& \shortstack{\swarmbench P/F $N=64$\\normalized score $\uparrow$}
& \shortstack{Late-conflict\\recovery $\uparrow$}
& \shortstack{Re-synthesis\\latency $\downarrow$} \\
\midrule
GPT-4o local teacher
& $0.736\;[0.70,0.77]$ & $8.45$ & $1.18\;[1.08,1.28]$
& $38/40\;(95.0\%)$ & $4.0$ \\
\textbf{\method}
& $0.682\;[0.65,0.72]$ & $8.03$ & $1.13\;[1.05,1.22]$
& $36/40\;(90.0\%)$ & $4.5$ \\
Qwen3-4B zero-shot
& $0.392\;[0.33,0.45]$ & $8.56$ & $0.97\;[0.87,1.06]$
& $23/40\;(57.5\%)$ & $8.5$ \\
Fixed local heuristic
& $0.276\;[0.22,0.34]$ & $8.82$ & $0.90\;[0.80,1.00]$
& $14/40\;(35.0\%)$ & $10.5$ \\
Private-only
& $0.112\;[0.08,0.15]$ & $0.00$ & $0.94\;[0.84,1.03]$
& $1/40\;(2.5\%)$ & --- \\
Random legal record
& $0.093\;[0.06,0.12]$ & $6.00$ & $0.68\;[0.55,0.80]$
& $1/40\;(2.5\%)$ & --- \\
\bottomrule
\end{tabular}}
\vspace{2pt}
\parbox{0.98\textwidth}{\footnotesize\raggedright\textit{Note.} Brackets are
paired-bootstrap 95\% confidence intervals; late-conflict metrics use 40
matched settings, and latency is computed on recovered settings.}
\end{table}

\begin{table}[!htbp]
\centering
\caption{Late-counterevidence outcomes for alternative decision sources under
the same runtime; process metrics are computed on recovered settings.}
\label{tab:same-runtime-mechanism}
\scriptsize
\setlength{\tabcolsep}{2.8pt}
\resizebox{0.86\linewidth}{!}{%
\begin{tabular}{lcccc}
\toprule
Decision source
& \shortstack{Recovery\\$\uparrow$}
& \shortstack{Re-synthesis\\latency $\downarrow$}
& \shortstack{Relative stale\\field AUC $\downarrow$}
& \shortstack{Post-conflict\\messages/agent} \\
\midrule
GPT-4o local teacher & $38/40$ & $4.0$ & $0.90\times$ & $8.5$ \\
\textbf{\method} & $36/40$ & $4.5$ & $1.00\times$ & $8.2$ \\
Qwen3-4B zero-shot & $23/40$ & $8.5$ & $1.70\times$ & $8.8$ \\
Fixed local heuristic & $14/40$ & $10.5$ & $2.30\times$ & $8.9$ \\
Private-only & $1/40$ & --- & --- & $0.0$ \\
\bottomrule
\end{tabular}}

\vspace{2pt}
\parbox{0.86\linewidth}{\footnotesize\raggedright\textit{Note.} Relative
stale-field AUC is normalized by \method.  The random legal-record control is
omitted because it reaches the endpoint in only 1/40 settings, leaving the
trajectory summaries undefined.}
\end{table}
\FloatBarrier

The endpoint differences in Tables~\ref{tab:same-runtime-full}
and~\ref{tab:same-runtime-mechanism} coincide with a
distinct post-conflict trajectory: \method clears stale fields and
re-synthesizes a replacement commitment in roughly half the latency of
zero-shot or heuristic control.  This source-level comparison complements the
component interventions in Figure~\ref{fig:g4-mechanism-attribution}
and Appendix~\ref{app:g4-mechanism-extension}: the former holds all
runtime machinery fixed, whereas the latter removes individual parts of the
reorganization loop.

\section{\silo Scaling by Difficulty}
\label{app:silo-results}

\begin{table}[h]
\centering
\caption{\method success (\%) by population and \silo difficulty level.}
\label{tab:silo-waggle-scale-level}
\small
\setlength{\tabcolsep}{8pt}
\begin{tabular}{ccccc}
\toprule
Population & Level I & Level II & Level III & Overall \\
\midrule
$N=16$ & 94.5 & 87.8 & 76.6 & 86.3 \\
$N=32$ & 91.8 & 80.7 & 64.5 & 79.0 \\
$N=64$ & 87.6 & 69.9 & 17.1 & 58.2 \\
\bottomrule
\end{tabular}
\end{table}

Table~\ref{tab:g3-cross-substrate-core} gives the cross-method comparison at
$N=64$; this table tracks how \method's difficulty profile changes with
population size.

\section{Held-Out Controller Results}
\label{app:controller-results}

\begin{table}[h]
\centering
\caption{Per-mode classification on 800 held-out local views.}
\label{tab:per-mode}
\small
\begin{tabular}{lrrr}
\toprule
Derived mode & Support & Predicted & F1 (\%) \\
\midrule
Explore    & 200 & 200 & 100.00 \\
Deposit    & 100 & 100 & 100.00 \\
Relay      & 100 & 100 & 100.00 \\
Challenge  & 200 & 200 & 100.00 \\
Synthesize & 100 & 100 & 100.00 \\
Abstain    & 100 & 100 & 100.00 \\
\bottomrule
\end{tabular}
\end{table}

All 800 predictions are valid JSON, canonical under the contract, schema-valid,
and executable.  There are no invalid predictions and no safe-fallback
decisions.  Full-record exact match is 598/800 (74.75\%).

\section{Mechanism Measurements}
\label{app:mechanisms}

\subsection{Runtime-component intervention}
\label{app:runtime-component-intervention}

The formal trace stores both controller intent and executed transport.  We
report the following quantities without assigning persistent roles:

\begin{itemize}
    \item \textbf{Recruitment events}: executed Deposit or Relay writes carrying
    novelty/support, plus spatial actions guided by a model-selected relay.
    \item \textbf{Readiness-to-commit events}: transitions from no commitment to a
    controller-selected commitment after candidate readiness.
    \item \textbf{Commitment reopenings}: Challenge decisions that revoke an
    incompatible local commitment after verified counterevidence.
    \item \textbf{Decay}: reduction from initial to final trace mass and the
    number of expiry advances.
    \item \textbf{Explore actions}: validated decisions assigned the Explore
    mode in Table~\ref{tab:modes}, which execute a benchmark-native private
    action without a social write or commitment.
\end{itemize}

On \swarmbench, the spatial transport layer additionally records candidate,
executed, and suppressed writes by mode; duplicate-fingerprint, missing local
trace, exhausted-lifetime, and zero-mass suppressions are
separated.  These counters distinguish a learned communication decision from
a transport invariant and verify that communication savings do not come from
invalid generations.

\begin{table}[h]
\centering
\caption{Runtime-component interventions over 40 matched late-conflict
settings.  The complete runtime recovers 36/40; $\Delta$Rec. is \method minus
each ablation.}
\label{tab:g4-paired-effects}
\footnotesize
\setlength{\tabcolsep}{3.5pt}
\begin{tabularx}{\linewidth}{lccX}
\toprule
Variant & Recovery & $\Delta$Rec. (pp) & Pooled \method signature \\
\midrule
\textsc{NoConflict} & 0/40 & +90.0 & No explicit recovery in any group \\
\textsc{NoDecay} & 33/40 & +7.5 & 3.89 rounds faster; 86.0\% less stale AUC \\
\textsc{Irrev. Quorum} & 0/40 & +90.0 & No explicit recovery in any group \\
\bottomrule
\end{tabularx}
\end{table}

\subsection{Representative late-conflict trajectory}
\label{app:g3-late-conflict}

Figure~\ref{fig:g3-late-conflict} makes the population-level
$A\!\rightarrow\!\varnothing\!\rightarrow\!B$ loop visible.  Before the
intervention, $A$ dominates.  At relative round zero, a controlled target
relocation injects counterevidence through native egocentric views and the
committed share falls, exposing an opening phase.  Challenge peaks at $+1$;
$B$ first exceeds $A$ and Synthesize reactivates at $+2$.  Field mass peaks at
$+3$ and combined Challenge/Synthesize activity at $+4$--$+5$, after which both
actions subside while $B$ remains dominant and total commitment rebuilds.

\begin{figure}[H]
\centering
\includegraphics[width=\linewidth]{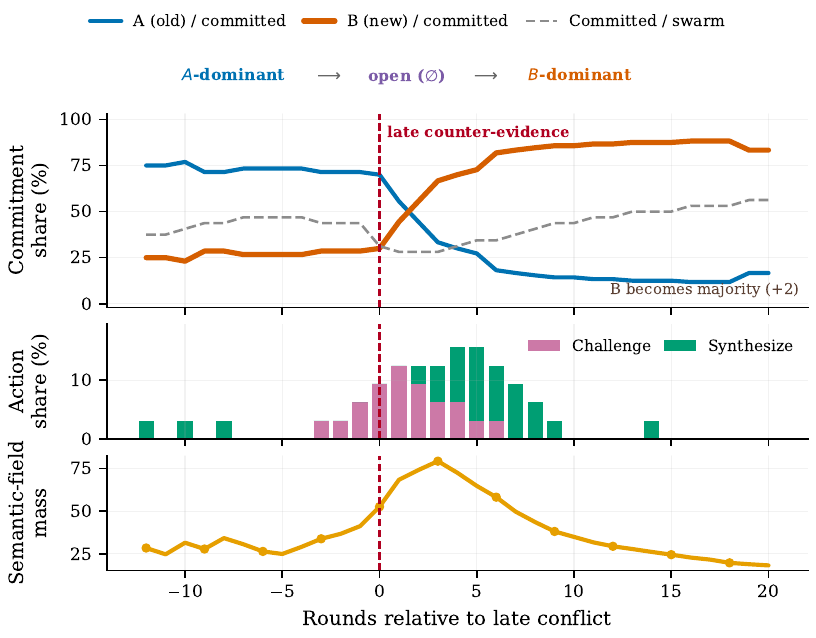}
\caption{\textbf{Representative late-conflict reorganization in \swarmbench
Pursuit ($N=32$).}  A target relocation at round 0 enters only through native
local views; panels show commitment shares, Challenge/Synthesize activity, and
semantic-field mass.}
\label{fig:g3-late-conflict}
\end{figure}
\FloatBarrier

\subsection{Runtime-component task extension}
\label{app:g4-mechanism-extension}

\begin{figure}[H]
\centering
\includegraphics[width=0.94\linewidth]{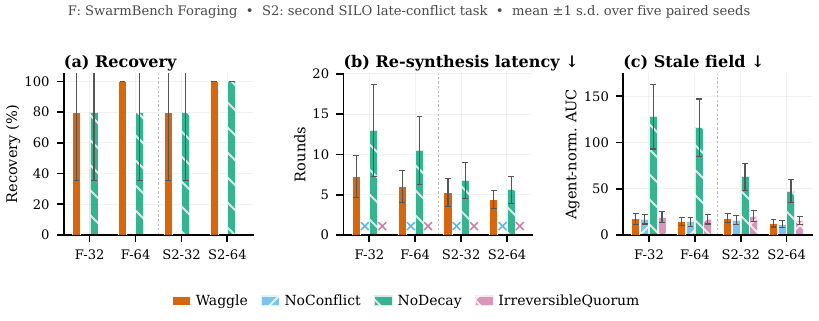}
\caption{\textbf{Runtime-component extension.}  \swarmbench Foraging (F) and
a second \silo late-conflict task (S2) are evaluated at
$N\in\{32,64\}$ over 20 matched settings; latency and stale-field AUC show mean
$\pm1$ s.d.}
\label{fig:g4-mechanism-extension}
\end{figure}
\FloatBarrier

\subsection{Runtime-Parameter Sensitivity under a Frozen Controller}
\label{app:runtime-sensitivity}

We characterize the spatial runtime around its default operating point while
keeping the \method checkpoint, decoder, validator, transport, compiler,
benchmark instances, and evaluation seeds fixed.  All primary results use the
default configuration; the sensitivity analysis varies one factor at a time.
We vary the support threshold
$\tau_s\in\{6,\mathbf{8},10\}$, the coherence threshold
$\tau_c\in\{0.5,\mathbf{0.6},0.7\}$, and the Pursuit refresh interval
$R_{\mathrm{ref}}\in\{8,\mathbf{12},16\}$, holding the two-channel and
zero-conflict requirements fixed.  The sweep is specific to the numerical
\swarmbench readiness gates; \silo readiness follows native constructibility.

Each of the seven configurations is evaluated on Pursuit and Foraging at
$N\in\{32,64\}$ with five paired held-out seeds, and on the Pursuit
late-counterevidence protocol at both populations with the same five seeds.
This gives $7\times(20+10)=210$ frozen-controller runs.  Conventional task
quality and receiver-delivered messages are normalized within each
task--population cell by the default configuration before macro-averaging.

\begin{figure}[H]
\centering
\includegraphics[width=0.98\linewidth]{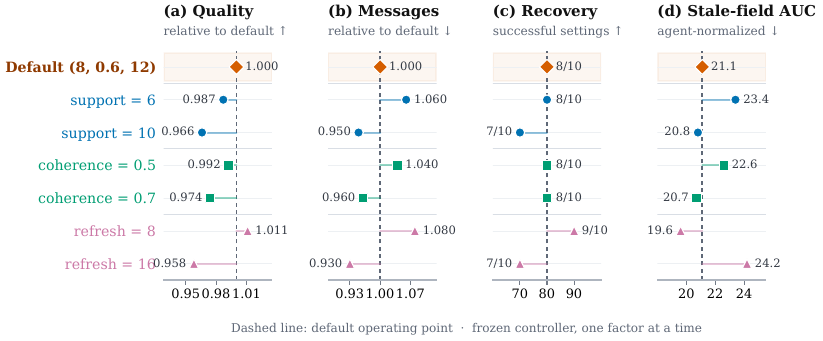}
\caption{\textbf{Runtime-parameter sensitivity with a frozen controller.}
Seven one-factor-at-a-time configurations vary the support threshold,
coherence threshold, or refresh interval.  Quality and messages are normalized
to the default within each task--population cell; dashed lines mark the
default.  Recovery and stale-field AUC use ten matched Pursuit settings.}
\label{fig:runtime-sensitivity}
\end{figure}

Across the sweep, normalized task quality remains between 0.958 and 1.011, and
recovery counts range from 7/10 to 9/10.  Readiness-gate perturbations change
traffic by at most 6\%.  The faster-refresh setting records 9/10 recovery and
lower stale-field AUC at higher traffic; the slower-refresh setting records
7/10 recovery and higher stale-field AUC at lower traffic.  Nearby settings
preserve performance while shifting the communication--responsiveness
trade-off.
\FloatBarrier

\subsection{Delays, message loss, and temporary churn}
\label{app:execution-perturbations}

We reuse the late-counterevidence matrix of \swarmbench Pursuit and
\silo III-26 at $N\in\{32,64\}$, with ten matched held-out cases per
task--population cell (40 settings).  The controller and task instances remain
fixed across regimes.  For each delayed message, the additional delay is
sampled from $\mathrm{DiscreteUniform}\{0,1\}$ or
$\mathrm{DiscreteUniform}\{0,1,2\}$.  One delay unit is the interval between
consecutive nominal local updates in the synchronous reference execution;
zero means delivery at the normally available next update.  Loss regimes
drop messages at the specified rate.  For temporary churn, 10\% of active
nodes are selected uniformly at counterevidence injection and suspended for
two nominal update intervals.  Their task actions, controller calls, and
outbound communication pause; incoming messages are dropped, not buffered.
Nodes then rejoin with their private state and local commitment preserved.

Recovery follows Appendix~\ref{app:scd-primary-factorial}.
Re-synthesis latency counts completed agent-local updates from counterevidence
injection to the first valid synthesis of a replacement commitment, divided
by the reference population $N$.  One round-equivalent thus corresponds to
$N$ completed local updates; this measures update count, not API wall-clock
latency.  As in the runtime-component analysis, latency is summarized over
recovered settings.  Table~\ref{tab:execution-perturbations} reports task quality
and stale-field AUC relative to the synchronous reference, and distinguishes
attempted from receiver-delivered messages per agent.

\begin{table}[H]
\centering
\caption{Execution perturbations over 40 matched late-counterevidence settings.
Quality and stale AUC are relative to the synchronous reference; latency is
in round-equivalents.  Message columns report attempted/delivered counts per agent.}
\label{tab:execution-perturbations}
\fontsize{8}{9.2}\selectfont
\setlength{\tabcolsep}{3.2pt}
\renewcommand{\arraystretch}{1.10}
\begin{tabular}{@{}lrrrrr@{}}
\toprule
\rowcolor{tableheaderbg}
Execution regime & \shortstack{Quality /\\sync $\uparrow$}
& Recovery $\uparrow$ & \shortstack{Re-synthesis latency\\(round-equiv.) $\downarrow$}
& \shortstack{Relative\\stale AUC $\downarrow$}
& \shortstack{Msg./agent\\attempted/delivered} \\
\midrule
Synchronous & 1.000 & 36/40 (90.0\%) & 4.5 & $1.00\times$ & 8.2 / 8.2 \\
Delay $\{0,1\}$ & 0.986 & 35/40 (87.5\%) & 5.0 & $1.07\times$ & 8.3 / 8.3 \\
Delay $\{0,1,2\}$ & 0.969 & 34/40 (85.0\%) & 5.5 & $1.15\times$ & 8.4 / 8.4 \\
10\% message drop & 0.951 & 33/40 (82.5\%) & 5.8 & $1.20\times$ & 8.4 / 7.6 \\
20\% message drop & 0.905 & 30/40 (75.0\%) & 6.7 & $1.34\times$ & 8.5 / 6.8 \\
10\% temporary churn & 0.925 & 32/40 (80.0\%) & 6.1 & $1.27\times$ & 8.6 / 8.1 \\
Delay $\{0,1,2\}$ + 10\% drop & 0.895 & 31/40 (77.5\%) & 6.5 & $1.33\times$ & 8.6 / 7.8 \\
\bottomrule
\end{tabular}
\end{table}

Delays and moderate loss preserve most synchronous task quality, while
recovery slows and stale-field burden rises.  Stronger loss and combined
perturbations degrade performance further.  These tests characterize finite
delivery delays and temporary unavailability; they do not establish robustness
to arbitrary schedules, permanent information loss, or adversarial agents.
\FloatBarrier

\section{Limitations and Deployment Boundaries}
\label{app:responsible}

\paragraph{Interface and substrate scope.}
\method assumes a deterministic adapter that exposes bounded private task
state, locally checkable incident evidence, and legal task actions through the
shared anonymous interface (Section~\ref{sec:method};
Appendix~\ref{app:interface}).  We evaluate this contract on spatial and
distributed-language substrates, through frozen transfer to graph-structured
tasks with a new action vocabulary, and under population extrapolation beyond
the primary matrix (Tables~\ref{tab:g3-cross-substrate-core}
and~\ref{tab:agentsnet-transfer};
Appendix~\ref{app:n128-population-extension}).  These results support reuse
across the tested local-interaction settings; environments in which the
observation interface or transport mechanism must itself be learned remain
outside the present scope.

\paragraph{Anonymity and identity-dependent coordination.}
The present interface excludes persistent identity-indexed reputation and
assigned global roles, so it does not maintain reliability histories for named
agents across episodes.  Task-relevant capability differences could instead
enter through bounded local descriptors while arbitrary identities remain
hidden (Appendix~\ref{app:future-directions}).

\paragraph{Learned semantics and fixed runtime.}
\method learns state-dependent task, trace, and commitment decisions, while
the runtime supplies validation, bounded transport, expiry, and native
realization (Section~\ref{sec:method}).  This leaves low-level coordination
affordances in the deployment contract rather than learning them end to end.
Identical-runtime decision-source comparisons, controller-fixed component
interventions, and nearby spatial-runtime sweeps isolate the two sides of this
boundary (Section~\ref{sec:same-runtime-results};
Appendices~\ref{app:same-runtime-comparison},
\ref{app:runtime-component-intervention}, and~\ref{app:runtime-sensitivity}).
Across the instrumented evaluation matrix, 99.75\% of first decodes are
executable as emitted
and the no-communication fallback is never invoked.  Remaining cases require
envelope normalization, at most one validator-code-conditioned controller
regeneration, or conservative projection; wrapper operations cannot originate
social content
(Appendix~\ref{app:held-out-decisions}).  Thus the runtime constrains and
realizes controller decisions rather than supplying their social semantics.

\paragraph{Conditional symmetry guarantee.}
Proposition~\ref{prop:trajectory-equivariance} states the synchronous case;
Appendix~\ref{app:event-driven-equivariance} extends the structural guarantee
to paired event-driven executions, with a distributional result under an
equivariant marked-event kernel.  Both require equivariant view construction,
local decisions, and runtime operations, including completion-time admission
in the asynchronous contract.  \scd promotes the local premise but does not
enforce it architecturally.  We measure the learned approximation through
held-out summary-, decoded-record-, and admitted-record agreement and the
symmetry-breaking diagnostic (Table~\ref{tab:scd-objective-attribution};
Figure~\ref{fig:agentsnet-symmetry-priority}).
Our primary benchmark executions and next-field training targets use
synchronous updates.  Appendix~\ref{app:execution-perturbations} measures
degradation under finite delivery delays, message loss, and temporary churn.
More general scheduling and membership changes can reduce evidence freshness,
alter temporal reachability, or remove task-relevant information, affecting
both progress and feasibility beyond these tested regimes.
Structural relabeling consistency is distinct from equal task performance
across execution regimes, schedule independence, or task convergence.

\paragraph{Supervision and transfer.}
The law is trained from teacher-labeled single local views and rollout
transitions on the training substrates, so its coverage is bounded by those
local situations (Section~\ref{sec:learning}; Appendix~\ref{app:training}).
Training collection and formal evaluation use disjoint cases and seeds; the
factorial arms share teacher records and optimization budgets, and the same
structural and behavioral pattern replicates with a second student backbone
(Appendices~\ref{app:scd-primary-factorial}
and~\ref{app:llama-backbone-replication}).  In \agentsnetrp, the controller is
frozen before any benchmark instance is accessed, and no AgentsNet teacher
record, rollout transition, or evaluation answer enters training or model
selection (Appendix~\ref{app:agentsnet-rp}).  Continual adaptation in
open-ended domains is not evaluated here.

\paragraph{Open-world deployment.}
The evaluation assumes faithful benchmark observations and non-adversarial
agents.  Local admission checks record legality and consistency with available
evidence, not the truthfulness of arbitrary external observations.
In particular, the spatial readiness gate counts incident channels rather than
independent evidence origins (Appendix~\ref{app:protocol}).  Duplicate
suppression and finite trace lifetime limit repetition and persistence;
they do not authenticate sources or establish Byzantine fault tolerance.
In open tool environments, untrusted observations could imitate a
verified-conflict marker or evade canonicalization.  Deployments with
consequential external actions should authenticate evidence provenance,
sandbox tools, and retain appropriate human approval.  The experiments
contain no human-subject data.

\section{Future Directions}
\label{app:future-directions}

\paragraph{Adaptive interfaces and broader substrates.}
The local interface in \method separates substrate mechanics from the law that
generates organization (Section~\ref{sec:method};
Appendix~\ref{app:interface}).  Cross-substrate evaluation and frozen
\agentsnetrp transfer provide a starting point for extending this separation
to multimodal observations, tool-using agents, embodied interaction,
partially specified action spaces, and environments whose local interface
changes over time (Tables~\ref{tab:g3-cross-substrate-core}
and~\ref{tab:agentsnet-transfer}).  Parts of the adapter could also be learned
or continually refined while retaining provenance, locality, boundedness, and
legal actuation.  The central design question is how much interface plasticity
can be introduced without relocating organizational semantics from the shared
law into the substrate layer.

\paragraph{Asynchronous and changing populations.}
Appendix~\ref{app:event-driven-equivariance} establishes conditional
relabeling consistency for event-driven execution, and
Appendix~\ref{app:execution-perturbations} tests bounded delays, message loss,
and temporary churn.  Further evaluation could examine irregular clocks,
longer delays, ongoing membership turnover, and asynchronous model-call
completion.  It should distinguish matched call budgets from matched time
deadlines and account for whether departing nodes remove information needed
to solve the task.

\paragraph{Capability-conditioned local laws.}
Parameter sharing constrains the coordination rule rather than requiring
identical task-relevant capabilities.  A heterogeneous extension could expose
bounded local descriptors of available tools, action constraints, and context
or compute budgets while retaining one shared controller and the joint decision
schema.  The adapter would validate actions against the focal node's
capabilities.  Budget-dependent truncation and summarization would remain local
and satisfy the same equivariance requirements, including insensitivity to
arbitrary handle ordering.  Capability attributes would travel with their nodes
under relabeling, while information about other nodes would arrive through
incident communication.  Under equivariant view construction, the complete
deployed map, and runtime operations, the local-to-global argument
(Proposition~\ref{prop:trajectory-equivariance};
Appendix~\ref{app:event-driven-equivariance}) applies to this augmented state.
Anonymity removes dependence on arbitrary names while preserving genuine
capability differences as a source of task-relevant asymmetry, without assigning
organizational roles.  This extension retains a shared coordination policy;
mixtures of independently parameterized controllers require separate analysis.
Generalization of the current checkpoint to unseen capability mixtures remains
to be evaluated.

\paragraph{More autonomous and continual learning.}
\scd currently combines teacher-labeled local decisions with
rollout-grounded transition supervision (Section~\ref{sec:learning};
Appendix~\ref{app:training}).  Disjoint evaluation and frozen target-substrate
transfer establish a clean basis for reducing external supervision further
(Appendices~\ref{app:decision-details}
and~\ref{app:agentsnet-rp}).  Candidate signals include self-supervised local
prediction, preference or outcome feedback, self-play, offline or online
reinforcement learning, and locally generated counterfactual trajectories.
Continual updates introduce a sharper problem: new local experience should
improve the law without erasing anonymous-view consistency or destabilizing
previously useful coordination reactions.  Local replay, constrained policy
updates, and decentralized parameter adaptation offer possible routes to
combining plasticity with the invariances promoted by \scd.

\paragraph{Learned transport and multi-timescale coordination state.}
The current runtime fixes validity, transport, expiry, and native realization
so that semantic decisions can be attributed to the policy.  Controller-fixed
component interventions and runtime sensitivity sweeps provide an empirical
base for constrained co-design of these mechanics
(Section~\ref{sec:same-runtime-results};
Appendices~\ref{app:runtime-component-intervention}
and~\ref{app:runtime-sensitivity}).  Future systems could learn when to
allocate communication bandwidth, how long locally useful information should
persist, which traces to compress or forget, and how readiness thresholds
should respond to local conditions.  Longer tasks may also benefit from
several timescales of bounded coordination state.  Such adaptation should
preserve hard local constraints on provenance, bounded communication, legal
actuation, and reversibility while learning the operating policy around them.

\paragraph{Theory beyond exact relabeling equivariance.}
Propositions~\ref{prop:trajectory-equivariance}
and~\ref{prop:event-driven-equivariance} provide a structural basis for a
quantitative theory of learned local laws.  Approximate extensions could
bound finite-horizon trajectory deviation under uniform local error controls
over reachable histories.  Beyond symmetry, open questions include
conditions for convergence or recurrent organization under asynchronous
updates, communication complexity, recovery time after counterevidence,
bounds on stale coordination mass, and the relation between local memory
horizon and population-scale stability.  Progress and convergence results
would additionally need conditions on event scheduling, information
reachability, and the local reaction dynamics.  These questions connect
learned LLM swarms to population protocols, distributed algorithms, and
stochastic interacting systems.

\paragraph{Robustness, trust, and partially aligned participants.}
The bounded interface makes trust assumptions explicit and creates a natural
place to attach stronger safeguards.  Open systems may contain corrupted
observations, misleading messages, prompt injection, compromised tools, or
participants whose objectives are only partially aligned.  Research
directions include authenticated provenance, robust local evidence
aggregation, privacy-preserving coordination, and trust mechanisms that do
not depend on stable global identities.  Formal fault models could measure
graceful degradation under Byzantine, unavailable, or colluding nodes, while
mixed human--agent and mixed-incentive populations would test whether local
reversibility remains effective when disagreement is strategic rather than
incidental.  These extensions build on the deployment boundary stated in
Appendix~\ref{app:responsible}.

\paragraph{Longer-horizon evaluation and measurement of emergent organization.}
The current evaluation separates population and interaction scaling, frozen
transfer, anonymous stability, and reorganization after counterevidence
(Sections~\ref{sec:horizon-results}--\ref{sec:same-runtime-results}).  Future
benchmarks could combine these stresses through long-lived objectives,
multiple competing commitments, changing goals, node churn, heterogeneous
capabilities, and repeated environmental shocks.  Richer diagnostics could
measure formation time, turnover of coordination state, complete recovery
trajectories, communication concentration, and the diversity and persistence
of emergent organizational patterns.  Causal interventions on local evidence
or topology could further distinguish communication patterns that merely
correlate with success from organization that mediates collective behavior.
Broader backbone families, larger deployments, and system-level measurements
of latency, memory, energy, and model-call cost would complement the
algorithmic evidence reported here.

\noindent\textbf{Across these directions, the
broader question is how far adaptive organization can emerge from repeated
execution of a reusable local law, and when, if ever, explicit system-level
structure provides additional value.}

\end{document}

%% file: math_commands.tex
\usepackage{amsmath,amsfonts,bm}

\def\eqref#1{equation~\ref{#1}}

\def\1{\bm{1}}

\DeclareMathAlphabet{\mathsfit}{\encodingdefault}{\sfdefault}{m}{sl}
\SetMathAlphabet{\mathsfit}{bold}{\encodingdefault}{\sfdefault}{bx}{n}



%% file: figures/agentsnet_symmetry_priority.tex
\begin{tikzpicture}[x=1.55cm,y=3.10cm,font=\scriptsize]
  \definecolor{prioritygrid}{HTML}{D9DEE7}
  \definecolor{prioritygray}{HTML}{CAD1DC}
  \definecolor{priorityedge}{HTML}{667085}
  \definecolor{priorityorange}{HTML}{D55E00}
  \definecolor{priorityorangeedge}{HTML}{A33F00}
  \definecolor{prioritytext}{HTML}{273043}

  \foreach \y/\label in {0/0.00,0.25/0.25,0.5/0.50,0.75/0.75,1/1.00}{
    \draw[prioritygrid,line width=0.35pt] (0.32,\y) -- (3.22,\y);
    \node[anchor=east,text=prioritytext] at (0.25,\y) {\label};
  }
  \draw[priorityedge,line width=0.55pt] (0.32,0) -- (3.22,0);
  \draw[priorityedge,line width=0.55pt] (0.32,0) -- (0.32,1.03);

  \filldraw[fill=prioritygray,draw=priorityedge,line width=0.55pt]
    (0.68,0) rectangle (1.03,0.28);
  \filldraw[fill=priorityorange,draw=priorityorangeedge,line width=0.55pt]
    (1.12,0) rectangle (1.47,0.77);
  \node[anchor=south,text=prioritytext] at (0.855,0.295) {0.28};
  \node[anchor=south,text=prioritytext,font=\scriptsize\bfseries]
    at (1.295,0.785) {0.77};
  \draw[priorityedge,line width=0.45pt]
    (0.855,0.885) -- (0.855,0.905) -- (1.295,0.905) -- (1.295,0.885);
  \node[anchor=south,text=prioritytext] at (1.075,0.915) {$\Delta=+0.49$};

  \filldraw[fill=prioritygray,draw=priorityedge,line width=0.55pt]
    (2.02,0) rectangle (2.37,0.78);
  \filldraw[fill=priorityorange,draw=priorityorangeedge,line width=0.55pt]
    (2.46,0) rectangle (2.81,0.84);
  \node[anchor=south,text=prioritytext] at (2.195,0.795) {0.78};
  \node[anchor=south,text=prioritytext,font=\scriptsize\bfseries]
    at (2.635,0.855) {0.84};
  \draw[priorityedge,line width=0.45pt]
    (2.195,0.945) -- (2.195,0.965) -- (2.635,0.965) -- (2.635,0.945);
  \node[anchor=south,text=prioritytext] at (2.415,0.975) {$\Delta=+0.06$};

  \node[align=center,anchor=north,text=prioritytext] at (1.075,-0.055)
    {Symmetry-\\obstructed};
  \node[align=center,anchor=north,text=prioritytext] at (2.415,-0.055)
    {Already\\asymmetric};
  \node[rotate=90,text=prioritytext] at (-0.18,0.50)
    {LeaderElection soft score};

  \filldraw[fill=prioritygray,draw=priorityedge,line width=0.5pt]
    (0.56,1.105) rectangle (0.76,1.155);
  \node[anchor=west,text=prioritytext] at (0.80,1.13) {Without priority};
  \filldraw[fill=priorityorange,draw=priorityorangeedge,line width=0.5pt]
    (1.89,1.105) rectangle (2.09,1.155);
  \node[anchor=west,text=prioritytext] at (2.13,1.13) {With priority};
\end{tikzpicture}

%% file: main.bbl
\begin{thebibliography}{47}
\providecommand{\natexlab}[1]{#1}
\providecommand{\url}[1]{\texttt{#1}}
\expandafter\ifx\csname urlstyle\endcsname\relax
  \providecommand{\doi}[1]{doi: #1}\else
  \providecommand{\doi}{doi: \begingroup \urlstyle{rm}\Url}\fi

\bibitem[Angluin et~al.(2006)Angluin, Aspnes, Diamadi, Fischer, and
  Peralta]{angluin2006computation}
Dana Angluin, James Aspnes, Zo{\"e} Diamadi, Michael~J. Fischer, and Ren{\'e}
  Peralta.
\newblock Computation in networks of passively mobile finite-state sensors.
\newblock \emph{Distributed Computing}, 18\penalty0 (4):\penalty0 235--253,
  2006.
\newblock \doi{10.1007/s00446-005-0138-3}.
\newblock URL \url{https://doi.org/10.1007/s00446-005-0138-3}.

\bibitem[Bonabeau et~al.(1999)Bonabeau, Dorigo, and
  Theraulaz]{bonabeau1999swarm}
Eric Bonabeau, Marco Dorigo, and Guy Theraulaz.
\newblock \emph{Swarm Intelligence: From Natural to Artificial Systems}.
\newblock Oxford University Press, 1999.
\newblock \doi{10.1093/oso/9780195131581.001.0001}.
\newblock URL \url{https://doi.org/10.1093/oso/9780195131581.001.0001}.

\bibitem[Chen et~al.(2024{\natexlab{a}})Chen, Saha, Stengel-Eskin, and
  Bansal]{chen2024magdi}
Justin Chen, Swarnadeep Saha, Elias Stengel-Eskin, and Mohit Bansal.
\newblock {MAGD}i: Structured distillation of multi-agent interaction graphs
  improves reasoning in smaller language models.
\newblock In \emph{Proceedings of the 41st International Conference on Machine
  Learning}, volume 235 of \emph{Proceedings of Machine Learning Research},
  pp.\  7220--7235. PMLR, 2024{\natexlab{a}}.
\newblock URL \url{https://proceedings.mlr.press/v235/chen24ah.html}.

\bibitem[Chen et~al.(2024{\natexlab{b}})Chen, Su, Zuo, Yang, Yuan, Chan, Yu,
  Lu, Hung, Qian, Qin, Cong, Xie, Liu, Sun, and Zhou]{chen2024agentverse}
Weize Chen, Yusheng Su, Jingwei Zuo, Cheng Yang, Chenfei Yuan, Chi-Min Chan,
  Heyang Yu, Yaxi Lu, Yi-Hsin Hung, Chen Qian, Yujia Qin, Xin Cong, Ruobing
  Xie, Zhiyuan Liu, Maosong Sun, and Jie Zhou.
\newblock {AgentVerse}: Facilitating multi-agent collaboration and exploring
  emergent behaviors.
\newblock In \emph{International Conference on Learning Representations},
  2024{\natexlab{b}}.
\newblock URL
  \url{https://proceedings.iclr.cc/paper_files/paper/2024/hash/578e65cdee35d00c708d4c64bce32971-Abstract-Conference.html}.

\bibitem[Das et~al.(2019)Das, Gervet, Romoff, Batra, Parikh, Rabbat, and
  Pineau]{das2019tarmac}
Abhishek Das, Th{\'e}ophile Gervet, Joshua Romoff, Dhruv Batra, Devi Parikh,
  Mike Rabbat, and Joelle Pineau.
\newblock {T}ar{MAC}: Targeted multi-agent communication.
\newblock In \emph{Proceedings of the 36th International Conference on Machine
  Learning}, volume~97 of \emph{Proceedings of Machine Learning Research}, pp.\
   1538--1546. PMLR, 2019.
\newblock URL \url{https://proceedings.mlr.press/v97/das19a.html}.

\bibitem[Dorigo \& Gambardella(1997)Dorigo and Gambardella]{dorigo1997ant}
Marco Dorigo and Luca~Maria Gambardella.
\newblock Ant colony system: A cooperative learning approach to the traveling
  salesman problem.
\newblock \emph{IEEE Transactions on Evolutionary Computation}, 1\penalty0
  (1):\penalty0 53--66, 1997.
\newblock \doi{10.1109/4235.585892}.
\newblock URL \url{https://doi.org/10.1109/4235.585892}.

\bibitem[Foerster et~al.(2016)Foerster, Assael, de~Freitas, and
  Whiteson]{foerster2016dial}
Jakob Foerster, Ioannis~Alexandros Assael, Nando de~Freitas, and Shimon
  Whiteson.
\newblock Learning to communicate with deep multi-agent reinforcement learning.
\newblock In \emph{Advances in Neural Information Processing Systems},
  volume~29, 2016.
\newblock URL
  \url{https://proceedings.neurips.cc/paper/2016/hash/c7635bfd99248a2cdef8249ef7bfbef4-Abstract.html}.

\bibitem[Grattafiori et~al.(2024)Grattafiori, Dubey, Jauhri, Pandey, Kadian,
  et~al.]{grattafiori2024llama3}
Aaron Grattafiori, Abhimanyu Dubey, Abhinav Jauhri, Abhinav Pandey, Abhishek
  Kadian, et~al.
\newblock The {Llama 3} herd of models.
\newblock \emph{arXiv preprint arXiv:2407.21783}, 2024.
\newblock \doi{10.48550/arXiv.2407.21783}.
\newblock URL \url{https://arxiv.org/abs/2407.21783}.

\bibitem[Gr{\"o}tschla et~al.(2025)Gr{\"o}tschla, M{\"u}ller, T{\"o}nshoff,
  Galkin, and Perozzi]{grotschla2025agentsnet}
Florian Gr{\"o}tschla, Luis M{\"u}ller, Jan T{\"o}nshoff, Mikhail Galkin, and
  Bryan Perozzi.
\newblock {AgentsNet}: Coordination and collaborative reasoning in multi-agent
  {LLM}s.
\newblock \emph{arXiv preprint arXiv:2507.08616}, 2025.
\newblock URL \url{https://arxiv.org/abs/2507.08616}.

\bibitem[Hong et~al.(2024)Hong, Zhuge, Chen, Zheng, Cheng, Zhang, Wang, Wang,
  Yau, Lin, Zhou, Ran, Xiao, Wu, and Schmidhuber]{hong2024metagpt}
Sirui Hong, Mingchen Zhuge, Jiaqi Chen, Xiawu Zheng, Yuheng Cheng, Ceyao Zhang,
  Jinlin Wang, Zili Wang, Steven Ka~Shing Yau, Zijuan Lin, Liyang Zhou, Chenyu
  Ran, Lingfeng Xiao, Chenglin Wu, and J{\"u}rgen Schmidhuber.
\newblock {MetaGPT}: Meta programming for a multi-agent collaborative
  framework.
\newblock In \emph{International Conference on Learning Representations}, 2024.
\newblock URL \url{https://arxiv.org/abs/2308.00352}.

\bibitem[Hounie et~al.(2023)Hounie, Chamon, and Ribeiro]{hounie2023automatic}
Ignacio Hounie, Luiz F.~O. Chamon, and Alejandro Ribeiro.
\newblock Automatic data augmentation via invariance-constrained learning.
\newblock In \emph{Proceedings of the 40th International Conference on Machine
  Learning}, volume 202 of \emph{Proceedings of Machine Learning Research},
  pp.\  13410--13433. PMLR, 2023.
\newblock URL \url{https://proceedings.mlr.press/v202/hounie23a.html}.

\bibitem[Hu et~al.(2022)Hu, Shen, Wallis, Allen-Zhu, Li, Wang, Wang, and
  Chen]{hu2022lora}
Edward~J. Hu, Yelong Shen, Phillip Wallis, Zeyuan Allen-Zhu, Yuanzhi Li, Shean
  Wang, Lu~Wang, and Weizhu Chen.
\newblock {LoRA}: Low-rank adaptation of large language models.
\newblock In \emph{International Conference on Learning Representations}, 2022.
\newblock URL \url{https://arxiv.org/abs/2106.09685}.

\bibitem[H{\"u}ttenrauch et~al.(2019)H{\"u}ttenrauch, {\v{S}}o{\v{s}}i{\'c},
  and Neumann]{huettenrauch2019deep}
Maximilian H{\"u}ttenrauch, Adrian {\v{S}}o{\v{s}}i{\'c}, and Gerhard Neumann.
\newblock Deep reinforcement learning for swarm systems.
\newblock \emph{Journal of Machine Learning Research}, 20\penalty0
  (54):\penalty0 1--31, 2019.
\newblock URL \url{https://jmlr.org/papers/v20/18-476.html}.

\bibitem[Ji et~al.(2026)Ji, Chen, Chen, Zhu, Xu, Gro{\ss}, Zhou, Cao, and
  Zhao]{ji2026genswarm}
Wenkang Ji, Huaben Chen, Mingyang Chen, Guobin Zhu, Lufeng Xu, Roderich
  Gro{\ss}, Rui Zhou, Ming Cao, and Shiyu Zhao.
\newblock {GenSwarm}: Scalable multi-robot code-policy generation and
  deployment via language models.
\newblock \emph{npj Robotics}, 4\penalty0 (1):\penalty0 5, 2026.
\newblock \doi{10.1038/s44182-025-00065-w}.
\newblock URL \url{https://doi.org/10.1038/s44182-025-00065-w}.

\bibitem[Kang et~al.(2025)Kang, Jeong, Lee, Cho, and Hwang]{kang2025distilling}
Minki Kang, Jongwon Jeong, Seanie Lee, Jaewoong Cho, and Sung~Ju Hwang.
\newblock Distilling {LLM} agent into small models with retrieval and code
  tools.
\newblock In \emph{Advances in Neural Information Processing Systems},
  volume~38, pp.\  106501--106538. Curran Associates, Inc., 2025.
\newblock \doi{10.52202/085713-3553}.
\newblock URL
  \url{https://proceedings.neurips.cc/paper_files/paper/2025/hash/99263f9bf46874e2b8b7f104b5063864-Abstract-Conference.html}.

\bibitem[Li et~al.(2023)Li, Hammoud, Itani, Khizbullin, and
  Ghanem]{li2023camel}
Guohao Li, Hasan Abed Al~Kader Hammoud, Hani Itani, Dmitrii Khizbullin, and
  Bernard Ghanem.
\newblock {CAMEL}: Communicative agents for ``mind'' exploration of large
  language model society.
\newblock In \emph{Advances in Neural Information Processing Systems}, 2023.
\newblock URL \url{https://arxiv.org/abs/2303.17760}.

\bibitem[Li et~al.(2026{\natexlab{a}})Li, Zhang, Bo, Dai, Li, Wen, and
  Chen]{li2026raps}
Rui Li, Zeyu Zhang, Xiaohe Bo, Quanyu Dai, Chaozhuo Li, Feng Wen, and Xu~Chen.
\newblock Towards adaptive, scalable, and robust coordination of {LLM} agents:
  A dynamic ad-hoc networking perspective.
\newblock \emph{arXiv preprint arXiv:2602.08009}, 2026{\natexlab{a}}.
\newblock URL \url{https://arxiv.org/abs/2602.08009}.

\bibitem[Li et~al.(2025)Li, Liu, Zhao, Li, Li, Jiang, Xu, Zhao, Fan, and
  Liang]{li2025swarmsys}
Ruohao Li, Hongjun Liu, Leyi Zhao, Zisu Li, Jiawei Li, Jiajun Jiang, Linning
  Xu, Chen Zhao, Mingming Fan, and Chen Liang.
\newblock {SwarmSys}: Decentralized swarm-inspired agents for scalable and
  adaptive reasoning.
\newblock \emph{arXiv preprint arXiv:2510.10047}, 2025.
\newblock URL \url{https://arxiv.org/abs/2510.10047}.

\bibitem[Li et~al.(2026{\natexlab{b}})Li, Liu, Wen, Zhang, and
  Pan]{li2026assemblecrew}
Shiyuan Li, Yixin Liu, Qingsong Wen, Chengqi Zhang, and Shirui Pan.
\newblock {Assemble Your Crew}: Automatic multi-agent communication topology
  design via autoregressive graph generation.
\newblock In \emph{Proceedings of the AAAI Conference on Artificial
  Intelligence}, volume~40, pp.\  23142--23150, 2026{\natexlab{b}}.
\newblock \doi{10.1609/aaai.v40i28.39481}.
\newblock URL \url{https://doi.org/10.1609/aaai.v40i28.39481}.

\bibitem[Liu et~al.(2024)Liu, Zhang, Li, Liu, and Yang]{liu2024dynamic}
Zijun Liu, Yanzhe Zhang, Peng Li, Yang Liu, and Diyi Yang.
\newblock A dynamic {LLM}-powered agent network for task-oriented agent
  collaboration.
\newblock In \emph{First Conference on Language Modeling}, 2024.
\newblock URL \url{https://openreview.net/forum?id=XII0Wp1XA9}.

\bibitem[Luo et~al.(2026)Luo, Jin, Yu, Zhang, Kumar, Li, Xu, Chen, and
  Wang]{luo2026agentark}
Yinyi Luo, Yiqiao Jin, Weichen Yu, Mengqi Zhang, Srijan Kumar, Xiaoxiao Li,
  Weijie Xu, Xin Chen, and Jindong Wang.
\newblock {AgentArk}: Distilling multi-agent intelligence into a single {LLM}
  agent.
\newblock \emph{arXiv preprint arXiv:2602.03955}, 2026.
\newblock \doi{10.48550/arXiv.2602.03955}.
\newblock URL \url{https://arxiv.org/abs/2602.03955}.

\bibitem[{OpenAI}(2024)]{openai2024swarm}
{OpenAI}.
\newblock {Swarm}: Educational framework for lightweight multi-agent
  orchestration.
\newblock GitHub repository, 2024.
\newblock URL \url{https://github.com/openai/swarm}.

\bibitem[Park et~al.(2022)Park, Biza, Zhao, Van De~Meent, and
  Walters]{park2022symmetric}
Jung~Yeon Park, Ondrej Biza, Linfeng Zhao, Jan-Willem Van De~Meent, and Robin
  Walters.
\newblock Learning symmetric embeddings for equivariant world models.
\newblock In \emph{Proceedings of the 39th International Conference on Machine
  Learning}, volume 162 of \emph{Proceedings of Machine Learning Research},
  pp.\  17372--17389. PMLR, 2022.
\newblock URL \url{https://proceedings.mlr.press/v162/park22a.html}.

\bibitem[Qian et~al.(2024)Qian, Liu, Liu, Chen, Dang, Li, Yang, Chen, Su, Cong,
  Xu, Li, Liu, and Sun]{qian2024chatdev}
Chen Qian, Wei Liu, Hongzhang Liu, Nuo Chen, Yufan Dang, Jiahao Li, Cheng Yang,
  Weize Chen, Yusheng Su, Xin Cong, Juyuan Xu, Dahai Li, Zhiyuan Liu, and
  Maosong Sun.
\newblock {ChatDev}: Communicative agents for software development.
\newblock In \emph{Proceedings of the 62nd Annual Meeting of the Association
  for Computational Linguistics}, 2024.
\newblock URL \url{https://arxiv.org/abs/2307.07924}.

\bibitem[Ravanbakhsh et~al.(2017)Ravanbakhsh, Schneider, and
  P{\'o}czos]{ravanbakhsh2017equivariance}
Siamak Ravanbakhsh, Jeff Schneider, and Barnab{\'a}s P{\'o}czos.
\newblock Equivariance through parameter-sharing.
\newblock In \emph{Proceedings of the 34th International Conference on Machine
  Learning}, volume~70 of \emph{Proceedings of Machine Learning Research}, pp.\
   2892--2901. PMLR, 2017.
\newblock URL \url{https://proceedings.mlr.press/v70/ravanbakhsh17a.html}.

\bibitem[Ruan et~al.(2025)Ruan, Huang, Wen, and Sun]{ruan2025swarmbench}
Kai Ruan, Mowen Huang, Ji-Rong Wen, and Hao Sun.
\newblock Benchmarking {LLM}s' swarm intelligence.
\newblock \emph{arXiv preprint arXiv:2505.04364}, 2025.
\newblock URL \url{https://arxiv.org/abs/2505.04364}.

\bibitem[Seeley et~al.(2012)Seeley, Visscher, Schlegel, Hogan, Franks, and
  Marshall]{seeley2012stop}
Thomas~D. Seeley, P.~Kirk Visscher, Thomas Schlegel, Patrick~M. Hogan, Nigel~R.
  Franks, and James A.~R. Marshall.
\newblock Stop signals provide cross inhibition in collective decision-making
  by honeybee swarms.
\newblock \emph{Science}, 335\penalty0 (6064):\penalty0 108--111, 2012.
\newblock \doi{10.1126/science.1210361}.
\newblock URL \url{https://doi.org/10.1126/science.1210361}.

\bibitem[Shaw et~al.(2022)Shaw, Wenzel, Walker, and Sartoretti]{shaw2022formic}
Samuel Shaw, Emerson Wenzel, Alexis Walker, and Guillaume Sartoretti.
\newblock {ForMIC}: Foraging via multiagent {RL} with implicit communication.
\newblock \emph{IEEE Robotics and Automation Letters}, 7\penalty0 (2):\penalty0
  4877--4884, 2022.
\newblock \doi{10.1109/LRA.2022.3152688}.
\newblock URL \url{https://doi.org/10.1109/LRA.2022.3152688}.

\bibitem[Singh et~al.(2019)Singh, Jain, and Sukhbaatar]{singh2019ic3net}
Amanpreet Singh, Tushar Jain, and Sainbayar Sukhbaatar.
\newblock Learning when to communicate at scale in multiagent cooperative and
  competitive tasks.
\newblock In \emph{International Conference on Learning Representations}, 2019.
\newblock URL \url{https://openreview.net/forum?id=rye7knCqK7}.

\bibitem[Strobel et~al.(2024)Strobel, Dorigo, and Fritz]{strobel2024llm2swarm}
Volker Strobel, Marco Dorigo, and Mario Fritz.
\newblock {LLM2Swarm}: Robot swarms that responsively reason, plan, and
  collaborate through {LLM}s.
\newblock In \emph{NeurIPS 2024 Workshop on Open-World Agents}, 2024.
\newblock \doi{10.48550/arXiv.2410.11387}.
\newblock URL \url{https://arxiv.org/abs/2410.11387}.

\bibitem[Sukhbaatar et~al.(2016)Sukhbaatar, Szlam, and
  Fergus]{sukhbaatar2016commnet}
Sainbayar Sukhbaatar, Arthur Szlam, and Rob Fergus.
\newblock Learning multiagent communication with backpropagation.
\newblock In \emph{Advances in Neural Information Processing Systems},
  volume~29, 2016.
\newblock URL
  \url{https://proceedings.neurips.cc/paper/2016/hash/55b1927fdafef39c48e5b73b5d61ea60-Abstract.html}.

\bibitem[Theraulaz \& Bonabeau(1999)Theraulaz and
  Bonabeau]{theraulaz1999stigmergy}
Guy Theraulaz and Eric Bonabeau.
\newblock A brief history of stigmergy.
\newblock \emph{Artificial Life}, 5\penalty0 (2):\penalty0 97--116, 1999.
\newblock \doi{10.1162/106454699568700}.
\newblock URL \url{https://doi.org/10.1162/106454699568700}.

\bibitem[Wu et~al.(2023{\natexlab{a}})Wu, Bansal, Zhang, Wu, Li, Zhu, Jiang,
  Zhang, Zhang, Liu, Awadallah, White, Burger, and Wang]{wu2023autogen}
Qingyun Wu, Gagan Bansal, Jieyu Zhang, Yiran Wu, Beibin Li, Erkang Zhu,
  Li~Jiang, Xiaoyun Zhang, Shaokun Zhang, Jiale Liu, Ahmed~Hassan Awadallah,
  Ryen~W. White, Doug Burger, and Chi Wang.
\newblock {AutoGen}: Enabling next-gen {LLM} applications via multi-agent
  conversation.
\newblock \emph{arXiv preprint arXiv:2308.08155}, 2023{\natexlab{a}}.
\newblock URL \url{https://arxiv.org/abs/2308.08155}.

\bibitem[Wu et~al.(2026)Wu, Lu, Yan, Qiu, Hu, Guo, and Yang]{wu2026dmoa}
Xingjian Wu, Junkai Lu, Siyu Yan, Xiangfei Qiu, Jilin Hu, Chenjuan Guo, and Bin
  Yang.
\newblock Differentiable mixture-of-agents incentivizes swarm intelligence of
  large language models.
\newblock \emph{arXiv preprint arXiv:2605.15706}, 2026.
\newblock URL \url{https://arxiv.org/abs/2605.15706}.

\bibitem[Wu et~al.(2023{\natexlab{b}})Wu, Yu, Chen, Hao, and
  Zhuo]{wu2023models}
Zifan Wu, Chao Yu, Chen Chen, Jianye Hao, and Hankz~Hankui Zhuo.
\newblock Models as agents: Optimizing multi-step predictions of interactive
  local models in model-based multi-agent reinforcement learning.
\newblock \emph{arXiv preprint arXiv:2303.17984}, 2023{\natexlab{b}}.
\newblock \doi{10.48550/arXiv.2303.17984}.
\newblock URL \url{https://arxiv.org/abs/2303.17984}.

\bibitem[Yamashita \& Kameda(1996)Yamashita and Kameda]{yamashita1996computing}
Masafumi Yamashita and Tsunehiko Kameda.
\newblock Computing on anonymous networks: Part {I}---characterizing the
  solvable cases.
\newblock \emph{IEEE Transactions on Parallel and Distributed Systems},
  7\penalty0 (1):\penalty0 69--89, 1996.
\newblock \doi{10.1109/71.481599}.
\newblock URL \url{https://doi.org/10.1109/71.481599}.

\bibitem[Yang et~al.(2025{\natexlab{a}})Yang, Li, Yang, Zhang, Hui, Zheng, Yu,
  Gao, Huang, Lv, Zheng, Liu, Zhou, Huang, Hu, Ge, Wei, Lin, Tang, Yang, Tu,
  Zhang, Yang, Yang, Zhou, Zhou, Lin, Dang, Bao, Yang, Yu, Deng, Li, Xue, Li,
  Zhang, Wang, Zhu, Men, Gao, Liu, Luo, Li, Tang, Yin, Ren, Wang, Zhang, Ren,
  Fan, Su, Zhang, Zhang, Wan, Liu, Wang, Cui, Zhang, Zhou, and
  Qiu]{yang2025qwen3}
An~Yang, Anfeng Li, Baosong Yang, Beichen Zhang, Binyuan Hui, Bo~Zheng, Bowen
  Yu, Chang Gao, Chengen Huang, Chenxu Lv, Chujie Zheng, Dayiheng Liu, Fan
  Zhou, Fei Huang, Feng Hu, Hao Ge, Haoran Wei, Huan Lin, Jialong Tang, Jian
  Yang, Jianhong Tu, Jianwei Zhang, Jianxin Yang, Jiaxi Yang, Jing Zhou,
  Jingren Zhou, Junyang Lin, Kai Dang, Keqin Bao, Kexin Yang, Le~Yu, Lianghao
  Deng, Mei Li, Mingfeng Xue, Mingze Li, Pei Zhang, Peng Wang, Qin Zhu, Rui
  Men, Ruize Gao, Shixuan Liu, Shuang Luo, Tianhao Li, Tianyi Tang, Wenbiao
  Yin, Xingzhang Ren, Xinyu Wang, Xinyu Zhang, Xuancheng Ren, Yang Fan, Yang
  Su, Yichang Zhang, Yinger Zhang, Yu~Wan, Yuqiong Liu, Zekun Wang, Zeyu Cui,
  Zhenru Zhang, Zhipeng Zhou, and Zihan Qiu.
\newblock {Qwen3} technical report.
\newblock \emph{arXiv preprint arXiv:2505.09388}, 2025{\natexlab{a}}.
\newblock URL \url{https://arxiv.org/abs/2505.09388}.

\bibitem[Yang et~al.(2025{\natexlab{b}})Yang, Chai, Shao, Song, Qi, Rui, and
  Zhang]{yang2025agentnet}
Yingxuan Yang, Huacan Chai, Shuai Shao, Yuanyi Song, Siyuan Qi, Renting Rui,
  and Weinan Zhang.
\newblock {AgentNet}: Decentralized evolutionary coordination for {LLM}-based
  multi-agent systems.
\newblock In \emph{Advances in Neural Information Processing Systems},
  2025{\natexlab{b}}.
\newblock URL \url{https://arxiv.org/abs/2504.00587}.

\bibitem[Yao et~al.(2023)Yao, Zhao, Yu, Du, Shafran, Narasimhan, and
  Cao]{yao2023react}
Shunyu Yao, Jeffrey Zhao, Dian Yu, Nan Du, Izhak Shafran, Karthik Narasimhan,
  and Yuan Cao.
\newblock {ReAct}: Synergizing reasoning and acting in language models.
\newblock In \emph{International Conference on Learning Representations}, 2023.
\newblock URL \url{https://arxiv.org/abs/2210.03629}.

\bibitem[Zaheer et~al.(2017)Zaheer, Kottur, Ravanbakhsh, P{\'o}czos,
  Salakhutdinov, and Smola]{zaheer2017deep}
Manzil Zaheer, Satwik Kottur, Siamak Ravanbakhsh, Barnab{\'a}s P{\'o}czos,
  Ruslan Salakhutdinov, and Alexander~J. Smola.
\newblock Deep sets.
\newblock In \emph{Advances in Neural Information Processing Systems},
  volume~30, 2017.
\newblock URL
  \url{https://proceedings.neurips.cc/paper/2017/hash/f22e4747da1aa27e363d86d40ff442fe-Abstract.html}.

\bibitem[Zhang et~al.(2025{\natexlab{a}})Zhang, Yue, Li, Yun, Wan, Wang, Cheng,
  Yu, and Chen]{zhang2025agentprune}
Guibin Zhang, Yanwei Yue, Zhixun Li, Sukwon Yun, Guancheng Wan, Kun Wang, Dawei
  Cheng, Jeffrey~Xu Yu, and Tianlong Chen.
\newblock Cut the crap: An economical communication pipeline for {LLM}-based
  multi-agent systems.
\newblock In \emph{International Conference on Learning Representations},
  2025{\natexlab{a}}.
\newblock URL \url{https://arxiv.org/abs/2410.02506}.

\bibitem[Zhang et~al.(2025{\natexlab{b}})Zhang, Yue, Sun, Wan, Yu, Fang, Wang,
  Chen, and Cheng]{zhang2025gdesigner}
Guibin Zhang, Yanwei Yue, Xiangguo Sun, Guancheng Wan, Miao Yu, Junfeng Fang,
  Kun Wang, Tianlong Chen, and Dawei Cheng.
\newblock {G-Designer}: Architecting multi-agent communication topologies via
  graph neural networks.
\newblock In \emph{Proceedings of the 42nd International Conference on Machine
  Learning}, volume 267 of \emph{Proceedings of Machine Learning Research},
  pp.\  76678--76692. PMLR, 2025{\natexlab{b}}.
\newblock URL \url{https://proceedings.mlr.press/v267/zhang25cu.html}.

\bibitem[Zhang et~al.(2025{\natexlab{c}})Zhang, Xiang, Yu, Teng, Chen, Chen,
  Zhuge, Cheng, Hong, Wang, Zheng, Liu, Luo, and Wu]{zhang2025aflow}
Jiayi Zhang, Jinyu Xiang, Zhaoyang Yu, Fengwei Teng, Xiong-Hui Chen, Jiaqi
  Chen, Mingchen Zhuge, Xin Cheng, Sirui Hong, Jinlin Wang, Bingnan Zheng, Bang
  Liu, Yuyu Luo, and Chenglin Wu.
\newblock {AF}low: Automating agentic workflow generation.
\newblock In \emph{The Thirteenth International Conference on Learning
  Representations}, 2025{\natexlab{c}}.
\newblock URL \url{https://openreview.net/forum?id=z5uVAKwmjf}.

\bibitem[Zhang et~al.(2026)Zhang, Liu, Shan, Huang, Yang, Zhu, Cheng, Liu,
  Zeng, Zhang, and Jiang]{zhang2026silobench}
Yuzhe Zhang, Feiran Liu, Yi~Shan, Xinyi Huang, Xin Yang, Yueqi Zhu, Xuxin
  Cheng, Cao Liu, Ke~Zeng, Terry~Jingchen Zhang, and Wenyuan Jiang.
\newblock {Silo-Bench}: A scalable environment for evaluating distributed
  coordination in multi-agent {LLM} systems.
\newblock In \emph{Proceedings of the 64th Annual Meeting of the Association
  for Computational Linguistics (Volume 1: Long Papers)}, pp.\  29379--29398,
  San Diego, California, USA, July 2026. Association for Computational
  Linguistics.
\newblock \doi{10.18653/v1/2026.acl-long.1354}.
\newblock URL \url{https://aclanthology.org/2026.acl-long.1354/}.

\bibitem[Zhou et~al.(2022)Zhou, He, Ma, Berg-Kirkpatrick, and
  Neubig]{zhou2022prompt}
Chunting Zhou, Junxian He, Xuezhe Ma, Taylor Berg-Kirkpatrick, and Graham
  Neubig.
\newblock Prompt consistency for zero-shot task generalization.
\newblock In \emph{Findings of the Association for Computational Linguistics:
  EMNLP 2022}, pp.\  2613--2626. Association for Computational Linguistics,
  2022.
\newblock \doi{10.18653/v1/2022.findings-emnlp.192}.
\newblock URL \url{https://aclanthology.org/2022.findings-emnlp.192/}.

\bibitem[Zhou et~al.(2026)Zhou, Gan, and Cherian]{zhou2026llawco}
Qinhong Zhou, Chuang Gan, and Anoop Cherian.
\newblock {LLawCo}: Learning laws of cooperation for modeling embodied
  multi-agent behavior.
\newblock In \emph{Proceedings of the 43rd International Conference on Machine
  Learning}, 2026.
\newblock URL \url{https://arxiv.org/abs/2606.28182}.

\bibitem[Zhuge et~al.(2024)Zhuge, Wang, Kirsch, Faccio, Khizbullin, and
  Schmidhuber]{zhuge2024language}
Mingchen Zhuge, Wenyi Wang, Louis Kirsch, Francesco Faccio, Dmitrii Khizbullin,
  and J{\"u}rgen Schmidhuber.
\newblock Language agents as optimizable graphs.
\newblock In \emph{Proceedings of the 41st International Conference on Machine
  Learning}, 2024.
\newblock URL \url{https://arxiv.org/abs/2402.16823}.

\end{thebibliography}
